\newtheorem{lemma}{Lemma}
\newcommand{\pflem}[1]{\begin{proof}[Proof of Lemma \ref{#1}] \label{#1p}}
\newcommand{\pfthm}[1]{\begin{proof}[Proof of Theorem \ref{#1}] \label{#1p}}
\newcommand{\epf}{\end{proof}}
\newcommand{\mean}{0}
\newcommand{\M}{\widehat{\mathcal{M}}} 
\newcommand{\g}{\tilde{g}_}
\newcommand{\Q}{\widehat{Q}}
\newcommand{\C}{\widehat{C}}
\newcommand{\qu}{Q}
\newcommand{\cu}{C}
\newcommand{\git}{\url{https://anonymous.4open.science/r/bfd8ce59-4cb4-4554-8852-2e329d699f44/}}
\newcommand{\git}{\url{https://github.com/yanivbl6/quantized_meanfield}}
\g@addto@macro\normalsize{%
  \setlength\abovedisplayskip{4pt}
  \setlength\belowdisplayskip{4pt}
  \setlength\abovedisplayshortskip{4pt}
  \setlength\belowdisplayshortskip{4pt}
}
\title{A Mean Field Theory of Quantized Deep Networks:\\ The Quantization-Depth Trade-Off  }
\author{%
  Yaniv Blumenfeld \\
  Technion, Israel \\ 
  \texttt{yanivblm6@gmail.com} \\
  \And
  Dar Gilboa \\
  Columbia University \\
  \texttt{dargilboa@gmail.com} \\
  \And
  Daniel Soudry \\
  Technion, Israel \\ 
  \texttt{daniel.soudry@gmail.com} \\
}
\begin{document}

\maketitle
\begin{abstract}
Reducing the precision of weights and activation functions in neural network training, with minimal impact on performance, is essential for the deployment of these models in resource-constrained environments. We apply mean field techniques to networks with quantized activations in order to evaluate the degree to which quantization degrades signal propagation at initialization. We derive initialization schemes which maximize signal propagation in such networks, and suggest why this is helpful for generalization. Building on these results, we obtain a closed form implicit equation for $L_{\max}$, the maximal trainable depth (and hence model capacity), given $N$, the number of quantization levels in the activation function. Solving this equation numerically, we obtain asymptotically: $L_{\max}\propto N^{1.82}$.


\end{abstract}

\section{Introduction}

As neural networks are increasingly trained and deployed on-device in settings with memory and space constraints \cite{howard2017mobilenets, chen2015compressing}, a better understanding of the trade-offs involved in the choice of architecture and training procedure are gaining in importance. One widely used method to conserve resources is the quantization (discretization) of the weights and/or activation functions during training 
\cite{courbariaux2016binarized,rastegari2016xnor,hubara2017quantized,banner2018scalable}. When choosing a quantized architecture, it is natural to expect depth to increase the flexibility of the model class, yet choosing a deeper architecture can make the training process more difficult. Additionally, due to resource constraints, when using a quantized activation function whose image is a finite set of size $N$, one would like to choose the smallest possible $N$ such that the model is trainable and performance is minimally affected. There is a trade-off here between the capacity of the network which depends on its depth and the ability to train it efficiently on the one hand --- and the parsimony of the activation function used on the other. 
 
We quantify this trade-off between capacity/trainability and the degree of quantization by an analysis of wide neural networks at initialization. This is achieved by studying \textit{signal propagation} in deep quantized networks, using techniques introduced in \cite{poole2016exponential, schoenholz2016deep} that have been applied to numerous architectures. Signal propagation will refer to the propagation of correlations between inputs into the hidden states of a deep network. Additionally, we consider the dynamics of training in this regime and the effect of signal propagation on the change in generalization error during training.

In this paper, 
\begin{itemize}
    \item We suggest (section \ref{sec:NTK generalization}) that if the signal propagation conditions do not hold, generalization error in early stages of training should not decrease at a typical test point, potentially explaining the empirically observed benefit of signal propagation to generalization. This is done using an analysis of learning dynamics in wide neural networks, and corroborated numerically.
    \item We obtain (section \ref{sec:General Quatnization}) initialization schemes that maximize signal propagation in certain classes of feed-forward networks with quantized activations. 
    \item Combining these results, we obtain an expression for the trade-off between the quantization level and the maximal trainable depth of the network (eq. \ref{eq:xi_N}), in terms of the depth scale of signal propagation. We experimentally corroborate these predictions (Figure \ref{fig:MNIST_experiment}).
\end{itemize}

\section{Related work}

Several works have shown that training a 16 bit numerical precision is sufficient for most machine learning applications \cite{gupta2015deep, das2018mixed}, with little to no cost to model accuracy. Since, many more aggressive quantization schemes were suggested \cite{hubara2017quantized,lin2017towards,miyashita2016convolutional, mishra2017wrpn}, ranging from the extreme usage of 1-bit at  representations and math operations \cite{rastegari2016xnor,courbariaux2016binarized}, to a more conservative usage of 8-bits \cite{banner2018scalable,wang2018training}, all in effort to minimize the computational cost with minimal loss to model accuracy. Theoretically, it is well known that a small amount of imprecision can significantly degrade the representational capacity of a model. For example, an infinite precision recurrent neural network can simulate a universal Turing machine \cite{Siegelmann1991}. However, any numerical imprecision reduces the representational power of these models to that of finite automata \cite{Maass1998}. In this paper, we focus on the effects of quantization on training. So far, these effects are typically quantified empirically, though some theoretical work has been done in this direction (e.g. \cite{Li2017b,Anderson2017,zhou2018adaptive,Yin2019}).
   
Signal propagation in wide neural networks has been the subject of recent work for fully-connected \cite{poole2016exponential, schoenholz2016deep, pennington2017resurrecting, yang2017mean}, convolutional \cite{xiao2018dynamical} and recurrent architectures \cite{chen2018dynamical, gilboa2019dynamical}. These works study the evolution of covariances between the hidden states of the network and the stability of the gradients. These depend only on the leading moments of the weight distributions and the nonlinearities at the infinite width limit, greatly simplifying analysis. They identify critical initialization schemes that allow training of very deep networks (or recurrent networks on long time sequence tasks) without performing costly hyperparameter searches. While the analytical results in these works assume that the layer widths are taken to infinity sequentially (which we will refer to this as the \textit{sequential} limit), the predictions prove predictive when applied to networks with layers of equal width once the width is typically of the order of hundreds of neurons. For fully connected networks it was also shown using an application of the Central Limit Theorem for exchangeable random variables that the asymptotic behavior at infinite width is independent of the order of limits \cite{matthews2018gaussian}.
\section{Preliminaries: the mean field approach}
\subsection{Signal propagation in feed-forward networks}
We now review the analysis of signal propagation in feed-forward networks performed in \cite{poole2016exponential, schoenholz2016deep}. The network function $f:\mathbb{R}^{n_{0}}\rightarrow\mathbb{R}^{n_{L+1}}$ is given by \begin{equation} \label{eq:net}
\begin{array}{c}
\phi(\alpha^{(0)}(x))=x\\
\alpha^{(l)}(x)=W^{(l)}\phi(\alpha^{(l-1)}(x))+b^{(l)}~~\text{   }l=1,...,L\\
f(x)=\alpha^{(L+1)}(x)
\end{array}
\end{equation}
for input $x \in \mathbb{R}^{n_0}$, weight matrices $W^{(l)} \in
\mathbb{R}^{n_{l} \times n_{l-1}}$ and nonlinearity $\phi:\mathbb{R} \rightarrow \mathbb{R}$. The weights are initialized using $W_{ij}^{(l)}\sim\mathcal{N}(0,\frac{\sigma_{w}^{2}}{n^{(l-1)}}),b_{i}^{(l)}\sim\mathcal{N}(0,\sigma_{b}^{2})$ so that the variance of the neurons at every layer is independent of the layer widths \footnote{In principle the following results should hold under more generally mild moment conditions alone.}.

According to Theorem 4 in \cite{matthews2018gaussian}, under a mild condition on the activation function that is satisfied by any saturating nonlinearity, the \textit{pre-activations} $\alpha^{(l)}(x)$ converge in distribution to a multivariate Gaussian as the layer widths $n_1,...,n_L$ are taken to infinity in any order (with $n_0, n_{L+1}$ finite) \footnote{When taking the sequential limit, asymptotic normality is a consequence of repeated application of the Central Limit Theorem \cite{poole2016exponential}}. In the physics literature the approximation obtained by taking this limit is known as the \textit{mean field approximation}.

The covariance of this Gaussian at a given layer is then obtained by the recursive formula
\begin{equation} \label{eq:cov_udpate}
\begin{array}{c}
\mathbb{E}\alpha_{i}^{(l)}(x)\alpha_{j}^{(l)}(x')=\mathbb{E}\underset{k,k'=1}{\overset{n_{l-1}}{\sum}}W_{ik}^{(l)}W_{jk'}^{(l)}\phi(\alpha_{k}^{(l-1)}(x))\phi(\alpha_{k'}^{(l-1)}(x'))+b_{i}^{(l)}b_{j}^{(l)}\\
=\left[\sigma_{w}^{2}\mathbb{E}\phi(\alpha_{1}^{(l-1)}(x))\phi(\alpha_{1}^{(l-1)}(x'))+\sigma_{b}^{2}\right]\delta_{ij}.
\end{array}
\end{equation}
Omitting the dependence on the inputs $x,x'$ in the RHS below, we define
\begin{equation} \label{eq:QC_def}
\left(\begin{array}{cc}
\mathbb{E}\alpha_{i}^{(l)}(x)\alpha_{i}^{(l)}(x) & \mathbb{E}\alpha_{i}^{(l)}(x)\alpha_{i}^{(l)}(x')\\
\mathbb{E}\alpha_{i}^{(l)}(x)\alpha_{i}^{(l)}(x') & \mathbb{E}\alpha_{i}^{(l)}(x')\alpha_{i}^{(l)}(x')
\end{array}\right)=Q^{(l)}\left(\begin{array}{cc}
1 & C^{(l)}\\
C^{(l)} & 1
\end{array}\right)=\Sigma(Q^{(l)},C^{(l)}).
\end{equation}
Combining eqs. \ref{eq:cov_udpate} and \ref{eq:QC_def} we obtain the following two-dimensional dynamical system:
\begin{equation} \label{eq:QCsys}
\left(\begin{array}{c}
Q^{(l)}\\
C^{(l)}
\end{array}\right)=\left(\begin{array}{c}
\sigma_{w}^{2}\underset{u\sim\mathcal{N}(0,Q^{(l-1)})}{\mathbb{E}}\phi^{2}(u)+\sigma_{b}^{2}\\
\frac{1}{Q^{(l-1)}}\left[\sigma_{w}^{2}\underset{(u_{1},u_{2})\sim\mathcal{N}(\mean,\Sigma(Q^{(l-1)},C^{(l-1)}))}{\mathbb{E}}\phi(u_{1})\phi(u_{2})+\sigma_{b}^{2}\right]
\end{array}\right)\equiv\mathcal{M}\left[\left(\begin{array}{c}
Q^{(l-1)}\\
C^{(l-1)}
\end{array}\right)\right]\,,
\end{equation}
where $\mathcal{M}$ depends on the nonlinearity and the initialization hyperparameters $\sigma^2_w, \sigma^2_b$ and the initial conditions $(Q^{(0)},C^{(0)})^{T}$ depend also on $x,x'$. See Figure \ref{fig:manifold_16_states} for a visualization of the covariance propagation.

Once the above dynamical system converges to a fixed point $(Q^\ast,C^\ast)$ or at least approaches it to within numerical precision, information about the initial conditions is lost. As argued in \cite{schoenholz2016deep}, this is detrimental to learning as inputs in different classes can no longer be distinguished in terms of the network output (assuming the fixed point $C^\ast$ is independent of $C^{(0)}$, see Lemma \ref{lem:fixedpoints}). The convergence rate to the fixed point can be obtained by linearizing the dynamics around it. This can be done for the two dimensional system as a whole, yet in \cite{schoenholz2016deep} it was also shown that, for any monotonically increasing nonlinearity, convergence of this linearized dynamical system in the direction $C^{(l)}$ cannot be faster than convergence in the $Q^{(l)}$ direction, and thus studying convergence can be reduced to the simpler one dimensional system $C^{(l)}=\mathcal{M}_{Q^{\ast}}(C^{(l-1)})$ that is obtained by assuming $Q^{(l)}$ has already converged, as assumption we review in appendix \ref{sup:CQcomparison}. The convergence rate is given by the following known results of \cite{schoenholz2016deep, gilboa2019dynamical} which we recapitulate for completeness:

\begin{lemma} \cite{schoenholz2016deep, gilboa2019dynamical} \label{lem:fixedpoints}
Defining $\Sigma(Q,C)=Q\left(\begin{array}{cc}
1 & C\\
C & 1
\end{array}\right)$ for $Q \geq 0, C \in [-1,1]$ the dynamical system 
\begin{equation} \label{eq:M}
\mathcal{M}_{Q^{\ast}}(C)=\frac{1}{Q^{\ast}}\left[\sigma_{w}^{2}\underset{(u_{1},u_{2})\sim\mathcal{N}(\mean,\Sigma(Q^{\ast},C))}{\mathbb{E}}\phi(u_{1})\phi(u_{2})+\sigma_{b}^{2}\right]
\end{equation}
when linearized around a fixed point $C^\ast$, converges at a rate 
\begin{equation}\label{eq:chi_basic}
\chi=\left.\frac{\partial\mathcal{M}_{Q^{\ast}}(C)}{\partial C}\right|_{C^{\ast}}=\sigma_{w}^{2}\underset{(u_{a},u_{b})\sim\mathcal{N}(\mean,\Sigma(Q^{\ast},C^{\ast}))}{\mathbb{E}}\phi'(u_{1})\phi'(u_{2}).
\end{equation}
Additionally, $\mathcal{M}_{Q^{\ast}}(C)$ has at most one stable fixed point in the range $[0,1]$ for any choice of $\phi$ such that $\phi$ is odd or $\phi''$ is non-negative.
\end{lemma}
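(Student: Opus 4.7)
The backbone of the argument is Gaussian integration by parts, often called Price's theorem: if $(u_1,u_2)\sim\mathcal{N}(0,\Sigma)$ then $\partial_{\Sigma_{12}}\mathbb{E}[f(u_1)g(u_2)]=\mathbb{E}[f'(u_1)g'(u_2)]$, which follows either from differentiating the Gaussian density under the integral sign or from two applications of Stein's lemma. Here $\Sigma=\Sigma(Q^{\ast},C)$ has off-diagonal entry $\Sigma_{12}=Q^{\ast}C$, so by the chain rule $\partial_{C}=Q^{\ast}\partial_{\Sigma_{12}}$. The interchange of differentiation and expectation is legitimate under mild growth on $\phi$ (for instance bounded, as in the saturating/quantized case we actually care about in the paper).

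\textbf{Computing $\chi$.} Applying Price's identity to $\mathcal{M}_{Q^{\ast}}(C)=\tfrac{1}{Q^{\ast}}\bigl[\sigma_w^{2}\mathbb{E}\phi(u_1)\phi(u_2)+\sigma_b^{2}\bigr]$ gives
\[
\chi(C)\;=\;\partial_C\mathcal{M}_{Q^{\ast}}(C)\;=\;\frac{\sigma_w^{2}}{Q^{\ast}}\cdot Q^{\ast}\mathbb{E}\phi'(u_1)\phi'(u_2)\;=\;\sigma_w^{2}\mathbb{E}\phi'(u_1)\phi'(u_2),
\]
which at $C=C^{\ast}$ is precisely the claimed rate. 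The rest of the first claim is standard linear stability of one-dimensional maps: iterating $\mathcal{M}_{Q^{\ast}}$ near $C^{\ast}$ contracts errors by a factor $\chi$ per step.

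\textbf{Uniqueness of a stable fixed point.} I would apply Price's theorem a second time to the formula for $\chi(C)$ itself, obtaining
\[
\mathcal{M}_{Q^{\ast}}''(C)\;=\;\sigma_w^{2}Q^{\ast}\,\mathbb{E}\phi''(u_1)\phi''(u_2).
\]
Under either hypothesis I claim this is non-negative on $[0,1]$, which makes $\mathcal{M}_{Q^{\ast}}$ convex there. If $\phi''\geq 0$ the integrand is non-negative pointwise, so the conclusion is immediate. If instead $\phi$ is odd, $\phi''$ is also odd, and pointwise non-negativity fails; the trick is to use the Hermite/Mehler expansion of $\phi''$ (in the variable $u/\sqrt{Q^{\ast}}$) to write $\mathbb{E}\phi''(u_1)\phi''(u_2)=\sum_{k\ \text{odd}}c_k^{2}\,k!\,C^{k}$, whose terms are all non-negative when $C\in[0,1]$. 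I expect this odd case to be the main technical hurdle since it is the only place where one must genuinely use orthogonal-polynomial structure rather than a sign argument.

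\textbf{Conclusion via convexity.} Assume for contradiction there exist two stable fixed points $C_1^{\ast}<C_2^{\ast}$ in $[0,1]$, i.e.\ $\mathcal{M}_{Q^{\ast}}(C_i^{\ast})=C_i^{\ast}$ and $\chi(C_i^{\ast})<1$. Applying the mean value theorem to $g(C):=\mathcal{M}_{Q^{\ast}}(C)-C$, which vanishes at both $C_i^{\ast}$, yields some $C'\in(C_1^{\ast},C_2^{\ast})$ with $\chi(C')=\mathcal{M}_{Q^{\ast}}'(C')=1$. But $\mathcal{M}_{Q^{\ast}}''\geq 0$ on $[0,1]$ means $\chi$ is non-decreasing there, forcing $\chi(C_2^{\ast})\geq\chi(C')=1$, contradicting stability. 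Hence at most one stable fixed point exists in $[0,1]$, which completes the lemma.
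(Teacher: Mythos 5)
Your proposal is correct and follows essentially the same route as the paper's proof: Gaussian integration by parts (your ``Price's theorem'' is the paper's Cholesky decomposition plus $\int\mathcal{D}z\,g(z)z=\int\mathcal{D}z\,g'(z)$) yields both $\chi=\sigma_w^2\mathbb{E}\phi'(u_1)\phi'(u_2)$ and $\mathcal{M}_{Q^\ast}''=\sigma_w^2Q^\ast\mathbb{E}\phi''(u_1)\phi''(u_2)$, after which convexity on $[0,1]$ rules out two stable fixed points. The only differences are cosmetic: you prove the non-negativity of $\mathbb{E}\phi''(u_1)\phi''(u_2)$ for $C\in[0,1]$ via the Mehler/Hermite expansion where the paper cites Lemma 2 of \cite{gilboa2019dynamical} (which is proved the same way), and your mean-value-theorem contradiction makes the final ``at most one stable fixed point'' step slightly more explicit than the paper's ``a convex function meets a line at most twice'' argument.
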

Proof: See Appendix \ref{pf:fixedpoints}.

We subsequently drop the subscript in $\mathcal{M}_{Q^{\ast}}(C)$ to lighten notation. The corresponding time scale of convergence in the linearized regime is 
\begin{equation} \label{eq:xi}
\xi=-\frac{1}{\log\chi}.
\end{equation}
$\chi$ depends on the initialization hyperparameters and choice of nonlinearity, and it follows from the considerations above that signal propagation from the inputs to the outputs of a deep network would be facilitated by a choice of $\chi$ such that $\xi$ diverges, which occurs as $\chi$ approaches $1$ from below. Indeed, as observed empirically across multiple architectures and tasks \cite{xiao2018dynamical, chen2018dynamical, gilboa2019dynamical, yang2017mean}, up to a constant factor $\xi$ typically gives the maximal depth up to which a network is trainable. These calculations motivate initialization schemes that satisfy:
\[
\chi=1
\]
in order to train very deep networks. We will show shortly that this condition is unattainable for a large class of quantized activation functions. \footnote{It will at times be convenient to consider the dynamics of the correlations of the \textit{post-activations} $\widehat{\alpha}^{(l)}=\phi(\alpha^{(l)})$ which we denote by $\widehat{\mathcal{M}}(\widehat{C})$. The rates of convergence are identical in both cases, as shown in Appendix \ref{app:hidden_state}.} 

The analysis of forward signal propagation in the sense described above in networks with continuous activations is related to the stability of the gradients as well \cite{schoenholz2016deep}. The connection is obtained by relating the rate of convergence $\chi$ to the first moment of the state-to-state Jacobian
\begin{equation}\label{eq:JJ}
J=\underset{l\rightarrow\infty}{\lim}\frac{\partial\hat{\alpha}^{(l)}}{\partial\hat{\alpha}^{(l-1)}}.
\end{equation}
Taking all the layer widths to be equal to $n$, the first moment is given by 
\begin{equation}\label{eq:Mjj}
m_{JJ^{T}}=\frac{1}{n}\mathbb{E}\text{tr}\left(JJ^{T}\right).
\end{equation}
Since high powers of this matrix will appear in the gradient, controlling its spectrum can prevent the gradient from exploding or vanishing. In the case of quantized activations, however, the relationship between the Jacobian and the convergence rate $\chi$ no longer holds since the gradients vanish almost surely and modified weight update schemes such as the Straight-Through Estimator (STE) \cite{hinton2012neural,hubara2017quantized} are used instead. However, one can define a modified Jacobian $J_{\text{STE}}$ that takes the modified update scheme into account and control its moments instead. 

\begin{figure}[h]
    \centering
    \begin{subfigure}[]{1.0\textwidth}
        \centering
        \includegraphics[height=1.8in]{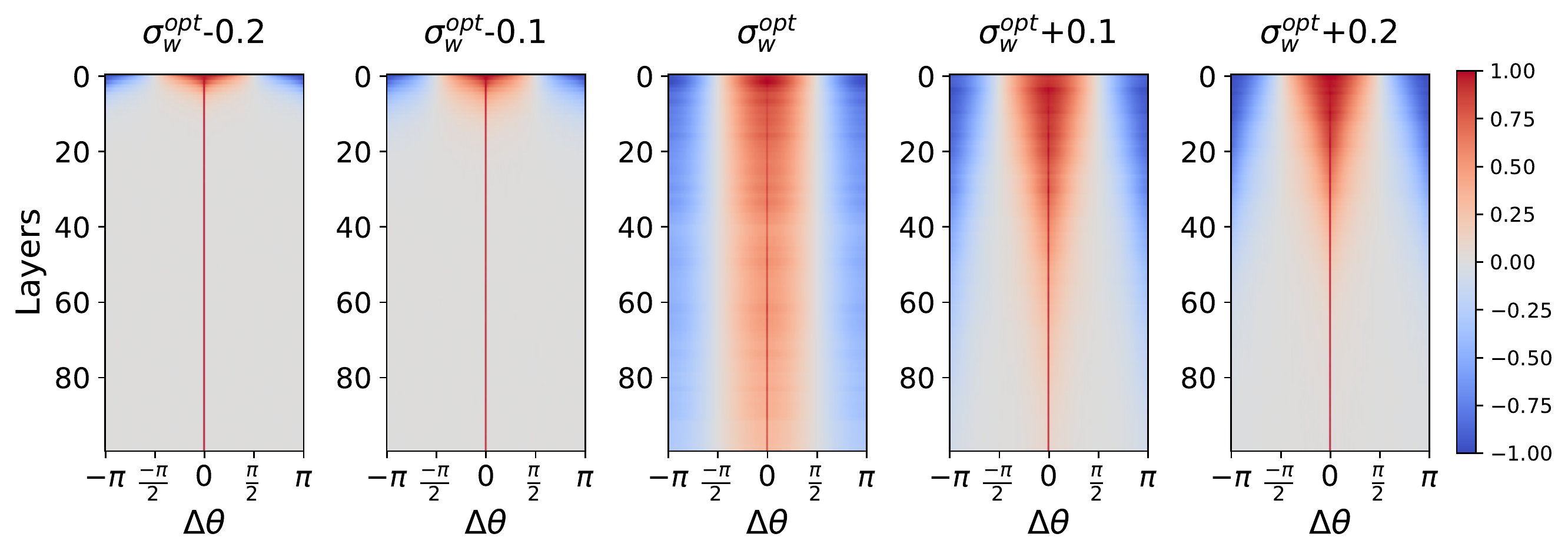}
    \end{subfigure}%
    \caption{Propagation of empirical covariance between hidden states at different layers, in quantized feed-forward networks with $N=16$, varying the standard deviation of the weights $\sigma_w$. 
    $\Delta \theta$ is the angle between two normalized inputs. Signal propagation is maximized when $\sigma_w=\sigma_w^{\text{opt}}$, and degrades as $\sigma_w$ deviates from it.}
    \label{fig:manifold_16_states}
    \vspace{-.25in}
\end{figure}

\subsection{Signal propagation may improve generalization\label{sec:NTK generalization}}
The argument that a network will be untrainable if signals cannot propagate from the inputs to the loss, corresponding to the rapid convergence of the dynamical system eq. \ref{eq:QCsys}, has empirical support across numerous architectures. A choice of initialization hyperparameters that facilitates signal propagation has also been shown to lead to slight improvements in generalization error, yet understanding of this was beyond the scope of the existing analysis. Indeed, there is also empirical evidence that when training very deep networks it is only the generalization error that is impacted adversely but the training error is not \cite{xiao2018dynamical}. Additionally, one may wonder whether a deep network could still be trainable despite a lack of signal propagation. On the one hand, rapid convergence of the correlation map between the pre-activations is equivalent to the distance between $f(x),f(x')$ converging to a value that is independent of the distance between $x,x'$. On the other, since deep networks can fit random inputs and labels \cite{zhang2016understanding} this convergence may not impede training. . 

To understand the effect of signal propagation on generalization, we consider the dynamics of learning for wide, deep neural networks in the setting studied in \cite{Jacot2018-dv,lee2019wide}. We note that this setting introduces an unconventional scaling of the weights. Despite this, it should be a good approximation for the early stages of learning in networks with standard initialization, as long as the weights do not change too much from their initial values. In this regime, the function implemented by the network evolves linearly in time, with the dynamics determined by the Neural Tangent Kernel (NTK). We argue that rapid convergence of eq.  \ref{eq:QCsys} in deep networks implies that the error at a typical test point should not decrease during training since the resulting form of the NTK will be independent of the label of the test point. Conversely, this effect will be mitigated with a choice of hyperparameters that maximizes signal propagation, which could explain the beneficial effect on generalization error that is observed empirically. We provide details and empirical evidence in support of this claim for networks with both quantized and continuous activation functions in Appendix \ref{app:NTK_STE}.

\section{Mean field theory of signal propagation with quantized activations}\label{main:genq}

In this section, we will explore the effects of using a quantized activation function on signal propagation in feed-forward networks. We will start by developing the mean field equations for a sign activations and then consider more general activation function, and establish a theory that predicts the relationship between the number of quantization states, the initialization parameters, and the feed-forward network depth.

\subsection{Warm-up: sign activations}
We begin by considering signal propagation in the network in eq. \ref{eq:net} with $\phi(x)=\text{sign}(x)$. Substituting  $\phi(x)=\text{sign}(x)$, $\phi'(u)=2\delta(u)$  in eqs. \ref{eq:QCsys} and \ref{eq:chi_basic} gives
\begin{equation}\label{eq:warmup_start}
\qu^{\ast}=\sigma_{w}^{2}+\sigma_{b}^{2},\,\,\,\chi=4\sigma_{w}^{2}\underset{(u_{1},u_{2})\sim\mathcal{N}(0,{\Sigma(\qu^{\ast},\cu^{\ast})})}{\mathbb{E}}\delta(u_{1})\delta(u_{2}).
\end{equation}
As shown in Appendix \ref{sup:sign_mean_field_theory}, we obtain
\begin{equation} \label{eq:chi_sign}
\chi=\frac{2\sigma_{w}^{2}}{\pi\left(\sigma_{w}^{2}+\sigma_{b}^{2}\right)\sqrt{1-(\cu^{\ast})^{2}}}
\end{equation}
\begin{equation} \label{eq:Mc_sign1}
\mathcal{M}(\cu)=\frac{\frac{2\sigma_{w}^2}{\pi}\arcsin\left(\cu\right)+\sigma_{b}^2}{\sigma_{w}^2+\sigma_{b}^2}.
\end{equation}
The closed form expressions \ref{eq:chi_sign} and \ref{eq:Mc_sign1}, which are not available for more complex architectures, expose the main challenge to signal propagation. It is clear from these expressions that the derivative of $\mathcal{M}(\cu)$ diverges at $1$, and since $\mathcal{M}(\cu)$ is differentiable and convex, it can have no stable fixed point in $[0,1]$ that satisfies the signal propagation condition $\chi=1$. In fact, as we show in Appendix \ref{pf:sign_sigmaB} that the maximal value of $\chi$ for this architecture is achievable when $\sigma_b=0$, and is bounded from above by $\chi_{\max}=\frac{2}{\pi}$ for all choices of the initialization hyperparameters. The corresponding depth scale is bounded by $\xi_{\max} < 3$.

Additionally, one may wonder if using stochastic binary activations \cite{hubara2017quantized} might improve signal propagation. In Appendix \ref{sup:base_stochastic_rounding} we show this is not the case: we consider a stochastic rounding quantization scheme and show that stochastic rounding can only further decrease the signal propagation depth scale.

\subsection{General quantized activations \label{sec:General Quatnization}}
We consider a general activation function $\phi_N:\mathbb{R}\to S$, where $S$ is a finite set of real numbers of size $|S|=N$. To obtain a flexible class of non-decreasing functions of this form, we define
\begin{equation}\label{eq:genq_def}
\phi_{N}(x)=A+\sum_{i=1}^{N-1}H\left(x-g_{i}\right)h_{i}\, ,
\end{equation}
where $A\in \mathbb{R}, \forall i\in \{1,2,...,N-1\}, g_i\in \mathbb{R},h_i \in \mathbb{R}_{>0}$, and $H:\mathbb{R} \to \mathbb{R}$ is the Heaviside function. This activation function can be thought of as a "stairs" function, going from the minimum state of $A$ to the maximum state $A+\sum_{i=1}^{N-1}h_{i}$, over $N-1$ stairs, with stair $i$ located at an offset $g_i$ with a height $h_i$. We will assume that the offsets $g_i$ are ordered, for simplicity.
The development of the mean field equations for this activation function is located in appendix \ref{sup:genq_mean_field}, where we find that:
\begin{equation}\label{eq:genq_q}
\Q^{(l)}=\sum_{i=1}^{N-1}\sum_{j=1}^{N-1}h_{i}h_{j}\Phi\left(-\frac{\max(g_{i},g_{j})}{\sqrt{\qu^{(l)}}}\right)\Phi\left(\frac{\min(g_{i},g_{j})}{\sqrt{\qu^{(l)}}}\right),\,\,\qu^{(l+1)}=\sigma_{w}^{2}\Q^{(l)}+\sigma_{b}^{2}
\end{equation}
\begin{equation}\label{eq:genq_chi}
\chi=\frac{\sigma_{w}^{2}}{2\pi \qu^{\ast}\sqrt{1-\left(\cu^{\ast}\right)^{2}}}\sum_{i=1}^{N-1}\sum_{j=1}^{N-1}h_{i}h_{j}\exp\left[-\frac{g_{i}^{2}-2\cu^{\ast}g_{i}g_{j}+g_{j}^{2}}{2\qu^{\ast}\left(1-\left(\cu^{\ast}\right)^{2}\right)}\right],
\end{equation}
where $\Phi$ is the gaussian CDF and $\Q^{(l)}$ is the hidden state covariance, as explained in appendix \ref{app:hidden_state}. This expression diverges as $\cu^{\ast} \rightarrow 1$ since all the summands are non-negative and the diagonal ones simplify to $\frac{\sigma_{w}^{2}h_{i}^{2}}{2\pi \qu^{\ast}\sqrt{1-\left(\cu^{\ast}\right)^{2}}}\exp\left[-\frac{g_{i}^{2}}{2\qu^{\ast}\left(1+\cu^{\ast}\right)}\right]$. Since $\mathcal{M}(\cu)$ is convex (see Lemma \ref{lem:fixedpoints}), we find that as in the case of sign activation, $\chi=1$ is not achievable for any choice of a quantized activation function.


  

To optimize the signal propagation for any given number of states, we would like to find the parameters that will bring the fixed point slope $\chi$ as close as possible to 1. For simplicity, we will henceforth use the initialization $\sigma_b=0$, which is quite common \cite{glorot2010understanding}. Empirical evidence in appendix \ref{sup:genq_mapping_approx} suggest that using $\sigma_b>0$ is sub-optimal, which is not very surprising, given our similar (exact) results for sign activation. For $\sigma_b=0$, $\cu=0$ becomes a fixed point. We eliminate eq. \ref{eq:genq_chi} direct dependency on $\qu^{\ast}$, by defining \textit{normalized offsets} $\tilde{g}\equiv\frac{g}{\sqrt{\qu^{\ast}}}$. By moving to normalized offsets, substituting $\cu^{\ast}=0$ and the remaining $\qu^{\ast}$ by eq. \ref{eq:genq_q}, our expression for the fixed point slope becomes:
\begin{equation}\label{eq:genq_chi_b0}
\chi=\frac{\sum_{i=1}^{N-1}\sum_{j=1}^{N-1}\frac{1}{2\pi}\exp\left[-\frac{1}{2}\left(\g{i}^{2}+\g{j}^{2}\right)\right]h_{i}h_{j}}{\sum_{i=1}^{N-1}\sum_{j=1}^{N-1}\Phi\left(-\max(\g{i},\g{j})\right)\Phi\left(\min(\g{i},\g{j})\right)h_{i}h_{j}}
\end{equation}
Eq. \ref{eq:genq_chi_b0} provides us with way to determine the quality of any quantized activation function in regard to signal propagation, without concerning ourselves with the initialization parameters, that will only have a linear effect on the offsets. Since the normalized offsets are sufficient to determine $\Q,\qu$, using eq. \ref{eq:genq_chi}, moving from normalized offsets to actual offsets becomes trivial. \\
To measure the relation between the number of states and depth scale, we will use eq.  \ref{eq:genq_chi_b0} over a limited set of constant-spaced activations, where we choose $A<0, \forall i\in\{1,..,N-1\},h_{i}=\mathrm{const.}$ and the offsets are evenly spaced and centered around zero, with $D$ defined as the distance between two sequential offsets so that $g_i = D \left(i-\frac{N}{2}\right)$, and $\tilde{D}$ defined as $\tilde{D}=\frac{D}{\sqrt{\qu^{\ast}}}$. We view this configuration as the most obvious selection of activation function, where the 'stairs' are evenly spaced between the minimal and the maximal state. Using eq. \ref{eq:genq_chi_b0} on an activation in this set, we get:
\begin{equation}\label{eq:genq_chi_b0_constant_spacing}
\chi=\frac{\sum_{i\in K}\sum_{j\in K}\frac{1}{2\pi}\exp\left[-\frac{1}{2}\left(i^{2}+j^{2}\right)\tilde{D}^{2}\right]}{\sum_{i\in K}\sum_{j\in K}\Phi\left(-\max\left(i,j\right)\tilde{D}\right)\Phi\left(\min\left(i,j\right)\tilde{D}\right)} 
\end{equation}
when $K=\left\{ k-\frac{N}{2}|\forall k\in\mathcal{\mathbb{N}},k<N\right\}$. A numeric analysis using of eq. \ref{eq:genq_chi_b0_constant_spacing} is presented in figure \ref{fig:linear_spacing},  and reveals a clear logarithmic relation between the level of quantization to the optimal fixed point slope, and the normalized spacing required to reach this optimal configuration. By extrapolating the numerical results, as seen in the right panels of Fig. \ref{fig:linear_spacing}, we find a good approximations for the the maximal achievable slope for any quantization level $\chi_{\max}(N)$ and the corresponding normalized spacing $D_{\text{opt}}(N)$. Using those extrapolated values, we predict the depth-scale of a quantized, feed-forward network to be:   
\begin{equation} \label{eq:xi_N}
\xi_{N}=-\frac{1}{\log(\chi_{\max}(N))}\simeq-\frac{1}{\log(1-e^{0.71}\left(N+1\right)^{-1.82})}\simeq\frac{1}{2}\left(N+1\right)^{1.82}.
\end{equation}
where the latter approximation is valid for large $N$. While the depth scale in eq. \ref{eq:xi_N} is applicable to uniformly spaced quantized activations, numerical results presented in Appendix \ref{sup:beyond_linspace} suggest that using more complex activations with the same quantization level will not produce better results. 

In their work regarding mean field theory of convolutional neural networks, \cite{xiao2018dynamical} shows that the dynamics of hidden-layer's correlations in CNNs decouple into independently evolving Fourier modes that evolves near the fixed point, each with a corresponding fixed-point-slope of $\chi_c \lambda_i$, with $\chi_c$ depending the initialization hyperparameters and equivalent to the fixed point slope as calculated for fully connected networks, and $\lambda_i \le 1$ being a frequency dependant modifier corresponding to mod $i$. While the exact dynamics in this case may depend on the decomposition of the input to Fourier mods, it is apparent that the maximal depth-scale of each mod can not exceed the depth-scale calculated for the fully-connected case, and thus our upper limit on the number of layers holds for the case of CNNs. Similarly, following \cite{chen2018dynamical} and \cite{gilboa2019dynamical}, our results can be easily extended to single layer RNNs, LSTMs and GRUS, in which case the limitation applies to the timescale of the network memory.

\begin{figure}[h]
	\centering
    \begin{subfigure}[]{0.4\textwidth}
       \centering
       \includegraphics[height=2.8in]{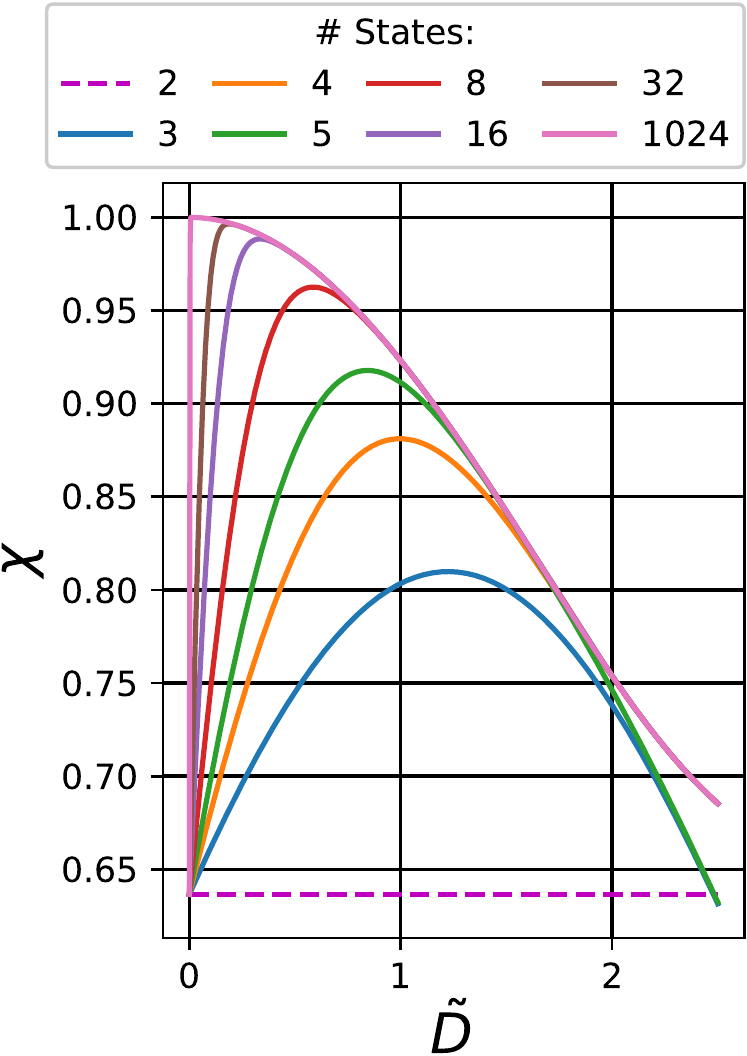}
    \end{subfigure}
    \begin{subfigure}[]{0.55\textwidth}
        \centering
        \includegraphics[height=2.6in,width=2.8in]{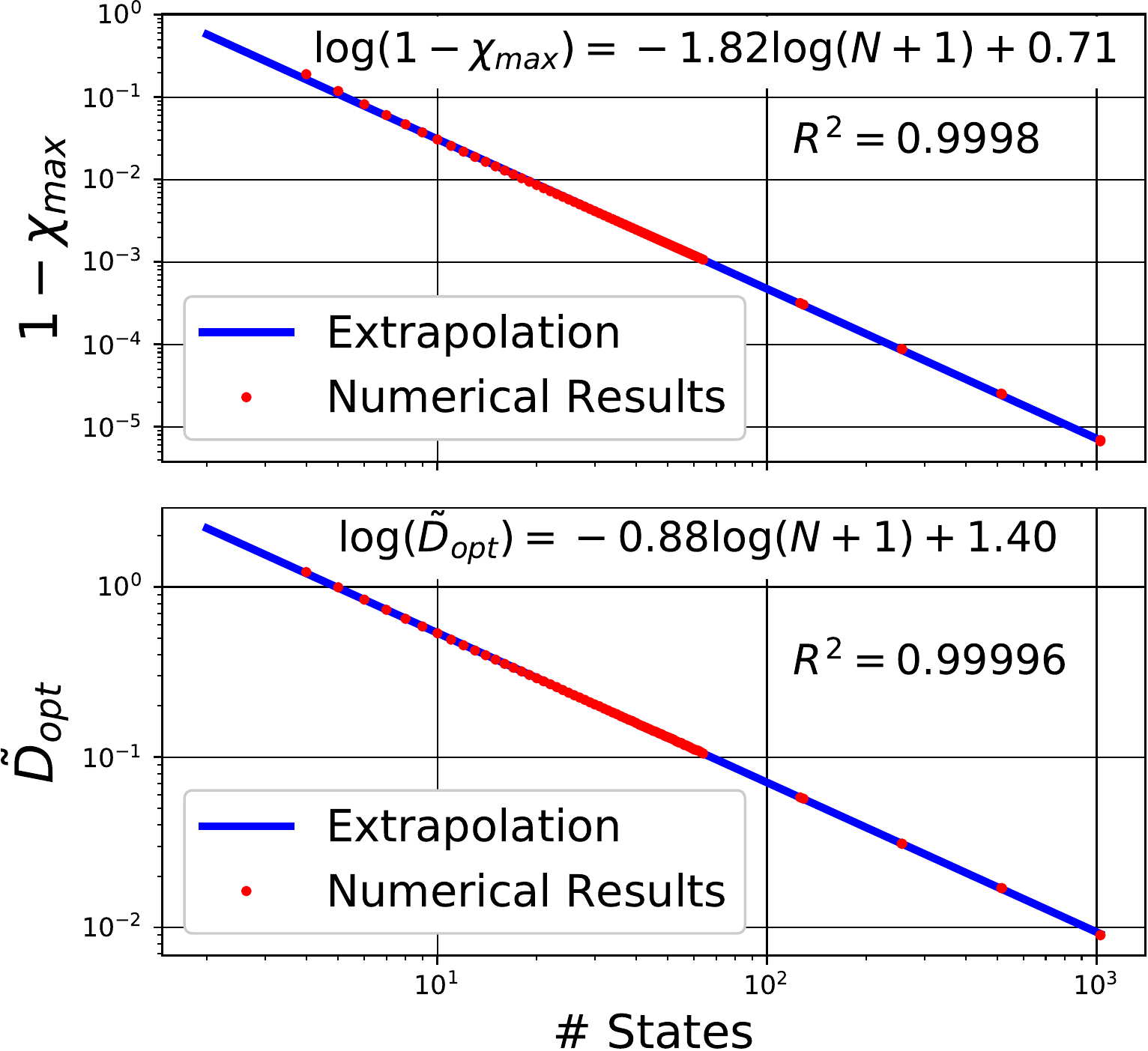}
  \end{subfigure} 
    \caption{Numerical analysis of the covariance propagation fixed point slope for quantized activation functions. \textit{Left:} The convergence rate in eq. \ref{eq:genq_chi_b0_constant_spacing} of the covariances of the hidden states as a function of the normalized spacing between offsets $\tilde{D}$ for activations with different levels of quantization $N$. \textit{Top Right:} The difference between $1$ and maximal achievable convergence rate $\chi_{\text{max}}$ as a function of $N$. \textit{Bottom Right:} The normalized spacing between states $\tilde{D}$ corresponding to $\chi_{\text{max}}$ as a function of $N$. We find that the dependence of $1-\chi_{\text{max}}$ on $N$ is approximated well by a power law.
    }
    \label{fig:linear_spacing} 
    \vspace{-.2in}
\end{figure}

\section{Experimental results}\label{sec:experiment}

To visualize the covariance propagation in eq.~\ref{eq:cov_udpate} we reconstruct an experiment presented in \cite{poole2016exponential}, and apply it to untrained quantized neural networks. We consider a neural network with $L=100$ fully-connected layers, all of width $n=1000$. We draw two orthonormal vectors $u^0,u^1\in\mathbb{R}^{1000}$ and generate the 1 dimensional manifold $U=\left\{ u_i=\sqrt{Q_{s}^{\ast}}\left(u^{0}\cos(\theta)+u^{1}\sin(\theta)\right)|i\in\{0,\frac{1}{r},\frac{2}{r},..,\frac{r-1}{r}\},\theta=2\pi i\right\} $, where $r=500$ is the number of samples, and $Q^{\ast}_s$ is the fixed point, calculated numerically. After initializing the neural network, we use the manifold values as inputs to the neural network and measure the covariance in all hidden layers. We then plot in Figure \ref{fig:manifold_16_states} the empirical covariance of the hidden states as a function of the difference in the angle $\theta$ of their corresponding inputs. The reason for multiplying the initial values by $\sqrt{Q^{\ast}_s}$ is so we can isolate the convergence of the off-diagonal correlations from that of the diagonal. 

To test the predictions of the theory, we have constructed a similar experiment to the one described in \cite{schoenholz2016deep}, training neural networks of varying depths over the MNIST dataset. We study how the maximal trainable depth of a quantized activation fully-connected network depends on the weight variance $\sigma^2_w$ and the number of states in the activation function $N$. For our quantized activations, we used the constant-spaced activations we have analyzed in section \ref{sec:General Quatnization}:
\[
\phi_{N}(x)=-1+\sum_{i=1}^{N-1}\frac{2}{N-1}H\left(x-\frac{2}{N-1}\left( i - \frac{N}{2} \right)\right),
\]
which describes an activation function with a distance of $D=\frac{2}{N-1}$ between offsets, and with states ranging between -1 and 1. 

To find the best initialization parameters for each activation function, we first used eq. \ref{eq:genq_q} to compute $\Q^{\ast}$ assuming our normalized spacing $\frac{D}{\sqrt{\qu^{\ast}}}$ is optimized ($\tilde{D}_{\text{opt}}$, computed using the linear regression parameters of Figure \ref{fig:linear_spacing} bottom right panel). Then, we picked $\sigma_b=0$, $\sigma_w= \frac{1}{\sqrt{\Q^{\ast}}}\frac{D}{\tilde{D}_{\mathrm{opt}}}$, and thus ensured that the normalized offsets are indeed optimal. Gradients are computed using the Straight-Through Estimator (STE) \cite{hubara2017quantized}:
\begin{equation} \label{eq:ste_rho}
\Delta_{\mathrm{input}}=\begin{cases}
\Delta_{\mathrm{output}} & \left|\mathrm{input}\right|<1\\
0 & \text{else}
\end{cases}\,,
\end{equation}
where $\Delta_{\text{output}}$ is gradient we get from the next layer and $\Delta_{\text{input}}$ is the gradient we pass to the preceding layer. The conditions required for allowing the gradients information to propagate backward are discussed in appendix \ref{sup:backward}. Those conditions are not enforced in this experiment, as they have no significant effect on the results, as shown in appendix \ref{sup:MNIST}, where we add more results that isolate the forward-pass from the backward pass. Also included in appendix \ref{sup:MNIST} are results that show the evolution of the training and test accuracy in training time. A simplified initialization scheme for the use of practitioners is included in appendix \ref{sup:practition}.

We set the hidden layer width to 2048. We use SGD for training, a learning rate of $10^{-3}$ for networks with 10-90 layers, and a learning rate of $5 \times 10^{-4}$  when training 100-220 layers. Those parameters were selected to match those reported in \cite{schoenholz2016deep}, with the second learning rate adjusted to fit our area-of-search. We also use a batch size of 32, and use a standard preprocessing of the MNIST input\footnote{The code for running this experiment and more is provided in \git.}.

\begin{figure}
    \centering
    \begin{subfigure}[]{0.5\textwidth}
        \centering
        \includegraphics[height=2.0in,width=2.5in]{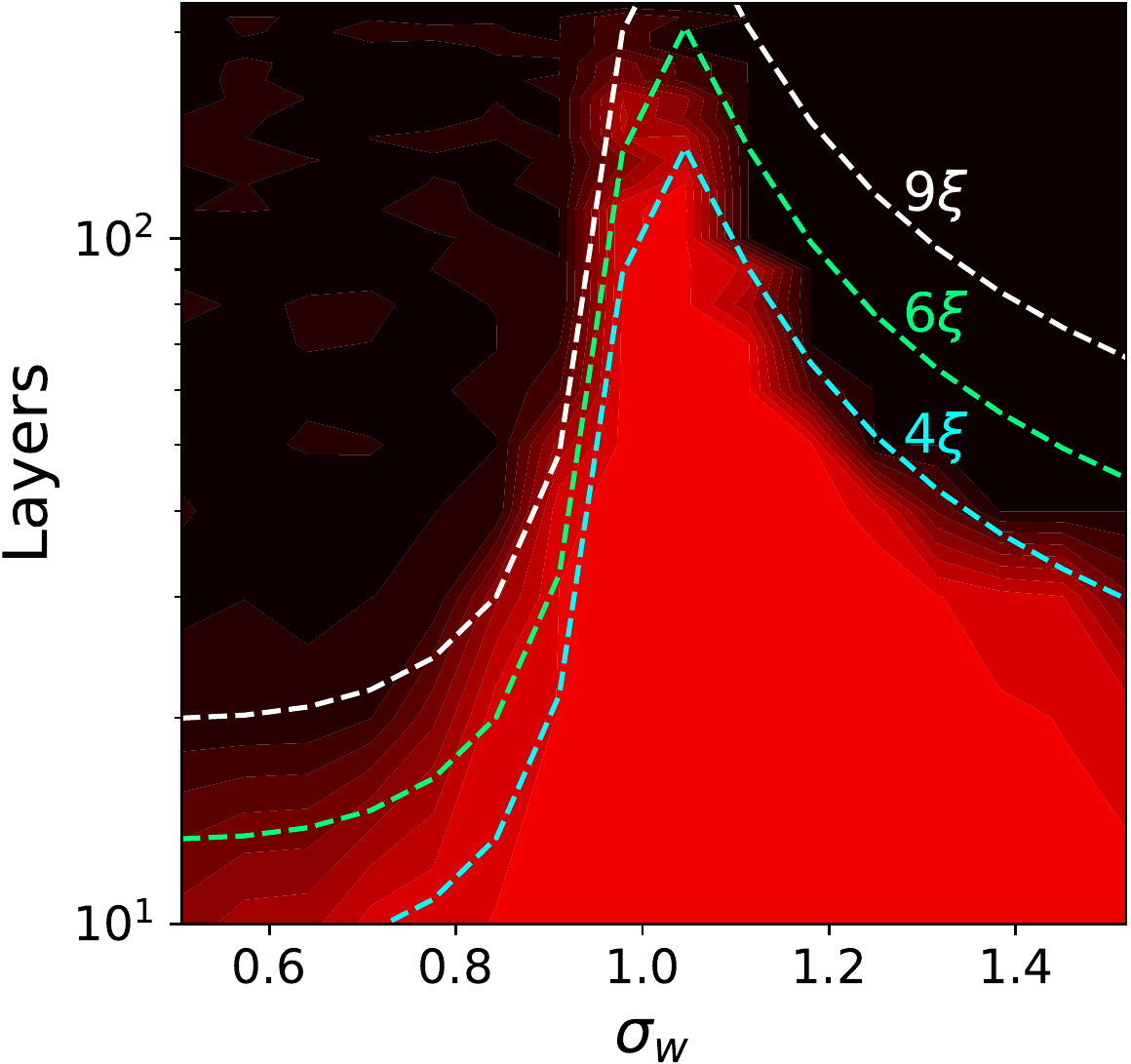}
    \end{subfigure}%
    ~
    \begin{subfigure}[]{0.5\textwidth}
        \centering
        \includegraphics[height=2.0in,width=2.7in]{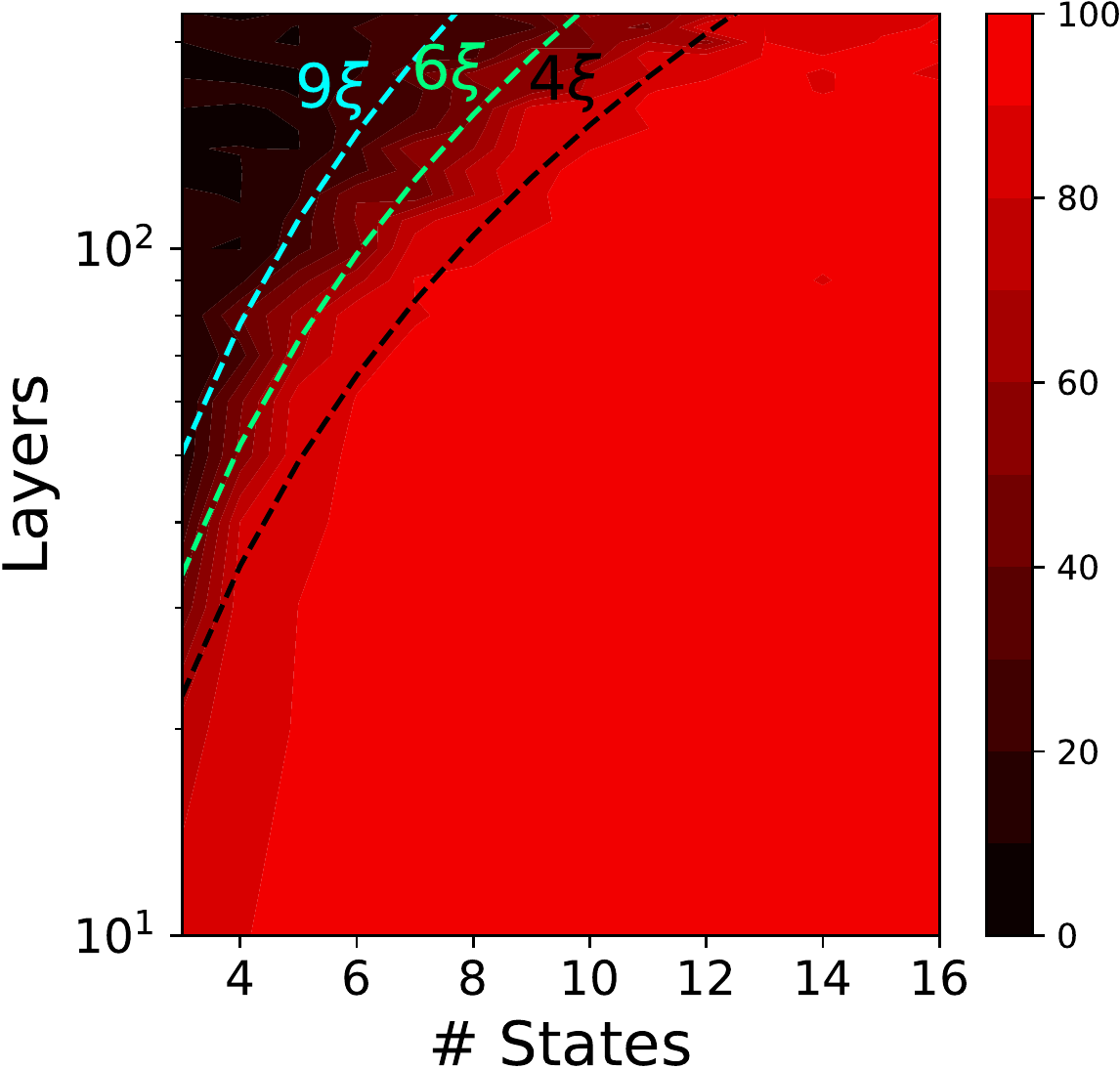}
    \end{subfigure}%
    \caption{Test accuracy of feed-forward networks of different depth with quantized activation functions trained on MNIST classification after 1600 training steps, compared with the theoretical depth scale predictions (eq. \ref{eq:xi}). Up to a constant factor, the theoretical depth scale predicts the phase transition between regimes where a network is trainable and one where training fails. \textit{Left:} Networks with a 10 states activation function and different values of the weight variance. \textit{Right:} Networks with different quantization levels (number of states), with variances adjusted to allow optimal signal propagation. 
    }
    \vspace{-.2in}
    \label{fig:MNIST_experiment}
\end{figure}
    
Figure \ref{fig:MNIST_experiment} shows that the initialization of the network using the parameters suggested by our theory achieves the optimal trainability when the number of layers is high. When measuring test accuracy at the early stage of the network, we can see that the accuracy is high when the network has $\sim4\xi$ layers or less. As demonstrated by the advanced training stage results shown in appendix \ref{sup:MNIST}, and by the results of \cite{schoenholz2016deep}, networks of depth exceeding $\sim6 \xi$ appear to be untrainable.
   \vspace{-.08in}
\section{Discussion}
    \vspace{-.08in}
In this paper, we study the effect of using quantized activations on the propagation of signals in deep neural networks, from the inputs to the outputs. We focus on quantized activations, which maps its input to a finite set of $N$ possible outputs. Our analysis suggests an initialization scheme that improves network trainability, and that fully-connected/convolutional networks to become untrainable when the number of layers exceeds $L_{\max}\sim3\left(N+1\right)^{1.82}$.

Additionally, we propose a possible explanation for the improved generalization observed when training networks that are initialized to enable stable signal propagation. While the motivation for the critical initialization has been improved trainability \cite{schoenholz2016deep}, empirically these initialization schemes were shown to improve generalization as well, an observation that was beyond the scope of the analysis which motivated them. By considering the dynamics of learning in wide networks that exhibit poor signal propagation, we find that generalization error in the early stages of training will typically not improve. This effect will be minimized when using a critical initialization. 

The limitations of poor signal propagation can perhaps be overcome with certain modifications to the architecture or training procedure. Residual connections, for example, can be initialized \cite{zhang2019fixup} to maintain the signal propagation conditions even when the full-network depth exceeds our theoretical limit \cite{yang2017mean}. Another possible modification is batch normalization, which we did not consider in the analysis. While batch normalization by itself was shown to have negative side effects on the signal propagation \cite{yang2019mean}, other studies  \cite{banner2018scalable,courbariaux2016binarized,hubara2017quantized} have already suggested that applying proper batch normalization is key when training quantized feed-forward networks. There are, however, cases where batch normalization does not work well, like in the case of recurrent neural networks. We expect our findings to have as increased significance if generalized to such architectures, as was done previously for continuous activations \cite{chen2018dynamical, gilboa2019dynamical}.

\ifthenelse{\boolean{blind}}
{

}
{
\subsection*{Acknowledgements}
The work of DS was supported by the Israel Science foundation (grant No. 31/1031), the Taub Foundation and used a Titan Xp donated by the NVIDIA Corporation. The work of DG was supported by the NSF NeuroNex Award DBI-1707398 and the Gatsby Charitable Foundation. The work of DG and DS was done in part while the authors were visiting the Simons Institute for the Theory of Computing.
}

\bibliographystyle{plain}
\bibliography{bibliography}

\newpage
\appendix
\part*{Appendix}
\section{Proof of Lemma 1}\label{pf:fixedpoints}
\begin{proof}[Proof of Lemma \ref{lem:fixedpoints}] 
The dynamical system is given by 
\begin{equation} \label{eq:QCsys2}
\left(\begin{array}{c}
Q^{(l)}\\
C^{(l)}
\end{array}\right)=\left(\begin{array}{c}
\sigma_{w}^{2}\underset{u\sim\mathcal{N}(0,Q^{(l-1)})}{\mathbb{E}}\phi^{2}(u)+\sigma_{b}^{2}\\
\frac{1}{Q^{(l-1)}}\left[\sigma_{w}^{2}\underset{(u_{1},u_{2})\sim\mathcal{N}(\mean,\Sigma(Q^{(l-1)},C^{(l-1)}))}{\mathbb{E}}\phi(u_{1})\phi(u_{2})+\sigma_{b}^{2}\right]
\end{array}\right)\equiv\mathcal{M}\left[\left(\begin{array}{c}
Q^{(l-1)}\\
C^{(l-1)}
\end{array}\right)\right].
\end{equation}

Since $Q^{(l)}=\sigma_{w}^{2}\widehat{Q}^{(l-1)}+\sigma_{b}^{2}$, convergence of $Q^{(l)}$ to a fixed point is equivalent to convergence of $\widehat{Q}^{(l)}$. If we assume $Q^{(l)}$ has converged to $Q^\ast$, the system in eq. \ref{eq:QCsys2} reduces to
\begin{equation}
\mathcal{M}_{Q^{\ast}}(C)=\frac{1}{Q^{\ast}}\left[\sigma_{w}^{2}\underset{(u_{1},u_{2})\sim\mathcal{N}(\mean,\Sigma(Q^{\ast},C))}{\mathbb{E}}\phi(u_{1})\phi(u_{2})+\sigma_{b}^{2}\right]
\end{equation}
Linearizing the above equation gives

\[
\mathcal{M}_{Q^{\ast}}(C)=\mathcal{M}_{Q^{\ast}}(C^{\ast})+\underbrace{\frac{\partial\mathcal{M}_{Q^{\ast}}(C^{\ast})}{\partial C}}_{\equiv\chi}(C-C^{\ast})+O\left((C-C^{\ast})^{2}\right)
\]

and using a Cholesky decomposition and denoting by $\mathcal{D}x$ a standard Gaussian measure, we have 
\[
\chi_{C^\ast}=\frac{1}{Q^{\ast}}\frac{\partial}{\partial C}\left[\sigma_{w}^{2}\underset{(u_{1},u_{2})\sim\mathcal{N}(\mean,\Sigma(Q^{\ast},C))}{\mathbb{E}}\phi(u_{1})\phi(u_{2})+\sigma_{b}^{2}\right]_{C=C^{\ast}}
\]
\[
=\frac{\sigma_{w}^{2}}{Q^{\ast}}\int\mathcal{D}z_{1}\mathcal{D}z_{2}\phi(\sqrt{Q_{\ast}}z_{1}+\mu_{b})\frac{\partial}{\partial C}\phi(\sqrt{Q_{\ast}}\left(Cz_{1}+\sqrt{1-C^{2}}z_{2}\right)+\mu_{b})_{C=C^{\ast}}
\]
\[
=\frac{\sigma_{w}^{2}}{Q^{\ast}}\int\mathcal{D}z_{1}\mathcal{D}z_{2}\phi(\sqrt{Q_{\ast}}z_{1}+\mu_{b})\phi'(\sqrt{Q_{\ast}}\left(Cz_{1}+\sqrt{1-C^{2}}z_{2}\right)+\mu_{b})\sqrt{Q_{\ast}}\left(z_{1}-\frac{z_{2}C}{\sqrt{1-C^{2}}}\right)
\]
and using $\int\mathcal{D}zg(z)z=\int\mathcal{D}zg'(z)$ which holds for any $g(z)$

\[
=\sigma_{w}^{2}\underset{(u_{1},u_{2})\sim\mathcal{N}(\mean,\Sigma(Q^{\ast},C))}{\mathbb{E}}\phi'(u_{1})\phi'(u_{2}).
\]

The time scale of convergence dictated by the rate $\chi$ is obtained by solving the linear equation for $\varepsilon^{(l)}=C^{(l)}-C^{\ast}$, which gives $\varepsilon^{(l)}=\varepsilon_{0}e^{-l/\xi}$ and thus in the linear regime we have 
\[
e^{-1/\xi}=\frac{\varepsilon^{(l+1)}}{\varepsilon^{(l)}}=\frac{\mathcal{M}_{Q^{\ast}}(C^{(l)})-C^{\ast}}{C^{(l)}-C^{\ast}}\approx\frac{C^{\ast}+\chi\left(C^{(l)}-C^{\ast}\right)-C^{\ast}}{C^{(l)}-C^{\ast}}=\chi
\]

\[
\xi=-\frac{1}{\log\chi}.
\]

Since a smooth convex function can intersect a linear function at no more than two points unless the two are equal (since otherwise the gradient must change sign twice implying negative curvature at some point), in order to show that $\mathcal{M}_{Q^{\ast}}(C)$ can have at most two fixed points in $[0,1]$ it suffices to show that it is convex in this range. A calculation similar to the one above gives:
\[
\frac{\partial^{2}\mathcal{M}_{Q_{\ast}}(C)}{\partial C^{2}}=\sigma_{w}^{2}Q_{\ast}\underset{(u_{1},u_{2})\sim\mathcal{N}(\mean,\Sigma(Q^{\ast},C))}{\mathbb{E}}\phi''(u_{1})\phi''(u_{2}).
\]

If $\phi$ is odd, so is $\phi''$ and then the expression above is non-negative for $C \in [0,1] $ according to Lemma 2 in \cite{gilboa2019dynamical}. It is obviously also non-negative simply if $\phi''$ is uniformly non-negative. The result applies to quantized activation as well since we can replace the Heaviside function with a smooth approximation that is identical to within machine precision, and apply the above argument.

Since a fixed point is only stable if the slope $\chi$ is smaller than $1$ and there are at most two fixed points in $[0,1]$, there can be at most one stable fixed point. It follows that the fixed point of the dynamics does not depend on initialization as long as $C^{(0)} \geq 0$. While there may be another stable fixed point in $[-1,0)$, the network will still be unable to distinguish between any two inputs that are either completely uncorrelated or positively correlated, which will generally prevent learning aside from trivial tasks where data points in different classes are always negatively correlated, and thus the data is linearly separable.  
\end{proof}
\section{Covariances of post-activations} \label{app:hidden_state}

In the main text we review results on asymptotic normality of pre-activations $\alpha^{(l)}(x)$ of deep feed-forward networks at the infinite width limit. The analysis of signal propagation in such networks is based on studying convergence of the covariances of these pre-activations to their fixed points. The convergence rate in eq. \ref{eq:chi_basic} and the corresponding time scale in eq. \ref{eq:xi} that gives the typical maximal trainable depth are thus the main objects of interest. 

It will be convenient at times to consider instead the evolution of the covariances of the post-activations $\widehat{\alpha}^{(l)}(x)=\phi(\alpha^{(l)}(x))$. We do this by defining, analogously to eq. \ref{eq:QC_def},
\[
\left(\begin{array}{cc}
\mathbb{E}\widehat{\alpha}_{i}^{(l)}(x)\widehat{\alpha}_{i}^{(l)}(x) & \mathbb{E}\widehat{\alpha}_{i}^{(l)}(x)\widehat{\alpha}_{i}^{(l)}(x')\\
\mathbb{E}\widehat{\alpha}_{i}^{(l)}(x)\widehat{\alpha}_{i}^{(l)}(x') & \mathbb{E}\widehat{\alpha}_{i}^{(l)}(x')\widehat{\alpha}_{i}^{(l)}(x')
\end{array}\right)=\left(\begin{array}{cc}
\widehat{\Sigma}^{(l)}(x,x) & \widehat{\Sigma}^{(l)}(x,x')\\
\widehat{\Sigma}^{(l)}(x,x') & \widehat{\Sigma}^{(l)}(x',x')
\end{array}\right)+\left(\widehat{\mu}^{(l)}\right)^{2}\left(\begin{array}{cc}
1 & 1\\
1 & 1
\end{array}\right)
\]
\begin{equation}\label{eq:realQC}
=\widehat{Q}^{(l)}\left(\begin{array}{cc}
1 & \widehat{C}^{(l)}\\
\widehat{C}^{(l)} & 1
\end{array}\right)+\left(\widehat{\mu}^{(l)}\right)^{2}\left(\begin{array}{cc}
1 & 1\\
1 & 1
\end{array}\right)
\end{equation}
For a given $x,x'$ the quantities $Q^{(l)},C^{(l)}$ are trivially related to $\widehat{\mu}^{(l-1)},\widehat{Q}^{(l-1)},\widehat{C}^{(l-1)}$ via eq. \ref{eq:cov_udpate}, which gives 
\[
Q^{(l)}=\sigma_{w}^{2}\left(\widehat{Q}^{(l-1)}+\left(\widehat{\mu}^{(l-1)}\right)^{2}\right)+\sigma_{b}^{2}
\]
\[
C^{(l)}=\frac{\sigma_{w}^{2}\left(\widehat{Q}^{(l-1)}\widehat{C}^{(l-1)}+\left(\widehat{\mu}^{(l-1)}\right)^{2}\right)+\sigma_{b}^{2}}{Q^{(l)}}.
\]
The covariance map for the hidden states analogous to eq. \ref{eq:M} is simply
\begin{equation}\label{eq:realM}
\mathcal{\widehat{M}}_{\widehat{\mu}^{\ast},\widehat{Q}^{\ast}}(\widehat{C})=\frac{1}{\widehat{Q}^{\ast}}\underset{(u_{1},u_{2})\sim\mathcal{N}(\widehat{\mu}^{\ast},\widehat{\Sigma}(\widehat{Q}^{\ast},\widehat{C}))}{\mathbb{E}}\phi(u_{1})\phi(u_{2})
\end{equation}
where $\widehat{\Sigma}(\widehat{Q}^{\ast},\widehat{C})=\left(\begin{array}{cc}
\sigma_{w}^{2}\widehat{Q}^{\ast}+\sigma_{b}^{2} & \sigma_{w}^{2}\widehat{Q}^{\ast}\widehat{C}+\sigma_{b}^{2}\\
\sigma_{w}^{2}\widehat{Q}^{\ast}\widehat{C}+\sigma_{b}^{2} & \sigma_{w}^{2}\widehat{Q}^{\ast}+\sigma_{b}^{2}
\end{array}\right)$. 
The convergence rates for \ref{eq:M} are identical since
\[
\frac{\partial\mathcal{M}_{Q^{\ast}}(C^{(l)})}{\partial C^{(l)}}=\frac{\partial C^{(l+1)}}{\partial C^{(l)}}=\frac{1}{Q^{\ast}}\frac{\partial\sigma_{w}^{2}\left(\widehat{Q}^{\ast}\widehat{C}^{(l)}\right)+\sigma_{b}^{2}}{\partial C^{(l)}}
\]
\[
=\frac{\sigma_{w}^{2}\widehat{Q}^{\ast}}{Q^{\ast}}\frac{\partial\widehat{C}^{(l)}}{\partial\widehat{C}^{(l-1)}}\frac{\partial\widehat{C}^{(l-1)}}{\partial C^{(l)}}=\frac{\partial\widehat{C}^{(l)}}{\partial\widehat{C}^{(l-1)}}=\frac{\partial\widehat{\mathcal{M}}_{\widehat{\mu}^{\ast},\widehat{Q}^{\ast}}(\widehat{C}^{(l-1)})}{\partial\widehat{C}^{(l-1)}}
\]
giving
\[
\chi=\underset{l\rightarrow\infty}{\lim}\frac{\partial\mathcal{M}_{Q^{\ast}}(C^{(l)})}{\partial C^{(l)}}=\underset{l\rightarrow\infty}{\lim}\frac{\partial\widehat{\mathcal{M}}_{\widehat{\mu}^{\ast},\widehat{Q}^{\ast}}(\widehat{C}^{(l-1)})}{\partial\widehat{C}^{(l-1)}}=\widehat{\chi}.
\]

\section{Calculation of the fixed point slope for sign-activation}\label{sup:sign_mean_field_theory}
For convinience, we use the hidden states covariances and mapping $\C,\Q,\M$ as defined in appendix \ref{app:hidden_state}, as they have a linear relationship to the pre-activation at the fixed point. Using a Cholesky decomposition on the equation \ref{eq:warmup_start}: $\chi=4\sigma_{w}^{2}\underset{(u_{a},u_{b})\sim\mathcal{N}(0,\mathbf{\Sigma}(\qu^{\ast},\cu^{\ast}))}{\mathbb{E}}\delta(u_{a})\delta(u_{b})$, we get
\[
4\sigma_{w}^{2}\underset{u_{1}}{\int}\underset{u_{2}}{\int}\frac{1}{2\pi}\exp\left(-\frac{u_{1}^{2}+u_{2}^{2}}{2}\right)\delta(\sqrt{\qu^{\ast}}u_{1})\delta\left(\sqrt{\qu^{\ast}}\left(\cu^{\ast}u_{1}+\sqrt{1-(\cu^{\ast})^{2}}u_{2}\right)\right)du_{1}du_{2}.
\]
The delta functions enforces: $u_{1}=0,\mu_{2}=0$, giving us
\[\chi=\frac{2}{\pi}\frac{\sigma_{w}^{2}}{\qu^{\ast}\sqrt{1-(\cu^{\ast})^{2}}}.\]
Then, using $\qu^{\ast}=\sigma_{w}^{2}\Q^{\ast}+\sigma_{b}^{2}$, and since  $\Q^{\ast}=1$ for sign activation:
\[\chi=\frac{2}{\pi}\frac{\sigma_{w}^{2}}{(\sigma_{w}^{2}+\sigma_{b}^{2})\sqrt{1-(\cu^{\ast})^{2}}}.\]
While this equation is written for the fixed point $\cu^{\ast}$, this equation can describe the slope of $\mathcal{M}(C)$ for every value of $\cu$. Rather than directly calculating $\mathcal{M}(\cu)$ using equation \ref{eq:QCsys}, it is surprisingly time saving to calculate it by using our expression for  $ \chi(\cu)=\frac{d\mathcal{M}(\cu)}{d\cu}$:

\[\mathcal{M}(\cu)-\text{const}=\int_{0}^{\cu}\chi(\cu')d\cu'=\frac{2}{\pi}\frac{\sigma_{w}^{2}}{(\sigma_{w}^{2}+\sigma_{b}^{2})}\arcsin(\cu).\]
We know that $\mathcal{M}(\cu=1)=1$, from which we can compute the constant
\[\text{const}=\mathcal{M}(1)-\frac{2}{\pi}\frac{\sigma_{w}^{2}}{(\sigma_{w}^{2}+\sigma_{b}^{2})}\arcsin(1)=\frac{\sigma_{b}^2}{\sigma_{w}^2+\sigma_{b}^2}.\]
In conclusion:
\[\mathcal{M}(\cu)=\frac{\frac{2\sigma_{w}^2}{\pi}\arcsin\left(\cu\right)+\sigma_{b}^2}{\sigma_{w}^2+\sigma_{b}^2}\]
It's also worth noting that for the hidden-states, the mapping for sign activation is:

\[\M(\C)=\frac{2}{\pi}\arcsin\left(\frac{\C\sigma_{w}^2+\sigma_{b}^2}{\sigma_{w}^2+\sigma_{b}^2}\right)\]

In addition to the fixed point $\M(\C=1)=1$, the covariance mapping function suggests an additional fixed point within the range $[0,1)$. In the case of $\sigma_{b}^{2}=0$, The entire network becomes anti-symmetric upon initialization and $C=-1$ becomes an infinitely unstable fixed point as well.

\section{Stochastic Rounding}\label{sup:base_stochastic_rounding}
One possible way to counter the negative effects of quantization which has proven itself in the past, is by adding noise to the rounding process. Being a commonplace method in machine learning, we would like to explore the effects of stochastic rounding on the dynamics of the neural network. When using this method the sign activation becomes probabilistic and can be modeled as:
\begin{equation}
\phi(x)=\text{sign}(x+n)
\end{equation}when $n\sim \mathrm{Uniform}[-1,1]$ is randomized for every neuron. Rather than working with a uniformly distributed noise, we replace it with a normal-distributed noise. Therefore, $\phi(u)=\text{sign}(u+n)$, for $n\sim \mathcal{N}(0,a^2) $. We justify this using a numeric simulation presented in figure \ref{fig:stochastic}, and in Appendix \ref{sup:stochastic_rounding} we find that the expression for stochastic rounding mapping (for hidden states) $\M_{sr}(\C)$ is:
\begin{equation}
\M_{sr}(\C)=\frac{2}{\pi}\arcsin\left(\frac{1}{B}\frac{\C\sigma_{w}^2+\sigma_{b}^2}{\sigma_{w}^2+\sigma_{b}^2}\right)
\end{equation}where $B=\sqrt{1+\left(\frac{a}{\qu^{\ast}}\right)^{2}\left(2\qu^{\ast}+a^{2}\right)}$.
While the new mapping function for $\C$ does not reach infinite slope at any point (since $\cu\le1,B>1$), the noise also eliminates $\C=1$ as a fixed point. This result is consistent with the findings of \cite{schoenholz2016deep} who have shown a similar phenomena when using dropout. Due to the $\mathrm{arcsin}$ function being a convex,  monotonically increasing function in the area $0<\cu<1$, We can also conclude that adding noise (and therefore, increasing $B$) can only decrease the fixed point slope. See \ref{pf:stochastic} for proof, and figure \ref{fig:stochastic} for illustration.

\begin{figure}[h!]
  \includegraphics[width=1.0\textwidth]{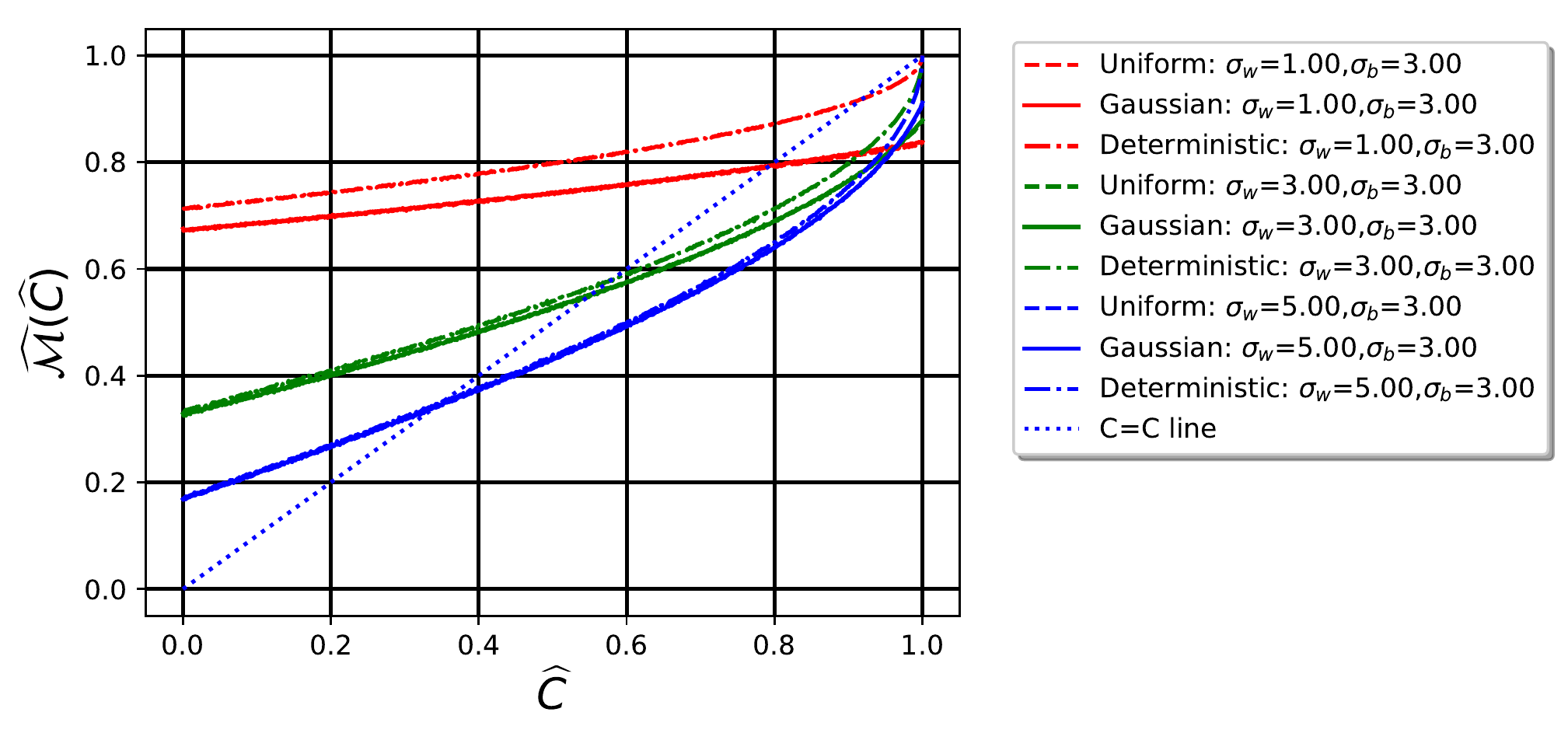}
  \caption{A simulation comparing $\M(\C)$ for deterministic and stochastic sign activations. For the Gaussian noise, we used the distribution $\mathcal{N}(0,\frac{1}{3})$, so both Gauss and Uniform stochastic activations  have the same first and second moments. In all cases, the stochastic activation with the Gauss noise was indistinguishable from the one with the uniform noise.}
\label{fig:stochastic}
\end{figure}

\subsection{Development of the mean field equations for stochastic rounding}\label{sup:stochastic_rounding}

We now want to use the stochastic sign activation function to evaluate how it effects the $\M(C)$.
Using  equation \ref{eq:chi_basic}, and we get:

\[
\chi=4\sigma_{w}^{2}\int_{-\infty}^{\infty}dn_{1}\int_{-\infty}^{\infty}dn_{2}\int_{-\infty}^{\infty}du_{1}\int_{-\infty}^{\infty}du_{2}\frac{1}{2\pi}\frac{1}{2\pi a^{2}}\exp\left(-\frac{n_{1}^{2}}{2a^{2}}\right)\exp\left(-\frac{n_{2}^{2}}{2a^{2}}\right)
\]
\[
\exp\left(-\frac{u_{1}^{2}}{2}\right)\exp\left(-\frac{u_{2}^{2}}{2}\right)\delta\left(\sqrt{\qu^{\ast}}u_{1}+n_{1}\right)\delta\left(\sqrt{\qu^{\ast}}\left(\cu u_{1}+\sqrt{1-(\cu)^{2}}u_2\right)+n_{2}\right)
\]

We use the delta functions to enforce: $u_{1}=-\frac{n_{1}}{\sqrt{\qu^{\ast}}}$, $u_{2}=-\frac{n_{2}-C(n_{1})}{\sqrt{\qu^{\ast}}\sqrt{1-(\cu)^{2}}}$ and get:
\begin{equation}
\chi=\frac{2\sigma_{w}^{2}}{\pi \qu^{\ast}a^{2}\sqrt{1-(\cu)^{2}}}\frac{1}{2\pi}\int_{-\infty}^{\infty}\int_{-\infty}^{\infty}\exp\left(-\frac{n_{1}^{2}}{2a^{2}}\right)\exp\left(-\frac{n_{2}^{2}}{2a^{2}}\right)
\end{equation}
\[
\exp\left(-\frac{(n_{1})^{2}}{2\qu^{\ast}}\right)\exp\left(-\frac{\left(n_{2}-\cu(n_{1})\right)^{2}}{2\qu^{\ast}(1-(\cu)^{2})}\right)dn_{1}dn_{2}
\]
Which can otherwise be written as:

\[
\frac{2\sigma_{w}^{2}}{\pi \qu^{\ast}a^{2}\sqrt{1-(\cu)^{2}}}\frac{1}{2\pi}\int_{-\infty}^{\infty}\int_{-\infty}^{\infty}\exp\left[-\frac{1}{2}D\right]dn_{1}dn_{2}
\]
\[
D=\frac{n_{1}^{2}\left(1-(\cu)^{2}\right)\left(a^{2}+\left(\qu^{\ast}\right)^{2}\right)+n_{2}^{2}\qu^{\ast}\left(1-(\cu)^{2}\right)+a^{2}n_{2}^{2}-2a^{2}n_{1}n_{2}\cu+a^{2}n_{1}^{2}(\cu)^{2}}{\qu^{\ast}\left(1-(\cu)^{2}\right)a^{2}}
\]So:
\begin{equation}
\chi=\frac{2\sigma_{w}^{2}}{\pi \qu^{\ast}a^{2}\sqrt{1-(\cu)^{2}}}\frac{1}{2\pi}\int_{-\infty}^{\infty}\int_{-\infty}^{\infty}\exp\left[-\frac{1}{2}\frac{1}{\qu^{\ast}\left(1-(\cu)^{2}\right)a^{2}}\left(\begin{array}{cc}
n_{1} & n_{2}\end{array}\right)\Sigma^{-1}\left(\begin{array}{c}
n_{1}\\
n_{2}
\end{array}\right)\right]dn_{1}dn_{2}
\end{equation}
\[
\Sigma^{-1}=\left(\begin{array}{cc}
\left(1-{(\cu)^{2}}\right)a^{2}+\left(1-(\cu)^{2}\right)\left(\qu^{\ast}\right)^{2}+{a^{2}(\cu)^{2}} & -a^{2}\cu\\
-a^{2}\cu & \qu^{\ast}\left(1-(\cu)^{2}\right)+a^{2}
\end{array}\right)
\]

Solving the Gaussian we get:

\begin{equation}
\left|\Sigma\right|^{-1}=\left|\Sigma^{-1}\right|=\frac{\left(\qu^{\ast}\left(1-(\cu)^{2}\right)+a^{2}\right)^{2}-\left(a^{2}\cu\right)^{2}}{\left(\qu^{\ast}\left(1-(\cu)^{2}\right)a^{2}\right)^{2}}=
\end{equation}
\[
\frac{(\qu^{\ast})^{2}\left(1-(\cu)^{2}\right)^{2}+2a^{2}\qu^{\ast}\left(1-(\cu)^{2}\right)+a^{4}-a^{4}(\cu)^{2}}{\left(\qu^{\ast}\left(1-(\cu)^{2}\right)a^{2}\right)^{2}}=
\]
\[
\frac{(\qu^{\ast})^{2}\left(1-(\cu)^{2}\right)^{2}+a^{2}\left(2\qu^{\ast}+a^{2}\right)\left(1-(\cu)^{2}\right)}{(\qu^{\ast})^{2}\left(1-(\cu)^{2}\right)^{2}a^{4}}
\]

Resulting:

\begin{equation}
\chi=\frac{2\sigma_{w}^{2}}{\pi \qu^{\ast}a^{2}\sqrt{1-(\cu)^{2}}}\frac{1}{2\pi}\left(2\pi\sqrt{\left|\Sigma\right|}\right)=\frac{2\sigma_{w}^{2}}{\pi{\qu^{\ast}a^{2}}{\sqrt{1-(\cu)^{2}}}}\sqrt{\frac{{(\qu^{\ast})^{2}\left(1-(\cu)^{2}\right)a^{4}}}{(\qu^{\ast})^{2}\left(1-(\cu)^{2}\right)+a^{2}\left(2\qu^{\ast}+a^{2}\right)}}
\end{equation}And we finally get:
\[
\chi=\frac{2\sigma_{w}^{2}}{\pi \qu^{\ast}\sqrt{\left(1-(\cu)^{2}\right)+\left(\frac{a}{\qu^{\ast}}\right)^{2}\left(2\qu^{\ast}+a^{2}\right)}}
\]
For the rest of this section, We will use the shortcut $B\equiv\sqrt{1+\left(\frac{a}{\qu^{\ast}}\right)^{2}\left(2\qu^{\ast}+a^{2}\right)}$
We can now write the equation as:

\begin{equation}\label{eq:stochastic_rounding_chi}
\chi=\frac{2\sigma_{w}^{2}}{\pi \qu^{\ast}\sqrt{\left(1-\left(\frac{\cu}{B}\right)^{2}\right)}}
\end{equation}

for $x=\frac{\cu^{\ast}}{B},\frac{d\cu}{dx}=B$

\[
\M(\C)=\int\frac{d\M(\C)}{d\C}d\C=\int\frac{d\M(C)}{dC}\frac{d\C}{d\cu}d\cu=\int\chi\frac{d\C}{d\cu}\frac{d\cu}{dx}dx
\]
When we again drop the constant so $\M(\C=1)=1$, and get:

\begin{equation}
\M(\C)=\frac{2}{\pi}\arcsin\left(\frac{\cu}{B}\right)
\end{equation}
Based on this equation, we can also use a Taylor expansion, to estimate $\C^{\ast}$, and we get the solution:
\begin{equation}
\C^{\ast}\simeq 1-\frac{4}{\pi^{2}}\frac{\sigma_{w}^{2}}{\qu^{\ast}B}\left(1+\sqrt{1+\left(\frac{\pi}{2}\right)^{4}\left(B^{2}-B\right)\left(\frac{\qu^{\ast}}{\sigma_{w}^{2}}\right)^{2}}\right)
\end{equation}

\section{Calculations of $\qu^{(l)}$ and $\chi$ for general quantized activations}\label{sup:genq_mean_field}
We start by evaluating $\Q$, the hidden-state covariance (see appendix \ref{app:hidden_state}) for the general quantization activation function defined in \ref{eq:genq_def}, using equation \ref{eq:realQC}

\[
\Q^{(l)}=\underset{u\sim\mathcal{N}(0,\qu^{(l)})}{\mathbb{E}}\left(A+\sum_{i=1}^{N-1}H\left(u-g_{i}\right)h_{i}\right)^{2}-\left(\mu^{(l)}\right)^{2},
\]
where:
\begin{equation}\label{eq:genq_mu}
\mu^{(l)}=\underset{u\sim\mathcal{N}(0,\qu^{(l)})}{\mathbb{E}}\left(A+\sum_{i=1}^{N-1}H\left(u-g_i\right)h_i\right)=A+\sum_{i=1}^{N-1}h_{i}\Phi\left(-\frac{g_{i}}{\sqrt{\qu^{(l)}}}\right).
\end{equation}
Here, we use $\Phi$ as the normal cumulative distribution function. The constant $A$ cancels out, and we can expand the multiplication:
\[
\Q^{(l)}=\sum_{i=1}^{N-1}\sum_{j=1}^{N-1}h_{i}h_{j}\left(\mathbb{E}\left[H\left(u-g_{i}\right)H\left(u-g_{j}\right)\right]-\Phi\left(-\frac{g_{i}}{\sqrt{\qu^{(l)}}}\right)\Phi\left(-\frac{g_{j}}{\sqrt{\qu^{(l)}}}\right)\right),
\]
And since $H\left(u-g_{i}\right)H\left(u-g_{j}\right) = H\left(u-g_{\max(i,j)}\right)$
\begin{equation}
\Q^{(l)}=\sum_{i=1}^{N-1}\sum_{j=1}^{N-1}h_{i}h_{j}\left(\Phi\left(-\frac{\max(g_{i},g_{j})}{\sqrt{\qu^{(l)}}}\right)-\Phi\left(-\frac{g_{i}}{\sqrt{\qu^{(l)}}}\right)\Phi\left(-\frac{g_{j}}{\sqrt{\qu^{(l)}}}\right)\right).
\end{equation}
$\Phi\left(-x\right)\Phi\left(-y\right)=\Phi\left(-\max\left(x,y\right)\right)\Phi\left(-\min(x,y)\right)$, so we can see that:
\[
\Phi\left(-\max\left(x,y\right)\right)-\Phi\left(-x\right)\Phi\left(-y\right)=\Phi\left(-\max\left(x,y\right)\right)\left(1-\Phi\left(-\min(x,y)\right)\right)
\]
And by using the CDF property $\Phi(-x) = 1-\Phi(x)$, we get
\begin{equation}\label{eq:genq_q_sup}
\Q^{(l)}=\sum_{i=1}^{N-1}\sum_{j=1}^{N-1}h_{i}h_{j}\Phi\left(-\frac{\max(g_{i},g_{j})}{\sqrt{\qu^{(l)}}}\right)\Phi\left(\frac{\min(g_{i},g_{j})}{\sqrt{\qu^{(l)}}}\right),
\end{equation}
from which we can easily compute $\qu^{(l+1)}$.
In Appendix \ref{sup:genq_mapping_approx}, we develop an approximation for $\mathcal{M}(\C)$. However, for our more immediate concerns, we will go straight to evaluating the equation for the fixed point slope, from eq. \ref{eq:chi_basic}:


\[
\begin{array}[]{c}
\chi=\sigma_{w}^{2}\sum_{i=1}^{N-1}\sum_{j=1}^{N-1}\iint_{u_1,u_2\sim\mathcal{N}(0,\Sigma(\qu^{\ast},\cu^{\ast}))}h_{i}h_{j}  \\ 
\delta\left(\sqrt{\qu^{\ast}}u_{1}-g_{i}\right)\delta\left(\sqrt{\qu^{\ast}}\left(\cu^{\ast}u_{1}+\sqrt{1-\left(\cu^{\ast}\right)^{2}}u_{2}\right)-g_{j}\right)=
\end{array}
\]
\[
\begin{array}[]{c}
\frac{\sigma_{w}^{2}}{2\pi\sqrt{\qu^{\ast}}}\sum_{i=1}^{N-1}\sum_{j=1}^{N-1}\int\exp\left[-\frac{1}{2}\frac{g_{i}^{2}}{\qu^{\ast}}\right]\exp\left[-\frac{1}{2}u_{2}^{2}\right]h_{i}h_{j} \\
\delta\left(\sqrt{\qu^{\ast}}\left(\cu^{\ast}\frac{g_{i}}{\sqrt{\qu^{\ast}}}+\sqrt{1-\left(\cu^{\ast}\right)^{2}}u_{2}\right)-g_{j}\right)=
\end{array}
\]
\[
\frac{\sigma_{w}^{2}}{2\pi \qu^{\ast}\sqrt{1-\left(\cu^{\ast}\right)^{2}}}\sum_{i=1}^{N-1}\sum_{j=1}^{N-1}\exp\left[-\frac{1}{2}\frac{g_{i}^{2}}{\qu^{\ast}}\right]\exp\left[-\frac{1}{2}\frac{\left(g_{j}-\cu^{\ast}g_{i}\right)^{2}}{\qu^{\ast}\left(1-\left(\cu^{\ast}\right)^{2}\right)}\right]h_{i}h_{j}
\]
which can be simplified to:
\begin{equation}\label{eq:genq_chi_sup}
\chi=\frac{\sigma_{w}^{2}}{2\pi \qu^{\ast}\sqrt{1-\left(\cu^{\ast}\right)^{2}}}\sum_{i=1}^{N-1}\sum_{j=1}^{N-1}h_{i}h_{j}\exp\left[-\frac{g_{i}^{2}-2\cu^{\ast}g_{i}g_{j}+g_{j}^{2}}{2\qu^{\ast}\left(1-\left(\cu^{\ast}\right)^{2}\right)}\right].
\end{equation}

\section{The general quantized activations mapping- Approximation and numeric evaluation}\label{sup:genq_mapping_approx}
\subsection{The covariance mapping of a general quantized activation}
We once again use the hidden states covariances $\Q$,$\C$
Using eq. \ref{eq:M} for general quantized activation, we get the expression:
\[
\C^{(l)}\Q^{(l)}=\underset{u_{1},u_{2}\sim\mathcal{N}(0,\Sigma(\qu^{(l)},\cu^{(l)}))}{\mathbb{E}}\left(\sum_{i=1}^{N-1}h_{i}H\left(u_{1}-g_{i}\right)-A\right)\left(\sum_{i=1}^{N-1}h_{i}H\left(u_{2}-g_{j}\right)-A\right)-\left(\mu^{(l)}\right)^{2},
\]
where we can use eq. \ref{eq:genq_mu} and expand it to:
\[
\C^{(l)}\Q^{(l)}=\sum_{i=1}^{N-1}\sum_{j=1}^{N-1}\left(\underset{u_{1},u_{2}\sim\mathcal{N}(0,\Sigma(Q^{l},C^{l}))}{\mathbb{E}}\left[H\left(u_{1}-g_{i}\right)H\left(u_{2}-g_{j}\right)\right]-\Phi\left(\frac{-g_{i}}{\sqrt{\qu^{(l)}}}\right)\Phi\left(\frac{-g_{j}}{\sqrt{\qu^{(l)}}}\right)\right).
\]
When the offsets are different than zero, there is no exact solution for the expectancy when $u_1,u_2$ are correlated. Article \cite{wu2018deterministic} suggests an approximation for finding $\M(\C)$, when $C(\C)=\frac{Q^{\ast}C\sigma_w^2+\sigma_b^2}{Q^{\ast}\sigma_w^2+\sigma_b^2}$:
\begin{equation}\label{eq:approx_genq_M}
\begin{array}{c}
\M(\C)\simeq\frac{\text{arcsin}(\cu^{*})}{2\pi Q^{*}}\sum_{i=1}^{N-1}\sum_{j=1}^{N-1}h_{i}h_{j} \cdot \\
\exp\left(-\frac{1}{2}\frac{\cu}{\text{arcsin}(\cu)\qu^{\ast}\sqrt{1-\cu^{2}}}\left(g_{i}^{2}+g_{j}^{2}-g_{i}g_{j}\frac{2\cu}{1+\sqrt{1-\cu^{2}}}\right)\right)
\end{array}
\end{equation}
We found the approximation to hold well in the area $\cu \sim 0$ , and $\forall i,g_i < \qu^{\ast}$. Therefore, when $\qu^{\ast}$ is known, this equation can be used to evaluate $\cu^{\ast}$  with reduced complexity.

\subsection{Quick numeric method to approximate the fixed point slope, for  $\sigma_b>0$}\label{alg:approx_chi}
Using eq. \ref{eq:approx_genq_M}, we suggest a numeric algorithm to evaluate the fixed point slope for $\sigma_b>0$, for any quantized activation function:

\begin{enumerate}
\item Evaluate  $\qu$ by iterative usage of eq. \ref{eq:genq_q}. Start with arbitrary value $\Q= 1.0$ and repeat $T$ times. 
\item Use eq. \ref{eq:approx_genq_M} to evaluate $\M(\C=0)$ (Reminder: $\cu(\C=0)=\frac{\sigma_{b}^2}{\Q \sigma_{w}^2+\sigma_{b}^2}$)
\item Use eq. \ref{eq:genq_chi} to evaluate $\chi(\C=0)$ 
\item Estimate $\C^{\ast}$ by $\frac{\cu(\C=0)}{1-\chi(\C=0)}$(First order approximation), and use equation  \ref{eq:genq_chi} to find the fixed point slope.
\end{enumerate}

We found this algorithm to be very efficient and accurate when studying the dynamics in the area of  $\sigma_b>0$. Results of using this estimation are displayed in figure \ref{fig:init_grid}.

\begin{figure}[t!]
    \centering
    \begin{subfigure}[]{1.0\textwidth}
        \centering
        \includegraphics[height=2.0in]{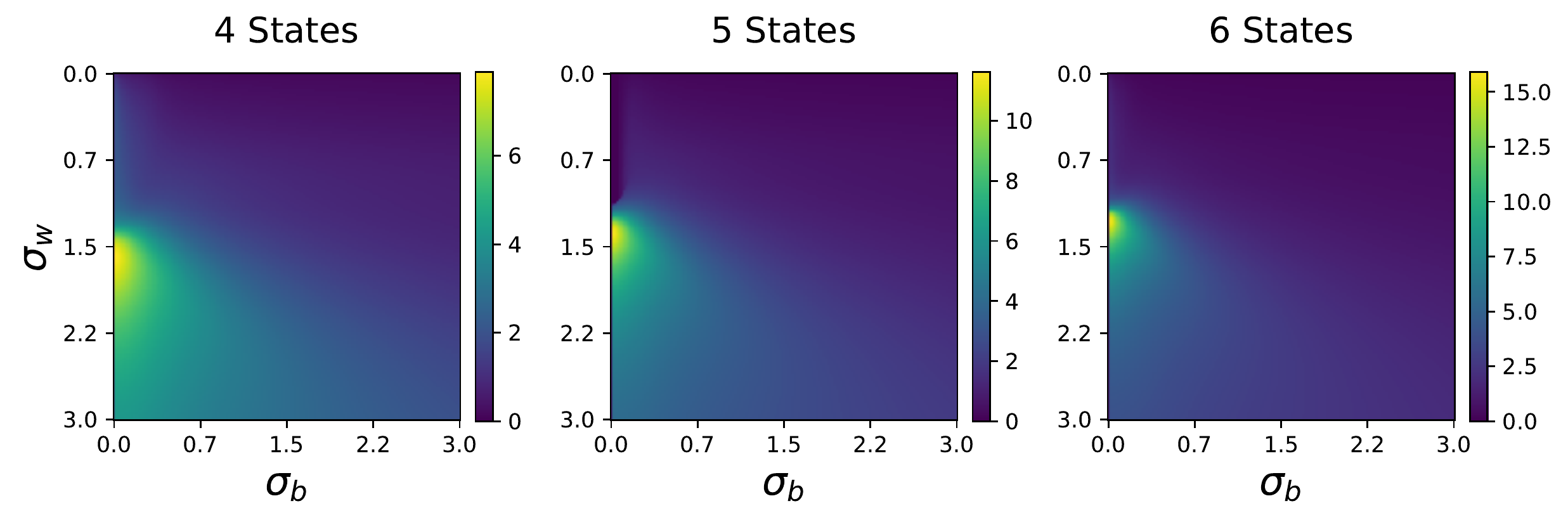}
    \end{subfigure}%
    \\
    \begin{subfigure}[]{1.0\textwidth}
        \centering
        \includegraphics[height=2.0in]{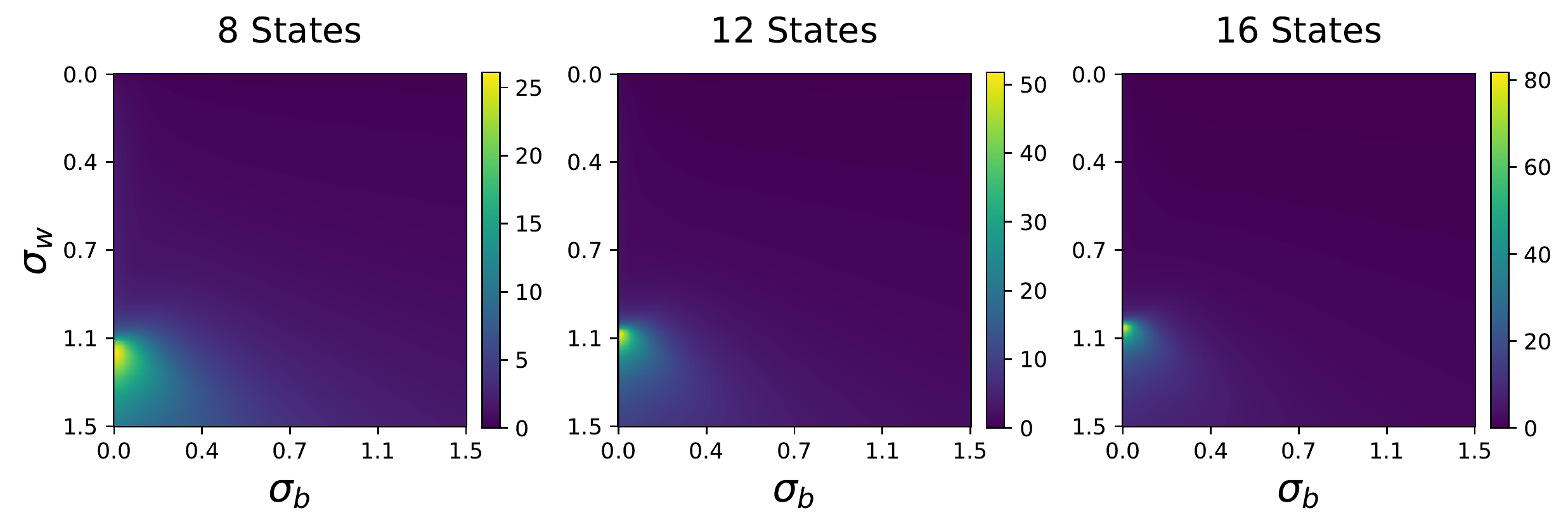}
    \end{subfigure}%
    \caption{Grid-Approximation of the depth scale $\xi$  for constant-spaced activations of different quantization levels, as a function of the initialization parameters. $D=1$ was used as the constant space between offsets. For this approximation, we used the algorithm described in \ref{alg:approx_chi}. It is apparent that the maximal depth scale for all quantization levels is achievable for $\sigma_b \simeq 0$.}
    \label{fig:init_grid}
\end{figure}


\section{Beyond constant-spaced quantized activations}\label{sup:beyond_linspace}
Our main focus in this article, have been the quantized activations with constant spacing. We now want to study the effects of using more complex activation functions on the dynamics of the network. We will do so by defining a new family of quantized activation functions, the linear-spacing activations- For any given values of $h,c_1>0,c_2\in\mathbb{R}$, the  function parameters in accordance with equation \ref{eq:genq_def}, are: 
\begin{equation}\label{eq:def_square_spacing}
\begin{array}[]{cc}
\forall i\in\{1,..,N-1\}, m\equiv \left(k-\frac{N}{2}-2\right) ,h_{i}=h,
\\ \tilde{g}_i=\tilde{D}_{0}m\left(1+\tilde{D}_{1}|m|\right)
\end{array}
\end{equation}
This family of functions can be thought of a second order generalization of the constant-spaced functions, which correspond to the special case $\tilde{D}_1=0$. This family of functions is important, as it also includes sigmoid-like quantized activation functions (given for values of $\tilde{D}_1>0$). To evaluate the dynamics of the new family, we again use eq. \ref{eq:genq_chi_b0} and the depth scale definition eq. \ref{eq:xi}, and run a grid search over the normalized values of $\tilde{D}_1$, $\tilde{D}_0 $, calculating the depth scale for each combination of parameters. The results of the grid search for several different quantization levels are presented in Figure \ref{fig:square_spacing}. In all of the tested activations, the maximal depthscale that we found was identical, within numeric error range, to the maximal depthscale found for constant-spaced activations, indicating that the additional degree of freedom does not help improving the dynamical properties of the activation.

\begin{figure}[t!]
    \centering
    \begin{subfigure}[]{1.0\textwidth}
        \centering
        \includegraphics[height=2.0in]{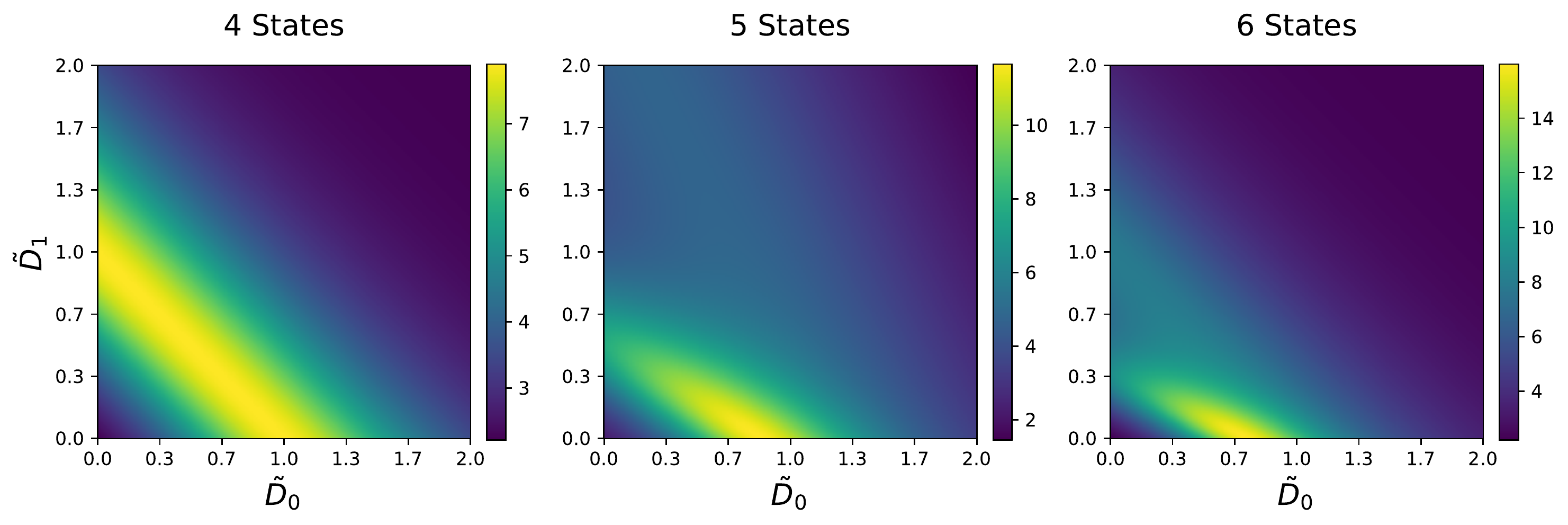}
    \end{subfigure}%
    \\
    \begin{subfigure}[]{1.0\textwidth}
        \centering
        \includegraphics[height=2.0in]{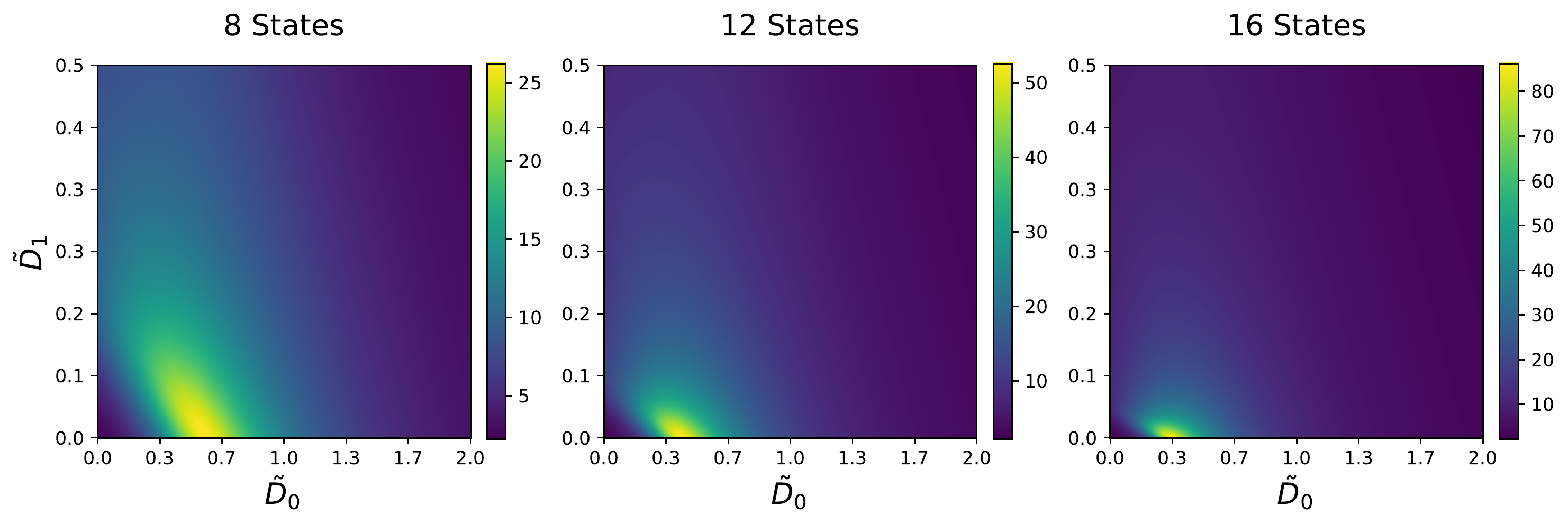}
    \end{subfigure}%
    \caption{Evaluation of the depth scale $\xi$ for linear-spaced quantized activation, with the initialization $\sigma_b=0$. The search resolution is $1000 \times 1000$ for each quantization level. The maximum depth-scale on each grid for square spacing activations is always achievable for the constant spacing as well, where $\tilde{D}_1=0$.}
    \label{fig:square_spacing}
\end{figure}

\section{Additional MNIST training-results}\label{sup:MNIST}

When studying the empirical effects of the initialization parameters on trainability when using a 10 states quantization, and seen that the longest trainable network is achieved when using the $\tilde{D}_{opt}$, the optimal normalized distance between offsets, as proposed by our theory. Additional test have been made to other quantization levels as well and gave similar results. It is unclear from the results, however, whether the degradation of deep networks is caused by the unoptimized propagation of the forward pass, or by the unoptimized backward pass. To isolate the effects of the forward pass which are of more interest to us, we measured the effects of $\sigma_w$  on a 10 states quantization once more, but optimized the STE to allow clean gradient propagation using  $\rho^{-1}=\sigma_{w}\sqrt{\text{erf}\left(\frac{1}{\sqrt{2\qu^{\ast}}}\right)}$, when using $\sigma_w$  and $\qu^{\ast}$ based on each run's initialization values. Figure \ref{fig:adapative_slope}  shows the results of this experiment, and confirms that the optimal initialization is dominated by the forward pass. 

\begin{figure}[t!]
    \centering
    \begin{subfigure}[]{1.0\textwidth}
        \centering
        \includegraphics[height=2.0in]{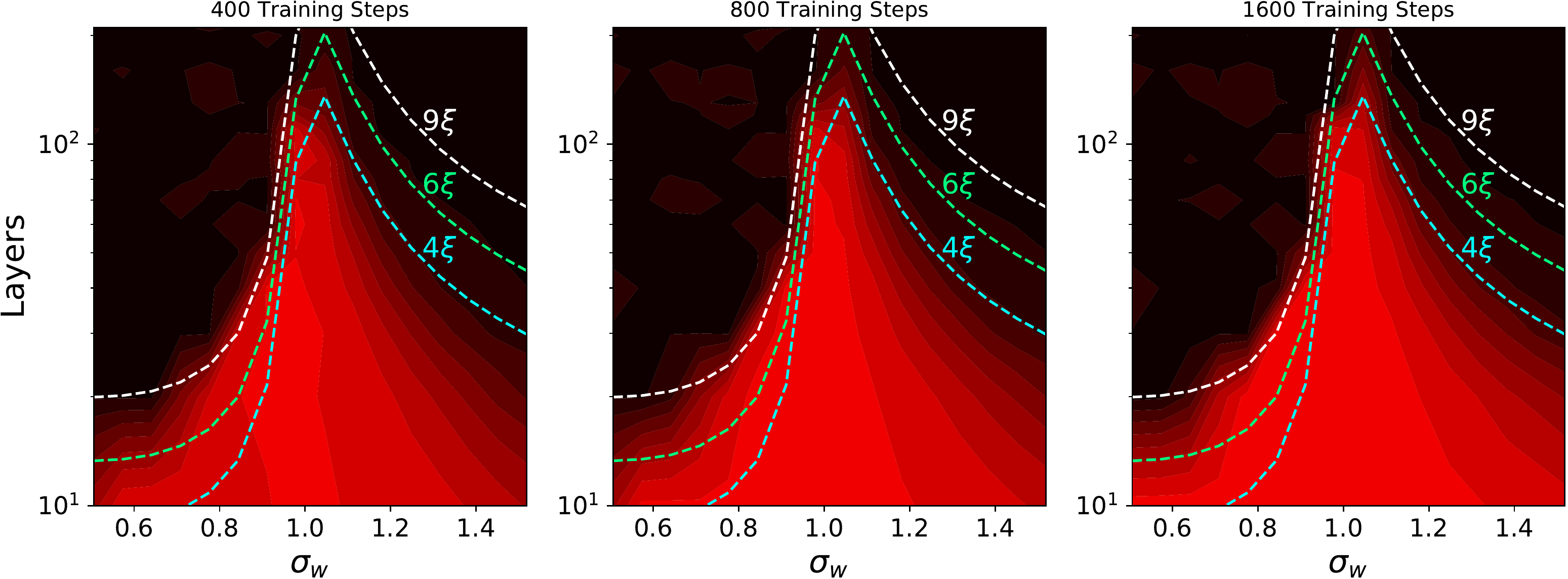}
    \end{subfigure}%
    \caption{Test accuracy of a 10-states activation in feed-forward network, over the MNIST data-set, with different initialization values and optimized STE for backward propagation of gradients. When compared with the \ref{fig:MNIST_experiment}, we can see that adjusting the networks for better backward propagation of the gradients does not have a significant effect on the trainability of deep networks.}
    \label{fig:adapative_slope}
\end{figure}

\begin{figure}[t!]
    \centering
    \begin{subfigure}[]{1.0\textwidth}
        \centering
        \includegraphics[height=2.0in]{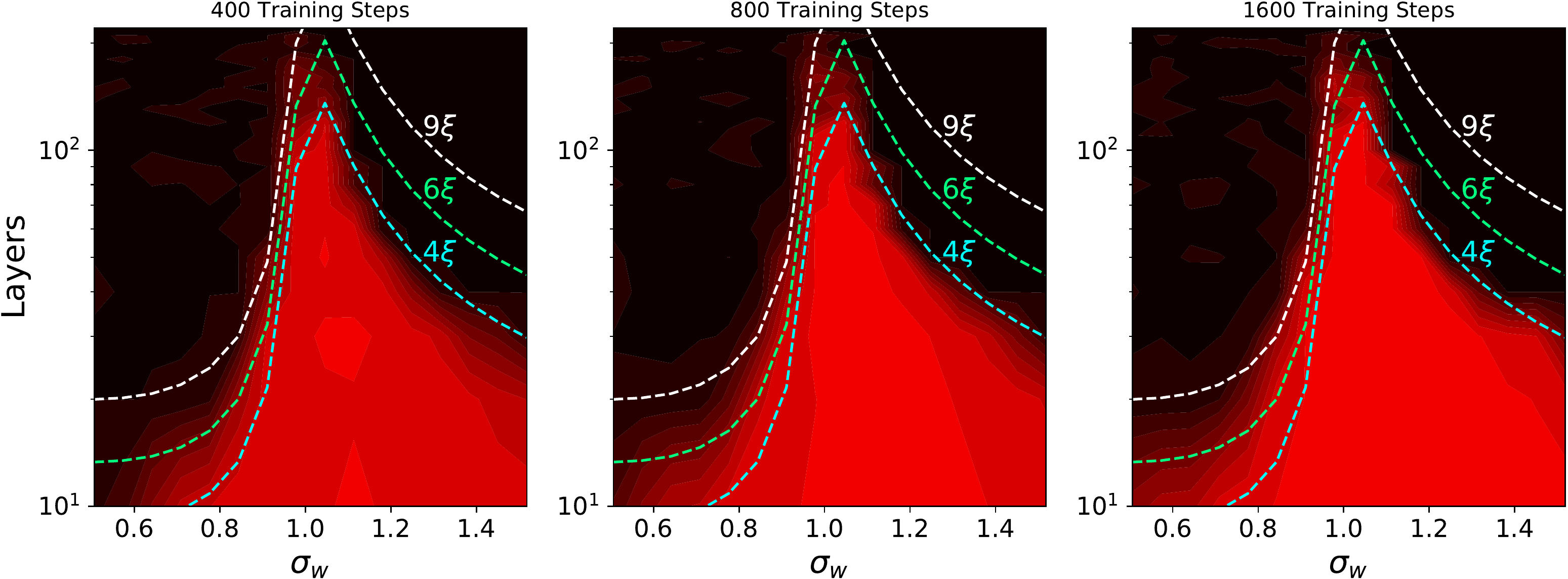}
    \end{subfigure}%
    \\
    \begin{subfigure}[]{1.0\textwidth}
        \centering
        \includegraphics[height=2.0in]{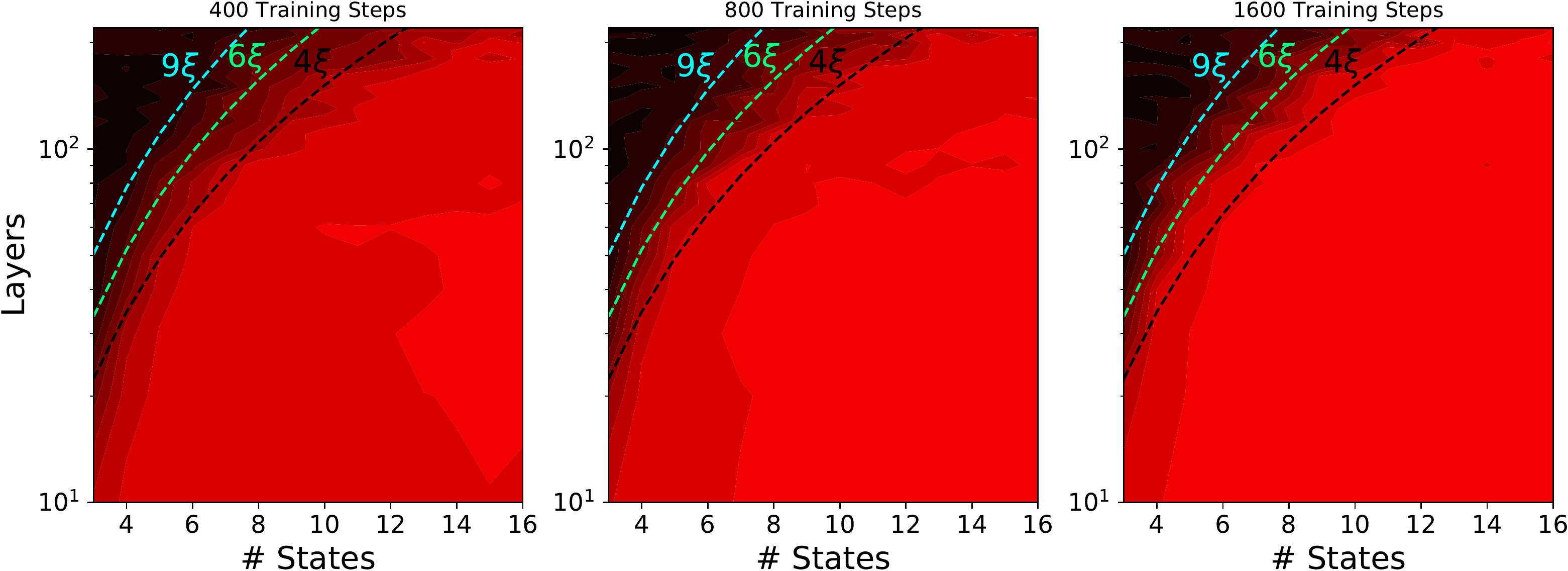}
    \end{subfigure}%
    \\
    
    \begin{subfigure}{.45\textwidth}
        \centering
        \includegraphics[height=2.0in]{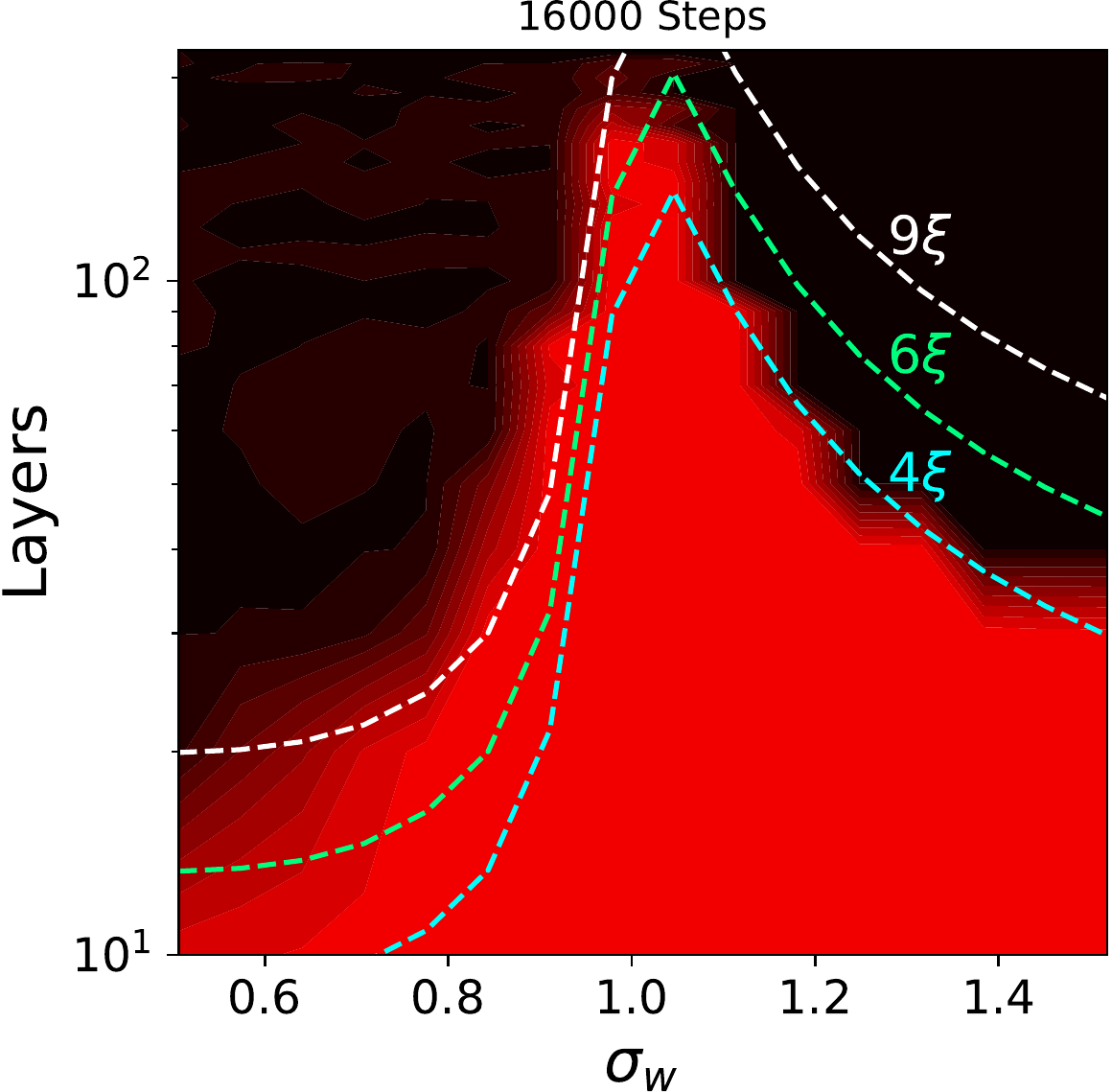}
    \end{subfigure}%
    \begin{subfigure}{.45\textwidth}
        \centering
        \includegraphics[height=2.0in]{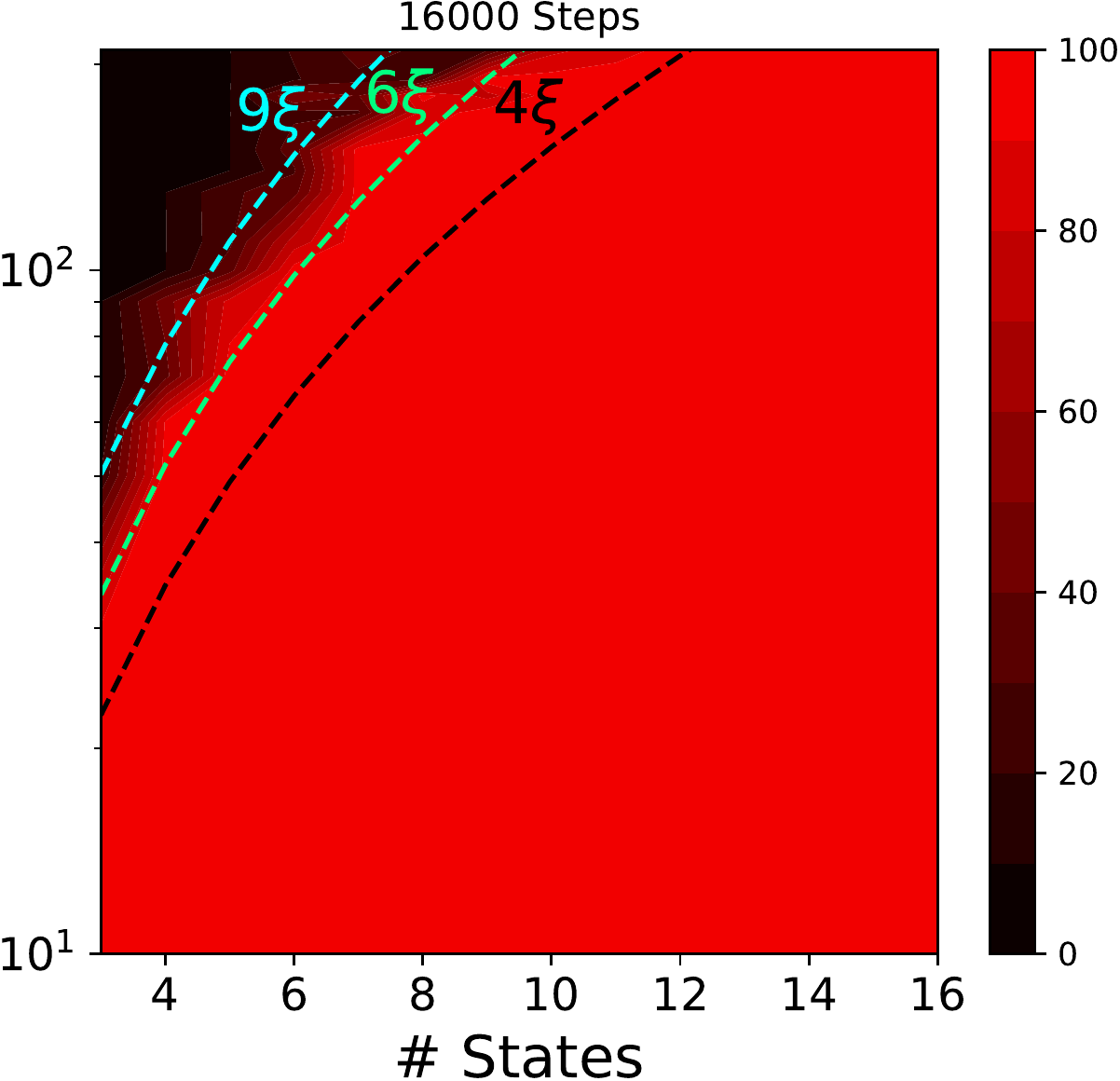}
    \end{subfigure}%

    \caption{Time evolution of the test accuracy. Line 1\&2: The evolution of the heat maps presented in figure \ref{fig:MNIST_experiment}, at an early stage of training (Training accuracy) . Bottom line: Test accuracy at an advanced stage of training (16000 steps), for the same deployment. Those results align with the results of \cite{schoenholz2016deep}, showing that even in a late stage of training, networks with layers exceeding $\sim 6\xi$ are untrainable. } 
    \label{fig:MNIST_time_evolution}
\end{figure}

\section{Simplified Optimization of the initialization parameters}\label{sup:practition}
Sections \ref{main:genq} describes an algorithm for computation of the value of the initialization parameter $\sigma_w$, that would allow the best signal propagation in the network for any quantized activation function. However, when dealing with the constant spaced activation functions of the form:
\[
\phi_{N}(x)=-1+\sum_{i=1}^{N-1}\frac{2}{N-1}H\left(x-\frac{2}{N-1}\left( i - \frac{N}{2} \right)\right),
\]
we find that our suggested method of initialization quickly converges to the \textit{Xavier initialization} \cite{glorot2010understanding}, as the quantization levels increases. For simple initialization, we suggest a small modification for the Xavier method that accounts for quantization: When $F_{in}$ and $F_{out}$ are the fan-in and fan-out of the layer, rather than simply computing the standard error for weights initialization using $\sigma_w=\sqrt{\frac{2}{F_{in}+F_{out}}}$ as in the case of normal Xavier, we suggest that using a factor of \[
\alpha_{N}=1+\frac{1.23}{\left(N+0.2\right)^2}
\] (when $N$ is the number of activation states), so that:
\[
\sigma_w=\alpha_{N} \cdot \sqrt{\frac{2}{F_{in}+F_{out}}}
\]
 We see that for the continuous case, our activation function becomes hard-tangent and our factor becomes $\lim_{N\to\infty}\alpha_{N}=1$. $\alpha_{N}$ was estimated by computing the value $\sigma_w$ that ensures $\frac{D}{\sqrt{\qu}}=\tilde{D}_{\mathrm{opt}}$ for states ranging from 1 to 128, and fitting the results $\sigma_w(N)$ to the function $1+\frac{a}{(N-b)^2}$, which behaved accordingly. For the case where the number of states is larger than 128, the factor $\alpha_N$ is small enough for the error to be irrelevant. Figure \ref{fig:Xavier} shows a comparison between the standard Xavier and our modified initialization for 3-states activation, where $\alpha_N$ is at it's peak.

\begin{figure}[h!]
    \parbox[t]{1.0\textwidth}{\null
        \begin{subfigure}{.45\textwidth}
            \centering
        	\includegraphics[width=\linewidth]{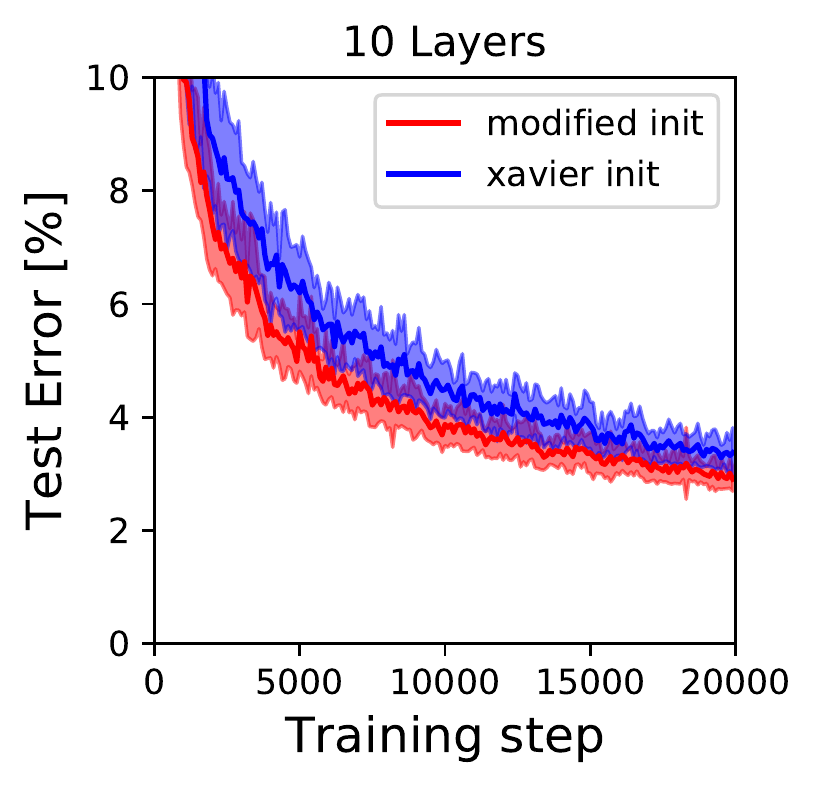}
        \end{subfigure}%
        \begin{subfigure}{.45\textwidth}
            \centering
        	\includegraphics[width=\linewidth]{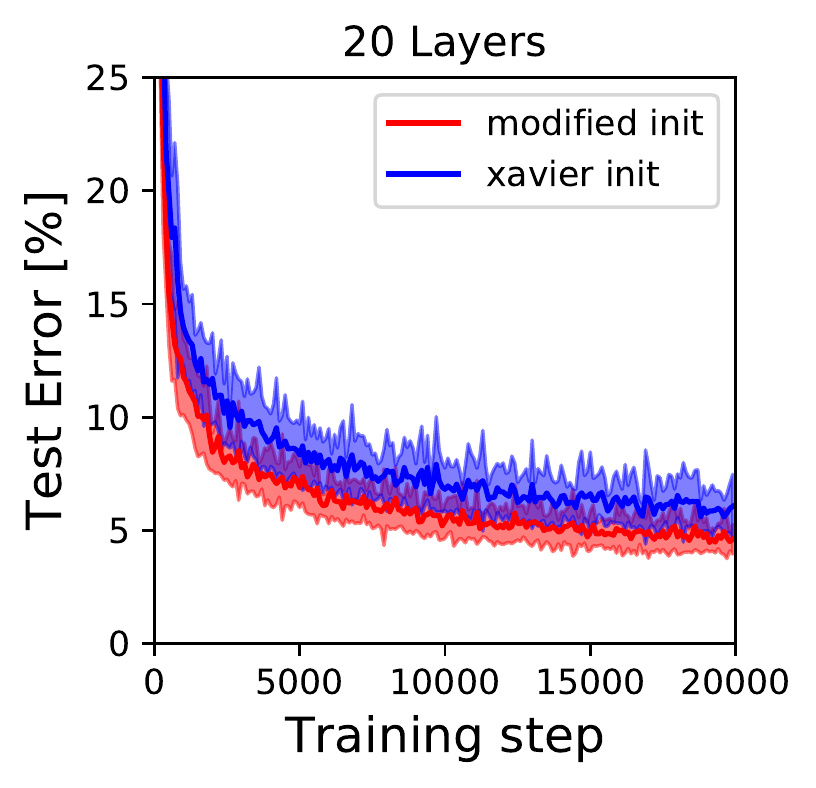}
        \end{subfigure}%
        \\
        \begin{subfigure}{.45\textwidth}
            \centering
        	\includegraphics[width=\linewidth]{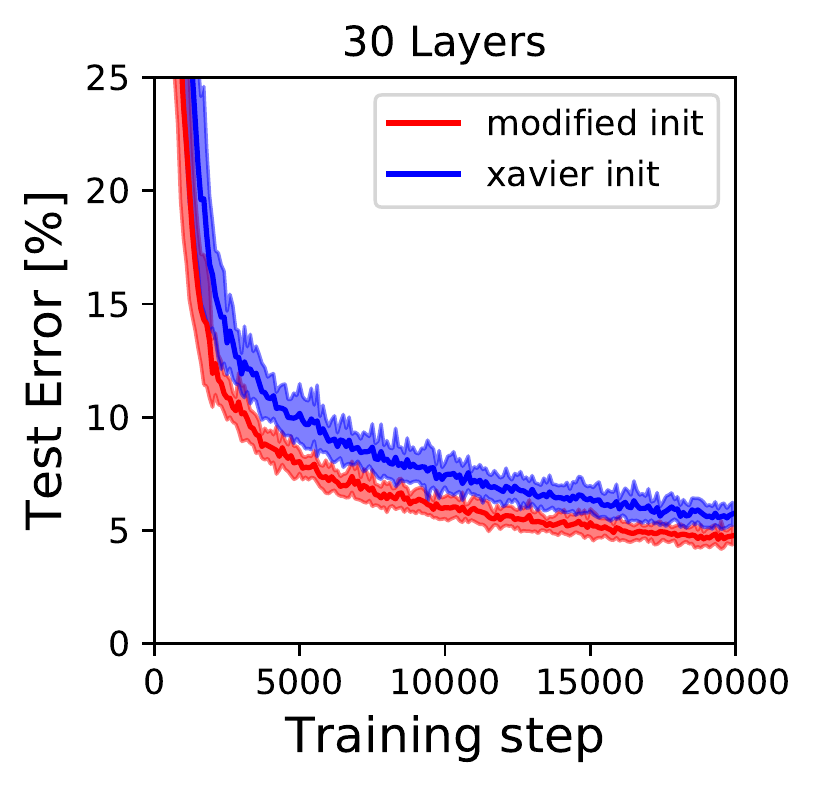}
        \end{subfigure}%
        \begin{subfigure}{.45\textwidth}
            \centering
        	\includegraphics[width=\linewidth]{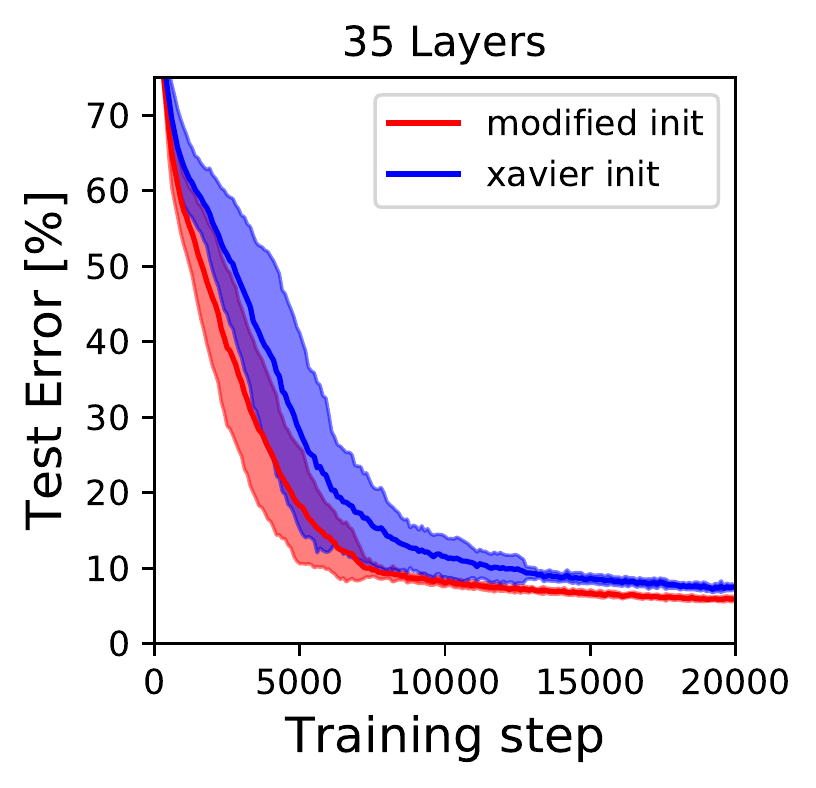}
        \end{subfigure}%
        \\
        \begin{subfigure}{.45\textwidth}
            \centering
        	\includegraphics[width=\linewidth]{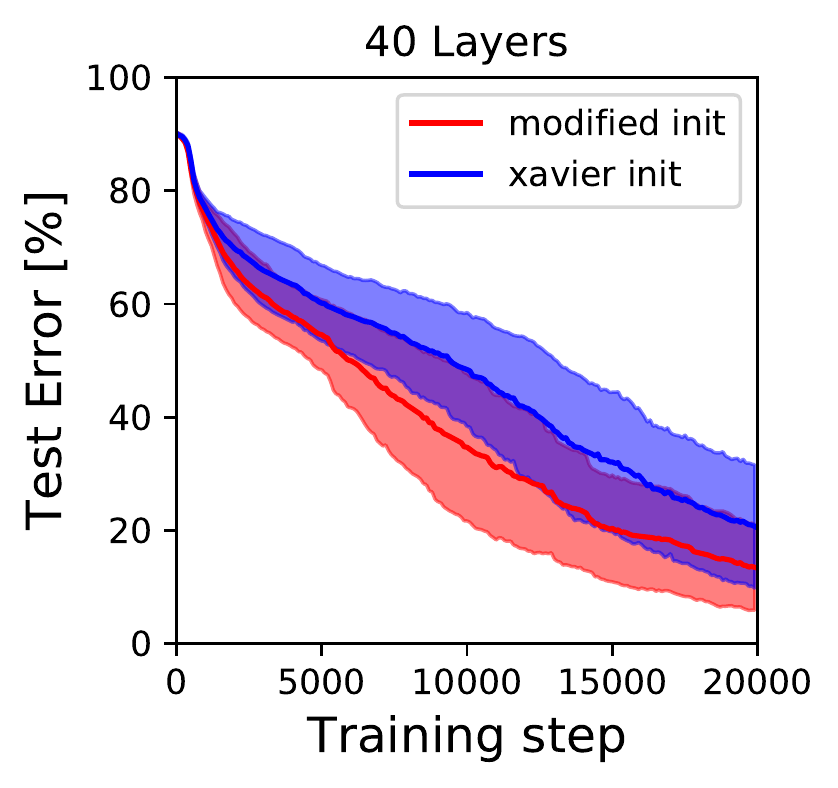}
        \end{subfigure}%
        ~
        \hspace{0.1cm}
        \parbox[t]{0.45\textwidth}{\null
        \centering
        \vspace{-2.7cm}
        \captionof{table}[t]{\textbf{Results Summary}}\label{table:sparse comparison}
        \begin{tabular}{|c|c|c|}
            \hline
            \\[-1em]
             & \multicolumn{2}{c|}{Test Error (Mean)}  \\
        	\hline
            \\[-1em]
            \textbf{Layers} & \textbf{Xavier} & \textbf{Modified}  \\
            \hline
            \\[-1em]
            10 & $3.3\pm0.4\%$ & $2.9\pm0.2\%$  \\
            \hline
            \\[-1em]
            20 & $5.6\pm1.4\%$ & $4.5\pm0.7\%$ \\
            \hline
            \\[-1em]
            30 & $5.6\pm0.5\%$ & $4.6\pm0.4\%$  \\
            \hline
            \\[-1em]
            35 & $7.2\pm0.4\%$ & $5.9\pm0.4\%$ \\
            \hline
            \\[-1em]
            40 & $21\pm11\%$ & $13.5\pm7.5\%$  \\
            \hline
        \end{tabular}
        }
        
        \captionof{figure}{Comparison of our suggested initialization with the Xavier Gaussian initialization, for MNIST training using a 3-states quantized activation for layer numbers near the depth scale $6\xi_{max}\simeq\mathbf{37}$. For each number of layers and initialization, we used a grid search to find best learning rate from the values $[0.25,0.5,1,2,4,10] \times 10^{-3}$, with all other run parameters as described in the experimental part of section \ref{sec:experiment}. We ran 25 seeds using that learning rate, and the plot describes the mean and standard error of the test accuracy, at every step. In all cases, our suggested modification outperforms Xavier initialization by a small margin. With 40 layers, the network depth exceeds the theoretical depth scale, and all trainings fail under the 20000 steps limitation.}
        \label{fig:Xavier}
    }
    \vspace{-0.3cm}
\end{figure}

\section{Backwards signal propagation for straight through estimator}\label{sup:backward}

While we use quantized activations for the forward pass, the backward propagation of quantized neural networks is, in our case, done by straight through estimators (STE). When using constant-spaced quantized activations, we choose a STE to imitate the backward pass of the hard-tanh function:

\[
\phi_{\rho}(x)=\begin{cases}
-\rho^{-1} & x<-1\\
x\rho & -1\leq x\leq1\\
\rho^{-1} & x>1
\end{cases}
\]
where $\rho>0$ is a parameter that controls the slope of the hard-tanh, so the backward equation is determined by the derivative:
\begin{equation}
\phi_{\rho}^{\prime}(x)=\begin{cases}
\rho & \left|x\right|<1\\
0 & \text{else}
\end{cases}
\end{equation}
The moments of a random $N\times N$ matrix \textbf{$\overline{A}$} are given by $m_{\mathbf{\overline{A}}}^{(i)}=\frac{1}{N}\mathbb{E}\text{tr}\left(\mathbf{\overline{A}}^{i}\right)$.
In the case of eq. \ref{eq:Mjj}, and our STE $\phi_{\rho}$, the equation is reduced to

\[
m_{\mathbf{J}\mathbf{J}^{T}}^{(1)}=\frac{1}{N}\mathbb{E}\text{tr}\left(\mathbf{\phi_{\rho}^{\prime}(\mathbf{u}^{\ast})}\mathbf{W}\left(\mathbf{\phi_{\rho}^{\prime}(\mathbf{u}^{\ast})}\mathbf{W}\right)^{}\right)
\]

where $u_{i}^{\ast}\sim\mathcal{N}(0,\qu^{\ast})$ .i.d and
$\mathbf{D}_{\phi'(\mathbf{u}^{\ast})}$ is a diagonal matrix with
$\phi'(\mathbf{u}^{\ast})$ on the diagonal. This gives

\[
m_{\mathbf{J}\mathbf{J}^{T}}^{(1)}=\sigma_{w}^{2}\int\left(\phi_{\rho}^{\prime}(\sqrt{\qu^{\ast}}z)\right)^{2}\mathcal{D}z
\]
where $\mathcal{D}z=\frac{1}{\sqrt{2\pi}}\exp\left({\frac{-z^2}{2}}\right)$.
Then obtain:
\begin{equation}
m_{\mathbf{J}\mathbf{J}^{T}}^{(1)}=\sigma_{w}^{2}\rho^{2}\underset{-1/\sqrt{\qu^{\ast}}}{\overset{1/\sqrt{\qu^{\ast}}}{\int}}\mathcal{D}u=\sigma_{w}^{2}\rho^{2}\text{erf}\left(\frac{1}{\sqrt{2\qu^{\ast}}}\right).\label{eq:m1_def}
\end{equation}

Assuming we already have the value $\sigma_w$, $\qu^{\ast}$, we can set $\rho^{-1}=\sigma_{w}\sqrt{\text{erf}\left(\frac{1}{\sqrt{2\qu^{\ast}}}\right)}$ to ensure $m_{\mathbf{J}\mathbf{J}^{T}}=1$, and thus avoid vanishing and exploding gradients. In our main results, we avoided modifying the STE parameter $\rho$ in order to keep the experiment simple, and used the trivial STE using $\rho=1$.

\section{Comparing convergence in $C$ and $Q$ directions}\label{sup:CQcomparison}

In previous papers studying signal propagation in feed-forward networks \cite{poole2016exponential, schoenholz2016deep, xiao2018dynamical}, it has been argued that the convergence in $Q$ direction is significantly faster than the convergence in the $C$ direction. Under this assumption, one can derive the approximate depth-scale by analyzing convergence in the $C$ direction only. The claim was established using empirical evidence \cite{poole2016exponential} and using an approximated Taylor expansion of the activation function \cite{schoenholz2016deep}, by showing that the slope $\chi_c$ at $C^{\ast}=1$ is always larger than the slope $\chi_q$ at $Q^{\ast}$. In our case, however, it is invalid to assume that the Taylor expansion of the quantized activation is correctly approximating the function behaviour, and either way $C^{\ast}=1$ is an infinitely unstable fixed point and the convergence there can not be used as a baseline for comparison with the convergence in the $Q$ direction. It is therefore necessary to assert that this assumption holds for quantized activations as well. We will start by comparing $\chi_c$,$\chi_q$ analytically for general quantized activation function in the limit where the $\sigma_w$ is very small or very large, show that our assumption may fail in the case of some nontrivial activation functions and provide empirical evidence that the condition $\chi_c(C=C^{\ast})>\chi_{q}(Q=Q^{\ast})$ holds for trivial activation functions. 

First, we argue that it is sufficient to show that $\chi_c(C=C^{\ast})>\chi_q(Q=Q^{\ast})$ for the depthscale in the $C$ direction $(\xi_C)$ to be indicative of the full system-convergence. This is true because the mapping function of $Q$ is independent of the value of $C$. In the case of where $\chi_c(C=C^{\ast})=\chi_q(Q=Q^{\ast})$, we can, at the worst case, consider that $C$ will only start converging once $Q$ has converged, in which case the system would converge after a $K_c\xi_C+K_q\xi_q$ where $K_c,K_q$ are some constants.

Going back to eq. \ref{eq:genq_chi}, using $\tilde{g_{i}}=\frac{g_{i}}{\sqrt{Q}}$, and picking the minimal value of $C=0$ ($\mathcal{M}(C)$ is convex) :
\begin{equation}\label{eq:sup_c_convergence}
\begin{array}{c}
\chi_c>\frac{\sigma_{w}^{2}}{Q}\sum_{i=1}^{N-1}\sum_{j=1}^{N-1}h_{i}h_{j}\frac{1}{2\pi}\exp\left[-\frac{\tilde{g_{i}}^{2}+\tilde{g_{j}}^{2}}{2}\right]=  \\

\frac{\sigma_{w}^{2}}{Q}\sum_{i=1}^{N-1}\sum_{j=1}^{N-1}h_{i}h_{j}\phi\left(\tilde{g_{i}}\right)\phi\left(\tilde{g_{j}}\right)  
\end{array}
\end{equation}

We do a similar derivation for the mapping of $Q$. From eq. \ref{eq:genq_q_sup}, using $\frac{d\Phi(\frac{a}{\sqrt{x}})}{dx}=\frac{-a}{2x^{3/2}}\phi\left(\frac{a}{\sqrt{x}}\right)$, and denoting $G_{i,j}^{+}=\max(\tilde{g_{i}},\tilde{g_{j}}),G_{i,j}^{-}=min(\tilde{g_{i}},\tilde{g_{j}})$ we get that:
\begin{equation}\label{eq:sup_q_convergence}
\chi_q=\frac{d\mathcal{M}(Q)}{dQ}=\frac{\sigma_{w}^{2}}{2Q}\sum_{i=1}^{N-1}\sum_{j=1}^{N-1}h_{i}h_{j}\left[G_{i,j}^{+}\phi\left(G_{i,j}^{+}\right)\Phi\left(G_{i,j}^{-}\right)-G_{i,j}^{-}\phi\left(G_{i,j}^{-}\right)\Phi\left(-G_{i,j}^{+}\right)\right]
\end{equation}

Combining those results, we get that:
\begin{equation}
\frac{\chi_{q}}{\chi_{c}}\leq\frac{1}{2}\frac{\sum_{i=1}^{N-1}\sum_{j=1}^{N-1}h_{i}h_{j}\left[G_{i,j}^{+}\phi\left(G_{i,j}^{+}\right)\Phi\left(G_{i,j}^{-}\right)-G_{i,j}^{-}\phi\left(G_{i,j}^{-}\right)\Phi\left(-G_{i,j}^{+}\right)\right]}{\sum_{i=1}^{N-1}\sum_{j=1}^{N-1}h_{i}h_{j}\phi\left(G_{i,j}^{+}\right)\phi\left(G_{i,j}^{-}\right)}
\end{equation}

From this result, we can immediately see that when taking $\sigma_w\to\infty$, resulting, $G_{i,j}^{-/+}\to0$, we get that $\frac{\chi_{q}}{\chi_{c}}\to0$, so $\chi_q \ll \chi_c$.

To analyze the behaviour of $\sigma_w\to0$, we will consider the \textbf{continuous} activation functions:
\begin{equation}\label{eq:sup_convergence_alpha}
\phi\left(x\right)=\begin{cases}
x\left(1-\alpha\left|x\right|\right) & \left|x\right|<A\\
1 & \text{else}
\end{cases}
\end{equation}
where for $\alpha=0$ we get an hard-tanh and for $\alpha>0$ we get a sigmoid like function. The derivative of this function is:
\begin{equation}
\phi^{\prime}\left(x\right)=\begin{cases}
1-2\alpha\left|x\right| & \left|x\right|<A\\
0 & \text{else}
\end{cases}.
\end{equation}
We also calculate the derivative  $\frac{\partial\mathcal{M}(Q)}{\partial Q}$ directly from eq. \ref{eq:QCsys} and get:

\begin{equation}
\chi_q(Q)=\frac{1}{Q}\sigma_{w}^{2}\mathbb{E}\left[\phi'\left(x\right)\phi\left(x\right)x\right]
\end{equation}

where $X~\mathbb{N}(0,Q)$. We will also use the previous result $\chi_c(C)>\chi_c(0)=\sigma_{w}^{2}\mathbb{E}\left[\phi'\left(x\right)^2\right]$. If we look at values where $\sigma_{w}$ is small, resulting small enough $Q$ so values outside the region $x<|A|$ can be ignored, and we get:
\begin{equation}
\chi_q=\frac{\sigma_{w}^{2}}{Q}\mathbb{E}\left[\phi^{\prime}\left(x\right)\phi\left(x\right)x\right] =\frac{\sigma_{w}^{2}}{Q}\mathbb{E}\left[\left(1-3\alpha\left|x\right|+2\alpha^{2}x^{2}\right)x^{2}\right]
\end{equation}
which sums up to:

\begin{equation}
\chi_q=\sigma_{w}^{2}\left(1-6\alpha\sqrt{\frac{2Q}{\pi}}+2\alpha^{2}Q\left(3!!\right)\right)=\sigma_{w}^{2}\left(1-6\alpha\sqrt{Q}\sqrt{\frac{2}{\pi}}+6\alpha^{2}Q\right)
\end{equation}

Similarly, 

\begin{equation}
\chi_c(C=0)=\sigma_{w}^{2}\mathbb{E}\left[\left(1-2\alpha E\left|x\right|\right)^{2}\right]=\sigma_{w}^{2}\left(1+4\alpha^{2}Q-4\alpha \sqrt{Q}\sqrt{\frac{2}{\pi}}\right).
\end{equation}

The condition $\chi_c\ge\chi_q$ therefore translates to:
\begin{equation}
1+4\alpha^{2}Q-4\alpha \sqrt{Q}\sqrt{\frac{2}{\pi}}>1-6\alpha\sqrt{Q}\sqrt{\frac{2}{\pi}}+6\alpha^{2}Q
\end{equation}

or simply $\alpha\sqrt{\frac{2}{\pi}}>\alpha^{2}\sqrt{Q}\left(3-\frac{4}{\pi}\right)$.
We can immediately see that in the non-trivial case of $\alpha < 0$, the activation functions will not comply with the condition ($Q$ can be infinitely small), and $Q$ may, indeed, converge slower than $C$. For $\alpha=0$, we can see that the convergence of $Q$ and $C$ is identical. For the case of $\alpha>0$,
we get the new condition $\alpha\sqrt{Q}<\frac{\sqrt{\frac{2}{\pi}}}{\left(3-\frac{4}{\pi}\right)}\simeq.462$

To see if this is true we need to estimate what is the region where our “small $Q$” assumption is valid. First, to keep the function continuous we can calculate $A=\frac{1-\sqrt{1-4\alpha}}{2\alpha}$, and we will check the condition in the case $\sqrt(Q)=\frac{A}{3}$ (so the probability of $x>A$ is small), giving us the condition $.462>\frac{1-\sqrt{1-4\alpha}}{6}>\frac{1}{6}$ which is always true.

To conclude the analytical analysis, we saw that for large values of $Q$ (when $\sigma_w$ is large) $\chi_q>\chi_c$ for quantized activation functions, and that for small $\sigma_w,Q$ we can expect the convergence rates to match on trivial continuous activation functions. To check the intermediate range and to verify those results for quantized activation, we numerically calculate the values of $\chi_q,\chi_c$ using equations   \ref{eq:sup_c_convergence},\ref{eq:sup_q_convergence}. Results of this experiment are shown in figure \ref{fig:convergence_CQ}.

\begin{figure}[t!]
    \parbox[t]{1.0\textwidth}{\null
        \centering
        \begin{subfigure}{0.9\textwidth}
            \centering
        	\includegraphics[width=\linewidth]{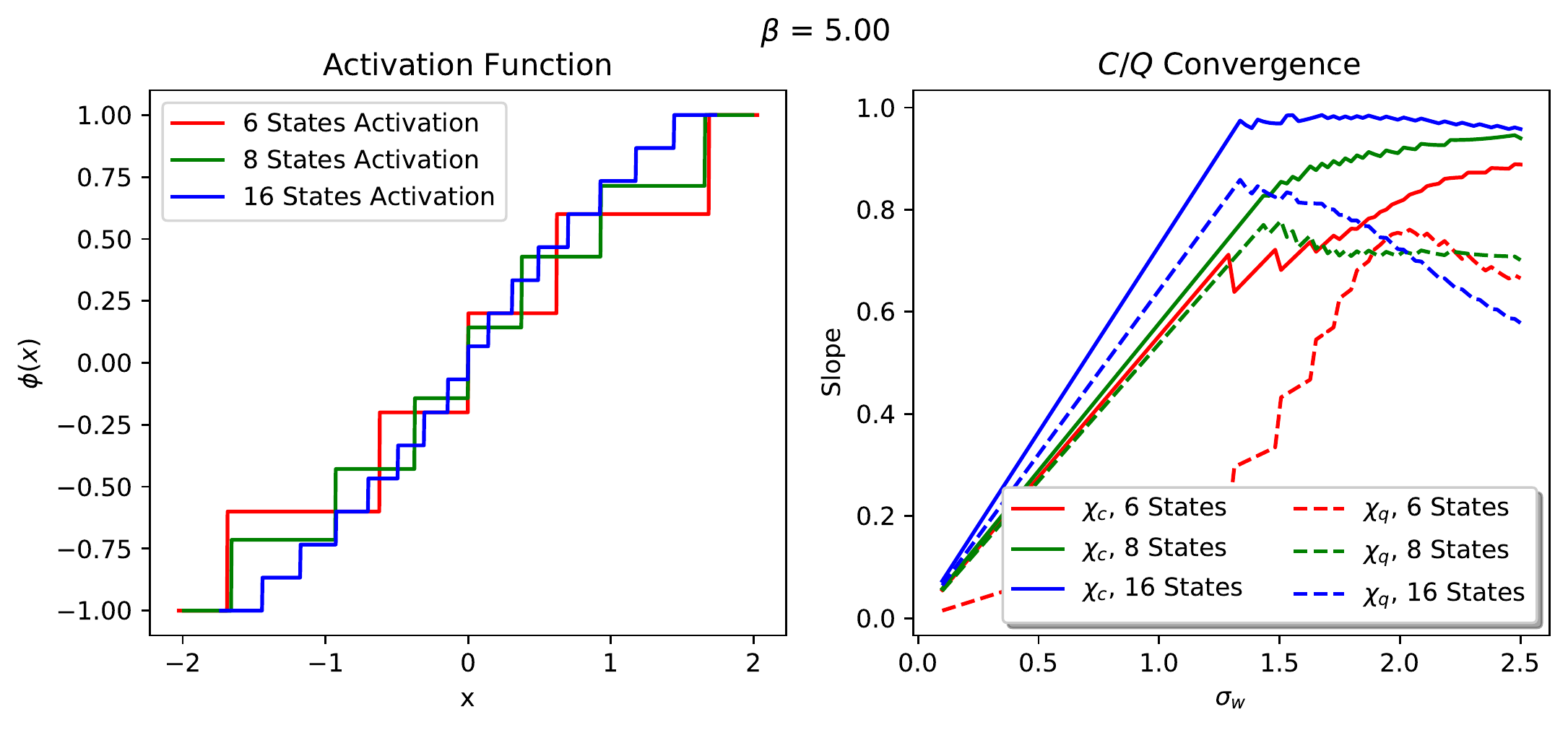}
        \end{subfigure}%

        \begin{subfigure}{0.9\textwidth}
            \centering
        	\includegraphics[width=\linewidth]{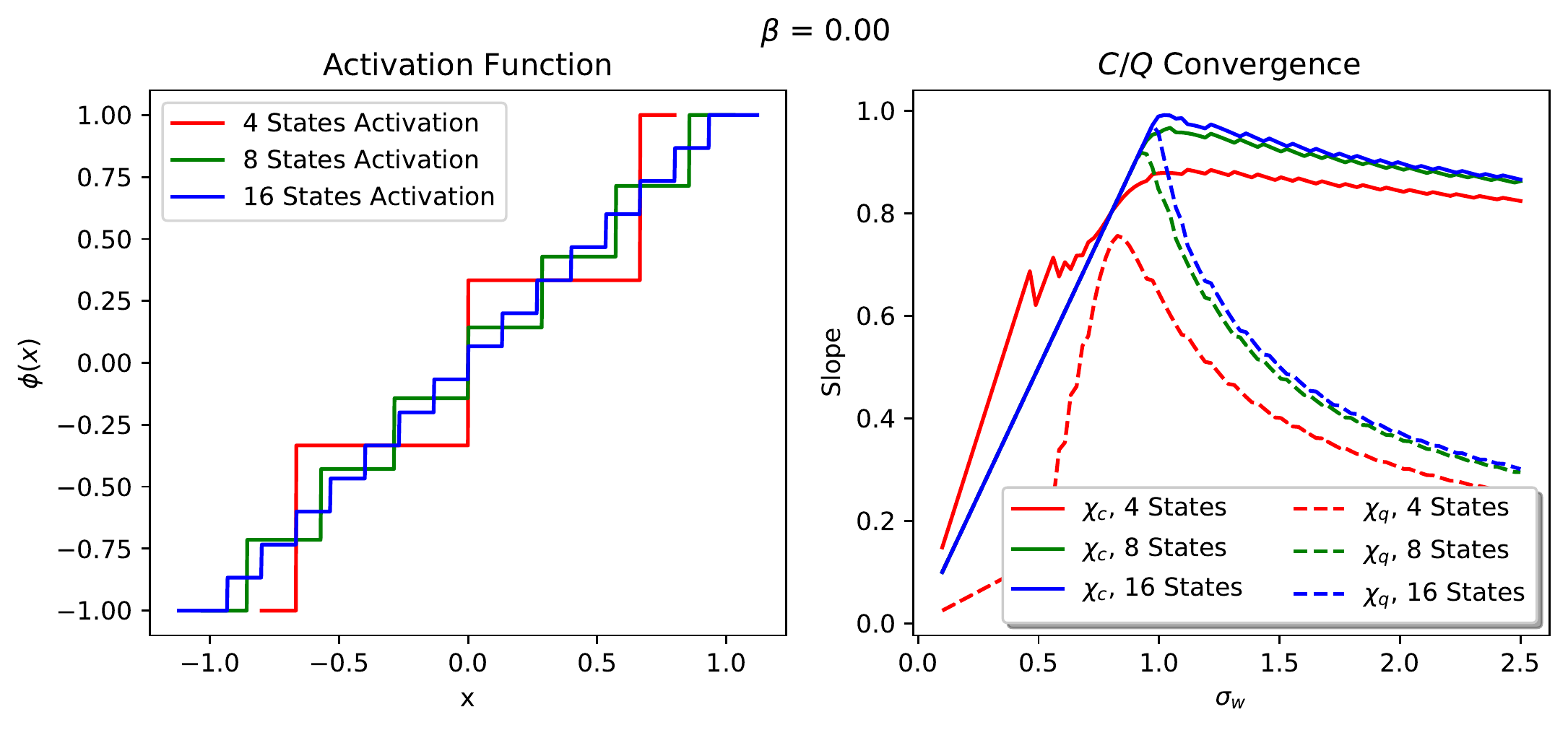}
        \end{subfigure}%

        \begin{subfigure}{0.9\textwidth}
            \centering
        	\includegraphics[width=\linewidth]{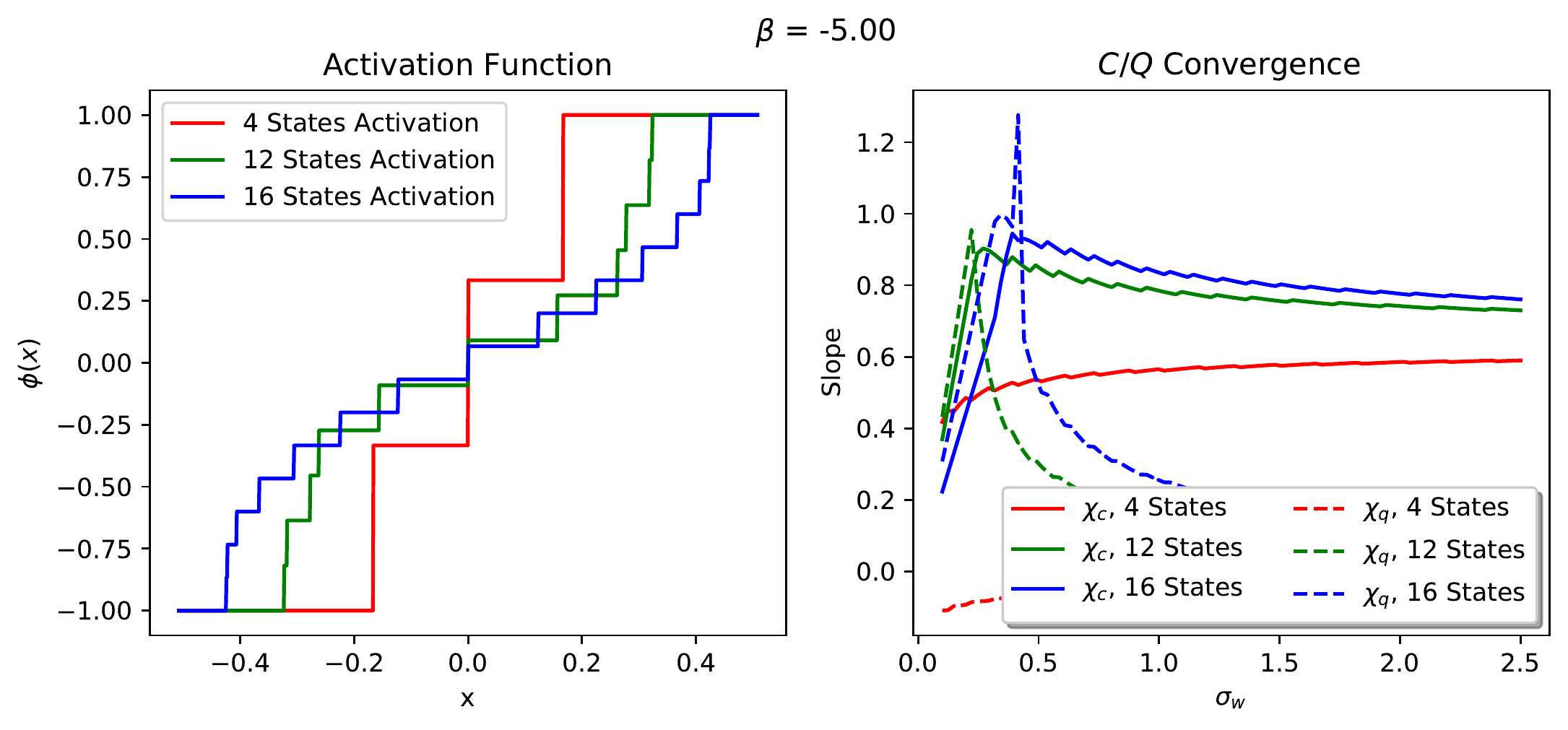}
        \end{subfigure}%
        ~
        \hspace{0.1cm}
        \parbox[t]{0.45\textwidth}{\null
        \centering

        }
        
        \captionof{figure}{Empirical comparison of the convergence rate (Fixed point slope) of $C$ and $Q$ for different quantized activation function, and varying hyperparameters ($\sigma_w$). The activation function's offsets, for each value of $\beta$ ($\beta \propto \alpha$ from eq. \ref{eq:sup_convergence_alpha}) and for given number of activation states, is calculated by $g_{i}=\frac{2}{n-1}\left(i-\frac{n}{2}\right)\left(1+\frac{\beta}{n^{2}}\left|i-n/2\right|\right)$. In accordance with our theoretical derivation, $\lim_{\sigma_{w}\to\infty}\frac{\chi_{q}}{\chi_{c}}=0$  and $\lim_{\sigma_{w}\to0}\frac{\chi_{q}}{\chi_{c}} \le 1$ if $\alpha \ge 0$. Our results also show that the gap between $\chi_c$ and $\chi_q$ is generally wider when the number of activation states is low, and that the condition $\chi_{c}>\chi_{q}$ holds for all hyperparameters for all trivial activation functions ($\beta>0$)}
        \label{fig:convergence_CQ}
    }
\end{figure}

\vspace{2.0in}

\section{Additional Proofs} 
\subsection{}\label{pf:sign_sigmaB}
\begin{proof}[Proof that fixed point slope for sign activation can only be optimal for $\sigma_b=0$]
We would like to prove the the optimal slope at the fixed point for sign activation can only be achieved when we take $\sigma_{b}$to zero. First, we will use the implicit function theorem to calculate $\frac{d\C^{\ast}}{d\sigma_{b}}$ ($\C$ is the hidden states covariance, as described in appendix \ref{app:hidden_state}), using the fixed point equation:

\[
F(\C^{\ast},\sigma_{b})=\C^{\ast}-\frac{2}{\pi}\arcsin\left(\cu^{\ast}\right)=0
\]

when ${\cu^{\ast}=\frac{\C^{\ast}\sigma_{w}^{2}+\sigma_{b}^{2}}{\sigma_{w}^{2}+\sigma_{b}^{2}},\qu^{\ast}}=\sigma_{w}^{2}+\sigma_{b}^{2}:$

\[
\frac{\partial F}{\partial\C^{\ast}}=1-\chi
\]
When we $\chi$ can be expressed  using \ref{eq:chi_sign}. Also:

\[
\frac{\partial F}{\partial \sigma_{b}}=-\frac{2}{\pi}\frac{1}{\sqrt{1-\left(\cu^{\ast}\right)^{2}}}\frac{\partial \cu^{\ast}}{\partial \sigma_{b}}=
\]

\[
-\frac{2}{\pi}\frac{1}{\sqrt{1-\left(\cu^{\ast}\right)^{2}}}\frac{2\sigma_{b}\sigma_{w}^{2}\left(1-\C^{\ast}\right)}{\left(\sigma_{w}^{2}+\sigma_{b}^{2}\right)^{2}}=-\chi\frac{2\sigma_{b}\left(1-\C^{\ast}\right)}{\qu^{\ast}}
\]

and using the implicit function theorem:

\[
\frac{d \C^{\ast}}{d \sigma_{b}}=-\frac{\frac{\partial F}{\partial \sigma_{b}}}{\frac{\partial F}{\partial \C^{\ast}}}=\frac{\chi}{1-\chi}\frac{2\sigma_{b}\left(1-\C^{\ast}\right)}{\qu^{\ast}}
\]
we can now use it to calculate:

\[
\frac{d\chi}{d\sigma_{b}}=\frac{2\sigma_{w}^{2}}{\pi\left(\sigma_{w}^{2}+\sigma_{b}^{2}\right)\sqrt{1-\left(\cu^{\ast}\right)^{2}}}\left[-\frac{2\sigma_{b}}{\qu^{\ast}}+\frac{\cu^{\ast}}{1-\left(\cu^{\ast}\right)^{2}}\frac{d\cu}{d\sigma_{b}}\right]
\]

while:

\[
\frac{d\cu}{d\sigma_{b}}=\frac{\sigma_{w}^{2}}{\sigma_{w}^{2}+\sigma_{b}^{2}}\frac{dC^{\ast}}{d\sigma_{b}}+\frac{2\sigma_{b}}{\sigma_{w}^{2}+\sigma_{b}^{2}}-\frac{C^{\ast}\sigma_{w}^{2}+\sigma_{b}^{2}}{\left(\sigma_{w}^{2}+\sigma_{b}^{2}\right)}2\sigma_{b}=
\]

\[
\frac{1}{\qu^{\ast}}\left(\sigma_{w}^{2}\frac{dC^{\ast}}{d\sigma_{b}}+2\sigma_{b}\left(1-\cu^{\ast}\right)\right)
\]

so:

\[
\frac{d\chi}{d\sigma_{b}}=\frac{2\sigma_{w}^{2}}{\pi\left(\qu^{\ast}\right)^{2}\sqrt{1-\left(\cu^{\ast}\right)^{2}}}\left[+\frac{\cu^{\ast}}{1-\left(\cu^{\ast}\right)^{2}}\left(\sigma_{w}^{2}\frac{dC^{\ast}}{d\sigma_{b}}+2\sigma_{b}\left(1-\cu^{\ast}\right)\right)-2\sigma_{b}\right]=
\]

\[
\frac{d\chi}{d\sigma_{b}}=\frac{2\sigma_{w}^{2}}{\pi\left(\qu^{\ast}\right)^{2}\sqrt{1-\left(\cu^{\ast}\right)^{2}}}\left[+\frac{\cu^{\ast}}{1-\left(\cu^{\ast}\right)^{2}}\left(\sigma_{w}^{2}\frac{\chi2\sigma_{b}}{1-\chi}\left(1-\cu^{\ast}\right)\right)+2\sigma_{b}\frac{\cu^{\ast}}{1+\left(\cu^{\ast}\right)}-2\sigma_{b}\right]=
\]

\[
\frac{2\sigma_{w}^{2}}{\pi\left(\qu^{\ast}\right)^{2}\left(1+\cu^{\ast}\right)\sqrt{1-\left(\cu^{\ast}\right)^{2}}}\left[\frac{\cu^{\ast}}{1-\cu^{\ast}}\sigma_{w}^{2}\frac{\chi}{1-\chi}\frac{2\sigma_{b}\left(1-\C^{\ast}\right)}{\qu^{\ast}}-2\sigma_{b}\right]=
\]

\[
\frac{4\sigma_{w}^{2}\sigma_{b}}{\pi\left(\qu^{\ast}\right)^{2}\left(1+\cu^{\ast}\right)\sqrt{1-\left(\cu^{\ast}\right)^{2}}}\left[\frac{\cu^{\ast}\qu^{\ast}}{\left(1-\C^{\ast}\right)\sigma_{w}^{2}}\sigma_{w}^{2}\frac{\chi}{1-\chi}\frac{\left(1-\C^{\ast}\right)}{\qu^{\ast}}-1\right]=
\]

\[
\frac{2\sigma_{b}\chi}{\left(\qu^{\ast}\right)\left(1+\cu^{\ast}\right)}\left[\frac{\chi \cu^{\ast}}{1-\chi}-1\right]
\]

we learn that  $sign\left(\frac{d\chi}{d\sigma_{b}}\right)$ depends on $\frac{\chi \cu^{\ast}}{1-\chi}-1$. if for some value of $\sigma_{b},\sigma_{w}$, $\frac{\chi \cu^{\ast}}{1-\chi}>1$, then, $\frac{\chi \cu^{\ast}}{1-\chi}-1$ will remain positive when increasing $\sigma_{b}$, since $\frac{d\chi}{d\sigma_{b}}>0$ and  $\frac{d\cu^{\ast}}{d\sigma_{b}}>0$ results $\frac{d}{d\sigma_{b}}\frac{\chi \cu^{\ast}}{1-\chi}>0$. The optimal (highest) value of $\chi$ for the given value of $\sigma_{w}$ will therefore be achieved in the limit $\sigma_{b}\to\infty$, and we can use the slope equation to calculate it:

\[
\lim_{\sigma_{b}\to\infty}\chi=\lim_{\sigma_{b}\to\infty}\frac{2\sigma_{w}^{2}}{\pi\sqrt{\left(\sigma_{w}^{2}+\sigma_{b}^{2}\right)^{2}-\left(\C^{\ast}\sigma_{w}^{2}+\sigma_{b}^{2}\right)^{2}}}=0
\]
(for this we use the fact that $\C^{\ast}>0$ for $\sigma_{b}>0$)

And this contradicts our assumption that this is the highest value of $\chi$, so $\frac{d\chi}{d\sigma_{b}}$ must be negative for all values of $\sigma_{b},\sigma_{w}$.

\end{proof}
\subsection{}\label{pf:stochastic} 
\begin{proof}[Proof that stochastic rounding results smaller slope at the fixed point]
We have shown that the the covariance mapping function with stochastic rounding is $\M(\C) = f(\frac{C_u}{B})$, when we denote $C_u(\C)= \frac{
\C\sigma_{w}^2+\sigma_{b}^2}{\sigma_{w}^2+\sigma_{b}^2} $, $C_u^{\ast}=C_u(\C^{\ast})$, and $f(x)$ is a convex function for $0\le x \le 1$ and the variable $B\ge1$ is increasing as the variance of the stochastic rounding increase, and $B=1$ gives us the mapping for deterministic function. We will show that $\frac{d\chi^{\ast}}{dB}<0$, when $\chi^{\ast}$ is the fixed point slope. Using the implicit function theorem as we did in proof \ref{pf:sign_sigmaB}, for the function: 
\[
F(\C^{\ast},B)=\C^{\ast}-\M(\C^{\ast})=0
\] for $\frac{\partial F}{\partial \C^{\ast}}$ we get:

\[
\frac{\partial F}{\partial \C^{\ast}}=1-\chi^{\ast}>0 
\]when we used the definition of $\chi^{\ast}$ as  the fixed point slop. For $\frac{\partial F}{\partial B}$, we get

\[
\frac{\partial F}{\partial B}=-f^{\prime}\left(\frac{C_u}{B}\right)\cdot \frac{-C_u}{B^2}>0 
\]
using the implicit function theorem:
\[
\frac{d\C^{\ast}}{dB}=-\frac{\frac{dF}{dB}}{\frac{dF}{d\C^{\ast}}}<0
\]
and since $\frac{d\C_{u}^{\ast}}{d\C^{\ast}}=\frac{\sigma_{w}^{2}}{\sigma_{w}^{2}+\sigma_{b}^{2}}>0 $  this also means that:

\begin{equation}\label{eq:sr1}
\frac{d\cu^{\ast}}{dB}<0
\end{equation}

from eq. \ref{eq:stochastic_rounding_chi} we know that: 
\[
\chi^{\ast}=\frac{2\sigma_{w}^{2}}{\pi \qu^{\ast}\sqrt{\left(1-\left(\frac{C_u^{\ast}}{B}\right)^{2}\right)}}
\]we can immediately see that for $ \overline{C} \equiv  \frac{C_u^{\ast}}{B}$, we get $\frac{d\chi^{\ast}}{d\overline{C}}>0$, and from eq. \ref{eq:sr1} we get that $\frac{d\overline{C}}{dB}=\frac{B\frac{d\cu^{\ast}}{dB}-C_u^{\ast}}{B^{2}}<0$ so the chain rule gives us $\frac{d\chi^{\ast}}{dB}<0$.
\end{proof}

\section{Neural tangent kernel for quantized activations} \label{app:NTK_STE}

We consider the dynamics of training for deep, wide neural networks. We argue that the error at an average test point will not improve during early stages of training if the signal propagation conditions are not satisfied, and thus ensuring signal propagation should have a beneficial effect on generalization error. 

\subsection{NTK setup}
We consider full-batch gradient descent with regression loss in a continuous time setting. Defining a fitting error $\zeta_i = f(x_i) - y_i$ \footnote{This can be generalized to other loss functions \cite{lee2019wide}.}, the loss function is given by 
\[\varphi=\frac{1}{2N_d}\sum_{i=1}^{N_d}\zeta^2_i.\] 
where $N_d$ is the number of data points. The weights evolve in time according to 
\[\frac{\partial\theta_{p}}{\partial t}=-\frac{\partial\varphi}{\partial\theta_{p}}=-\frac{1}{N_{d}}\underset{i=1}{\overset{N_d}{\sum}}\frac{\partial f(x_{i})}{\partial\theta_{p}}\zeta_{i} \]

for all weights $ \theta_p $. The evolution of the network function is then given by 
\[ \frac{\partial f(x_{i})}{\partial t}=\underset{p}{\sum}\frac{\partial f(x_{i})}{\partial\theta_{p}}\frac{\partial\theta_{p}}{\partial t}=-\frac{1}{N_d}\underset{p}{\sum}\frac{\partial f(x_{i})}{\partial\theta_{p}}\frac{\partial f(x_{j})}{\partial\theta_{p}}\zeta_i\equiv-\frac{1}{N_d}\left[\Theta\zeta\right]_{i} \]
where $ p $ indexes all the weights of the neural network and we have defined the Gram matrix $ \Theta\in\mathbb{R}^{N_d\times N_d} $ by 
\begin{equation} \label{eq:Theta}
\Theta(x_{i},x_{j})=\underset{p}{\sum}\frac{\partial f(x_{i})}{\partial\theta_{p}}\frac{\partial f(x_{j})}{\partial\theta_{p}}. 
\end{equation}

This matrix is referred to as the Neural Tangent Kernel (NTK) in \cite{Jacot2018-dv}. When considering this object at the infinite width limit, it is convenient to adopt the following parametrization for a fully connected network $f:\mathbb{R}^{n_{0}}\rightarrow\mathbb{R}^{n_{L+1}}$: 
\begin{equation} \label{eq:NTK_net}
\begin{array}{c}
\phi(\alpha^{(0)}(x))=x\\
\alpha^{(l)}(x)= \frac{\sigma_w}{\sqrt{n_{l-1}}} W^{(l)}\phi(\alpha^{(l-1)}(x))+\sigma_b b^{(l)},\text{   }l=1,...,L\\
f(x) = \alpha^{(L+1)}(x)
\end{array}
\end{equation}
for input $x \in \mathbb{R}^{n_0}$ and weight matrices $W^{(l)} \in
\mathbb{R}^{n_{l} \times n_{l-1}}$. The weights are initialized using $W_{ij}^{(l)}\sim\mathcal{N}(0,1),b_{i}^{(l)}\sim\mathcal{N}(0,1)$. The output of this NTK network is identical to that of a standard network, yet the gradients are rescaled such that $\Theta$ remains finite when taking the infinite width limit. For an appropriately chosen learning rate the dynamics of learning in the NTK network can be made identical to those of a standard network \cite{lee2019wide}. 

In \cite{Jacot2018-dv}, under some technical conditions, $\Theta$ was shown to be essentially constant during training at the sequential limit $\underset{n_{L-1}\rightarrow\infty}{\lim}...\underset{n_{2}\rightarrow\infty}{\lim}\underset{n_{1}\rightarrow\infty}{\lim}$. 
At this limit, adapting Theorem 1 of \cite{Jacot2018-dv} to allowing arbitrary variances for the weights and biases, one obtains the following asymptotic form of $\Theta$ at the sequential infinite width limit:

\begin{equation} \label{eq:asympNTK}
\overline{\Theta}(x,x')=\underset{l=1}{\overset{L+1}{\sum}}\underset{j=l+1}{\overset{L+1}{\Pi}}\Sigma'^{(j)}(x,x')\Sigma^{(l)}(x,x')
 \end{equation}

where 
\begin{equation} \label{eq:sigma_recursion}
\begin{array}{c}
\Sigma^{(1)}(x,x')=\frac{\sigma_{w}^{2}}{n_{0}}x^{T}x'+\sigma_{b}^{2}\\
\Sigma^{(l)}(x,x')=\sigma_{w}^{2}\underset{(u_{1},u_{2})\sim\mathcal{N}(\mean,\left.\Sigma^{(l-1)}\right|_{x,x'})}{\mathbb{E}}\phi(u_{1})\phi(u_{2})+\sigma_{b}^{2}\\
\left.\Sigma^{(l)}\right|_{x,x'}=\left(\begin{array}{cc}
\Sigma^{(l)}(x,x) & \Sigma^{(l)}(x,x')\\
\Sigma^{(l)}(x,x') & \Sigma^{(l)}(x',x')
\end{array}\right).
\end{array}
\end{equation}
are the covariances of the pre-activations and 
\[
\Sigma'^{(l)}(x,x')=\sigma_w^2\underset{(u_{1},u_{2})\sim\mathcal{N}(0,\Sigma^{(l)}|_{x,x'})}{\mathbb{E}}\phi'(u_{1})\phi'(u_{2}).
\]

In \cite{arora2019exact} it was also shown that for finite width ReLU networks $\mathbb{E}\Theta=\overline{\Theta}$ and concentrates about its expectation with the fluctuations scaling inversely with layer width. It follows that when taking the layer widths to infinity in arbitrary order for ReLU networks one recovers $\overline{\Theta}$, and empirically $\Theta$ concentrates well around $\overline{\Theta}$ for other choices of nonlinearities \cite{lee2019wide}. We note that even when using the standard scaling \ref{eq:net}, for very wide networks where the effect of individual weights will be negligible, even though the asymptotic for of the NTK at infinite width may be different, it will still change little in the initial phases of training. 

\subsection{Continuous activations}

We write the NTK for a feed-forward network in the NTK parametrization \ref{eq:NTK_net}, omitting the dependence on $x$ of $f,\alpha^{(l)}$ to lighten notation

\[
\frac{\partial f}{\partial W_{ij}^{(l)}}=\frac{\sigma_{w}}{\sqrt{n_{l-1}}}\frac{\partial f}{\partial\alpha_{i}^{(l)}}\phi(\alpha_{j}^{(l-1})=\frac{\sigma_{w}}{\sqrt{n_{l-1}}}\frac{\partial f}{\partial\phi(\alpha_{i}^{(l)})}\frac{\partial\phi(\alpha_{i}^{(l)})}{\partial\alpha_{i}^{(l)}}\phi(\alpha_{j}^{(l-1)})
\]

\[
=\underset{k=1}{\overset{n_{l+1}}{\sum}}\frac{\sigma_{w}}{\sqrt{n_{l-1}}}\frac{\partial f}{\partial\alpha_{k}^{(l+1)}}\frac{\partial\alpha_{k}^{(l+1)}}{\partial\phi(\alpha_{i}^{(l)})}\frac{\partial\phi(\alpha_{i}^{(l)})}{\partial\alpha_{i}^{(l)}}\phi(\alpha_{j}^{(l-1)})
\]

\[
=\frac{\sigma_{w}}{\sqrt{n_{l-1}}}\underset{k=1}{\overset{n_{l+1}}{\sum}}\frac{\partial f}{\partial\alpha_{k}^{(l+1)}}\frac{\sigma_{w}}{\sqrt{n_{l}}}W_{ki}^{(l+1)}\frac{\partial\phi(\alpha_{i}^{(l)})}{\partial\alpha_{i}^{(l)}}\phi(\alpha_{j}^{(l-1)})
\]

restoring the $x$ dependence and defining a diagonal matrix $D^{(l)}(x)=\text{diag}(\frac{\sigma_{w}}{\sqrt{n_{l}}}\frac{\partial\phi(\alpha_{i}^{(l)}(x))}{\partial\alpha_{i}^{(l)}(x)})$
we have

\[
\frac{\partial f(x)}{\partial W_{ij}^{(l)}}=\frac{\sigma_{w}}{\sqrt{n_{l-1}}}\left[\left(\frac{\partial f(x)}{\partial\alpha^{(l+1)}(x)}\right)^{T}W^{(l+1)}D^{(l)}(x)\right]_{i}\phi(\alpha_{j}^{(l-1)}(x))
\]

we can repeat the process for the elements of $\frac{\partial f(x)}{\partial\alpha^{(l+1)}}$
finally obtaining

\[
\frac{\partial f(x)}{\partial W_{ij}^{(l)}}=\frac{\sigma_{w}}{\sqrt{n_{l-1}}}\left[W^{(L+1)}D^{(L)}(x)W^{(L)}...W^{(l+1)}D^{(l)}(x)\right]_{i}\phi(\alpha_{j}^{(l-1)}(x))\equiv\frac{\sigma_{w}}{\sqrt{n_{l-1}}}\widehat{\beta}_{i}^{(l)}(x)\widehat{\alpha}_{j}^{(l)}(x)
\]

and we similarly obtain

\[
\frac{\partial f(x)}{\partial b_{i}^{(l)}}=\sigma_{b}\widehat{\beta}_{i}^{(l)}(x).
\]

The NTK thus takes the form

\[
\begin{array}{c}
\Theta(x,x')=\underset{l,i_{l}}{\sum}\frac{\partial f(x)}{\partial W_{i_{l}i_{l-1}}^{(l)}}\frac{\partial f(x')}{\partial W_{i_{l}i_{l-1}}^{(l)}}+\underset{l,i_{l}}{\sum}\frac{\partial f(x)}{\partial b_{i_{l}}^{(l)}}\frac{\partial f(x')}{\partial b_{i_{l}}^{(l)}}\\
=\sigma_{w}^{2}\underset{l=1}{\overset{L+1}{\sum}}\frac{1}{n_{l-1}}\left\langle \widehat{\beta}^{(l)}(x),\widehat{\beta}^{(l)}(x')\right\rangle \left\langle \widehat{\alpha}^{(l)}(x),\widehat{\alpha}^{(l)}(x')\right\rangle +\sigma_{b}^{2}\left\langle \widehat{\beta}^{(l)}(x),\widehat{\beta}^{(l)}(x')\right\rangle 
\end{array}
\]

According to \cite{Jacot2018-dv, arora2019exact}, this tends to \ref{eq:asympNTK} at the infinite width limit.

\subsection{Quantized activations}

We now consider dynamics in function space with quantized activations.
Analyzing a single network in this fashion is hopeless since the network
function is not a continuous function of the weights and so the dynamics
will not be continuous. We can instead consider a stochastic rounding
scheme where the post-activations are defined according to

\[
\widehat{\alpha}_{i}^{(l)}=\text{sign}(\alpha_{i}^{(l)}-z_{i}^{(l)})
\]

and $z_{i}^{(l)}\sim\text{Unif}([-1,1])$. The connection between this setup and the straight-through estimator (STE) was first observed in \cite{hubara2017quantized}. We denote the set of all
$z_{i}^{(l)}$ by $\{z\}$. Considering the dynamics of an ensemble
average such that the loss function is given by 

\[
\varphi=\frac{1}{2N_{d}}\underset{i=1}{\overset{N_{d}}{\sum}}(\mathbb{E}_{\{z\}}f(x_{i})-y_{i})^{2}=\frac{1}{2N_{d}}\underset{i=1}{\overset{N_{d}}{\sum}}\zeta_{i}^{2}
\]

We have 

\[
\frac{\partial\mathbb{E}_{z_{i}^{(l)}}f}{\partial\alpha_{i}^{(l)}}=\frac{\partial}{\partial\alpha_{i}^{(l)}}\left(p(\alpha_{i}^{(l)}-z_{i}^{(l)}>0|\alpha_{i}^{(l)})\left.f\right|_{\widehat{\alpha}_{i}^{(l)}=1}+(1-p(\alpha_{i}^{(l)}-z_{i}^{(l)}>0|\alpha_{i}^{(l)}))\left.f\right|_{\widehat{\alpha}_{i}^{(l)}=-1}\right)
\]

\[
=\frac{\partial p(\alpha_{i}^{(l)}-z_{i}^{(l)}>0|\alpha_{i}^{(l)})}{\partial\alpha_{i}^{(l)}}\left(\left.f\right|_{\widehat{\alpha}_{i}^{(l)}=1}-\left.f\right|_{\widehat{\alpha}_{i}^{(l)}=-1}\right)=\frac{1}{2}\mathbbm{1}_{\left|\alpha_{i}^{(l)}\right|\leq1}\left(\left.f\right|_{\widehat{\alpha}_{i}^{(l)}=1}-\left.f\right|_{\widehat{\alpha}_{i}^{(l)}=-1}\right).
\]

If we now consider any smooth extension of $\gamma$ of $\widehat{\alpha}_{i}^{(l)}$
such that $[-1,1]\subseteq\text{Im}(\gamma)$ and denote by $\tilde{f}$
a copy of $f$ where we replace $\widehat{\alpha}_{i}^{(l)}$ by $\gamma$.
We then have

\[
\left.f\right|_{\widehat{\alpha}_{i}^{(l)}=1}-\left.f\right|_{\widehat{\alpha}_{i}^{(l)}=-1}=\left.\tilde{f}\right|_{\gamma=1}-\left.\tilde{f}\right|_{\gamma=-1}=\left.\frac{\partial\tilde{f}}{\partial\gamma}\right|_{\gamma=0}+\mathcal{O}\left(\left.\frac{\partial^{3}\tilde{f}}{\partial\gamma^{3}}\right|_{\gamma=0}\right)=\left.\frac{\partial\tilde{f}}{\partial\gamma}\right|_{\gamma=0}+\mathcal{O}\left(\left.\frac{\partial^{3}\tilde{f}}{\partial\gamma^{3}}\right|_{\gamma=0}\right)
\]

\[
=\left.\frac{\partial\tilde{f}}{\partial\gamma}\right|_{\gamma=\pm1}+\mathcal{O}\left(\left.\frac{\partial^{2}\tilde{f}}{\partial\gamma^{2}}\right|_{\gamma=0}\right)=\left.\frac{\partial f}{\partial\widehat{\alpha}_{i}^{(l)}}\right|_{\widehat{\alpha}_{i}^{(l)}=\pm1}+\mathcal{O}\left(\left.\frac{\partial^{2}\tilde{f}}{\partial\gamma^{2}}\right|_{\gamma=0}\right)=\frac{\partial\mathbb{E}_{z_{i}^{(l)}}f}{\partial\widehat{\alpha}_{i}^{(l)}}+\mathcal{O}\left(\left.\frac{\partial^{2}\tilde{f}}{\partial\gamma^{2}}\right|_{\gamma=0}\right).
\]

If we neglect these higher order terms (which should be small since
the influence of a single neuron on the output is generally small,
and should vanish at the infinite width limit) and note that the above
approximation holds if we condition on $\{z\}\backslash\{z_{i}^{(l)}\}$,
we obtain

\begin{equation}
\frac{\partial\mathbb{E}_{\{z\}}f}{\partial\alpha_{i}^{(l)}}\approx\mathbbm{1}_{\left|\alpha_{i}^{(l)}\right|\leq1}\frac{\partial\mathbb{E}_{\{z\}}f}{\partial\widehat{\alpha}_{i}^{(l)}}.\label{eq:STEf2}
\end{equation}




We can now repeat the calculation of the NTK using eq. \ref{eq:STEf2},
obtaining

\[
\frac{\partial\mathbb{E}_{\{z\}}f}{\partial W_{ij}^{(l)}}=\frac{\sigma_{w}}{\sqrt{n_{l-1}}}\frac{\partial\mathbb{E}_{\{z\}}f}{\partial\alpha_{i}^{(l)}}\phi(\alpha_{j}^{(l-1})\approx\frac{\sigma_{w}}{\sqrt{n_{l-1}}}\frac{\partial f}{\partial\phi(\alpha_{i}^{(l)})}\mathbbm{1}_{\left\vert \alpha_{i}^{(l)}\right\vert \leq1}\phi(\alpha_{j}^{(l-1)})
\]

\[
=\frac{\sigma_{w}}{\sqrt{n_{l-1}}}\underset{k=1}{\overset{n_{l+1}}{\sum}}\frac{\partial f}{\partial\alpha_{k}^{(l+1)}}\frac{\sigma_{w}}{\sqrt{n_{l}}}W_{ki}^{(l+1)}\mathbbm{1}_{\left\vert \alpha_{i}^{(l)}\right\vert \leq1}\phi(\alpha_{j}^{(l-1)}).
\]

Defining $D_{\text{STE}}^{(l)}(x)=\text{diag}(\frac{\sigma_{w}}{\sqrt{n_{l}}}\mathbbm{1}_{\left\vert \alpha_{i}^{(l)}\right\vert \leq1})$
and applying eq. \ref{eq:STEf2} repeatedly at each
layer up until $L+1$ gives

\[
\frac{\partial f(x)}{\partial W_{ij}^{(l)}}\approx\frac{\sigma_{w}}{\sqrt{n_{l-1}}}\left[W^{(L+1)}D_{\text{STE}}^{(L)}(x)W^{(L)}...W^{(l+1)}D_{\text{STE}}^{(l)}(x)\right]_{i}\phi(\alpha_{j}^{(l-1)}(x))\equiv\frac{\sigma_{w}}{\sqrt{n_{l-1}}}\widehat{\beta}_{\text{STE},i}^{(l)}(x)\widehat{\alpha}_{j}^{(l)}(x)
\]

\[
\frac{\partial f(x)}{\partial b_{i}^{(l)}}\approx\sigma_{b}\widehat{\beta}_{\text{STE},i}^{(l)}(x).
\]

and thus applying \ref{eq:Theta} gives
\[
\frac{\partial\mathbb{E}_{\{z\}}f(x)}{\partial t}\approx-\frac{1}{N_{d}}\underset{i}{\sum}\Theta_{\text{STE}}(x,x_{i})\zeta_{i}
\] 
where 

\[
\Theta_{\text{STE}}(x,x')=\sigma_{w}^{2}\underset{l=1}{\overset{L+1}{\sum}}\frac{1}{n_{l-1}}\left\langle \widehat{\beta}_{\text{STE}}^{(l)}(x),\widehat{\beta}_{\text{STE}}^{(l)}(x')\right\rangle \left\langle \widehat{\alpha}^{(l)}(x),\widehat{\alpha}^{(l)}(x')\right\rangle +\sigma_{b}^{2}\left\langle \widehat{\beta}_{\text{STE}}^{(l)}(x),\widehat{\beta}_{\text{STE}}^{(l)}(x')\right\rangle .
\]

A trivial generalization of the calculation of the asymptotic form
of $\Theta(x,x')$ at the infinite width limit in \cite{Jacot2018-dv}
shows that at this limit $\Theta_{\text{STE}}(x,x')$ tends to 

\begin{equation} \label{eq:NTK_STE2}
\overline{\Theta}_{\text{STE}}(x,x')=\underset{l=1}{\overset{L+1}{\sum}}\underset{j=l+1}{\overset{L+1}{\Pi}}\Sigma_{\text{STE}}^{\prime(j)}(x,x')\Sigma^{(l)}(x,x')
\end{equation}

where $\Sigma^{(l)}(x,x')$ is defined in eq. \ref{eq:sigma_recursion}, 

\[
\Sigma_{\text{STE}}^{\prime(l)}(x,x')=\sigma_{w}^{2}\underset{(u_{1},u_{2})\sim\mathcal{N}(0,\Sigma^{(l)}|_{x,x'})}{\mathbb{E}}\phi'_{\text{STE}}(u_{1})\phi'_{\text{STE}}(u_{2}).
\]

and we define the hard-tanh function, 
\begin{equation} \label{eq:phi_ste}
\phi_{\text{STE}}(x)=\begin{cases}
1 & 1\leq x\\
x & -1<x<1\\
-1 & x\leq-1
\end{cases}.
\end{equation} 

for which $\phi'_{\text{STE}}(y)=\mathbbm{1}_{\left\vert y\right\vert \leq1}$. The form of $\overline{\Theta}_{\text{STE}}(x,x')$ is thus obtained by replacing the $\text{sign}$ activation with eq. \ref{eq:phi_ste} but only during the backwards pass (and not during the forward pass), in line with the motivation of the STE in \cite{hubara2017quantized}. We note that the dynamics of this ensemble average correspond to those of the update scheme in eq. \ref{eq:ste_rho} with $\rho=1$. Other choices will introduce a dependence on $\rho$ in $\overline{\Theta}_{\text{STE}}(x,x')$ but will not change the fact that it can be expressed as a function of the covariances of the inputs in eq. \ref{eq:sigma_recursion}.

\subsection{Asymptotic NTK and generalization}

We now consider a very deep network such that the covariance map
approaches its fixed point

\[
\Sigma^{\ast}|_{x,x'}=Q^{\ast}\left(\begin{array}{cc}
1 & C^{\ast}\\
C^{\ast} & 1
\end{array}\right).
\]

$\Theta^{(l)}(x,x')$ for very deep networks will approach a matrix
of the form

\begin{equation} \label{eq:thetaast}
    \underset{L\rightarrow\infty}{\lim}\frac{1}{L+1}\Theta^{(L+1)}(x,x')=\Theta^{\ast}(x,x')=\alpha\delta(x,x')+\beta(1-\delta(x,x'))
\end{equation}

for some constants $\alpha,\beta$ and $\delta(x,x')$ is a Kronecker delta. 

To understand the generalization properties of such a network, we can consider the evolution of the error at some test point $z$ that is not part of the training set. It will be given by

\[
\frac{\partial\zeta(z)}{\partial t}=-\frac{L}{N_d}\underset{i=1}{\overset{N_d}{\sum}}\Theta^{\ast}(z,x_{i})\zeta(x_{i})=-\frac{\beta L}{N_d}\underset{i=1}{\overset{N_d}{\sum}}\zeta(x_{i})
\]

which at initialization is independent of our choice of $z$. Since it is also independent of the true label of $z$ this will mean that the generalization error will typically not decrease \footnote{Aside from some trivial cases such as learning a constant function.}.

We conclude that for networks deep enough that the covariance map converges, in the initial phase of training before $\Theta$ changes considerably there
will be no improvement in the generalization error at a typical test point. Conversely, this suggests that satisfying the signal propagation condition $\chi=1$ will facilitate generalization. Presumably, if convergence to the fixed point is slow, instead of the form in eq. \ref{eq:thetaast}, $\Theta$ will exhibit some finite scale of decay from its value on the diagonal as a function of the distance between the inputs. This will enable points in the training set near $z$ that share the same label, and where the error has the same sign as $\zeta(z)$, to influence $\frac{\partial\zeta(z)}{\partial t}$ thus reducing the error at $z$. This argument is independent of the value of $\beta$, and provides further motivation for the study of critical initialization schemes that exhibit slow convergence to the fixed point \cite{schoenholz2016deep}. Such initialization schemes have also been motivated in the past by concerns of trainability (i.e. ensuring stable signal propagation from the inputs to the hidden states of a deep network, and preventing vanishing/exploding gradients). This phenomenon could perhaps be the basis for the improvements in generalization observed when using critical initialization schemes, which have hitherto been unexplained. 

To explore whether rapid convergence of the covariance map is correlated with a lack of structure in the NTK, we define a coarse metric for non-trivial structure in the off-diagonal terms of the NTK that should facilitate generalization. Given a row of the NTK $\Theta_{i}=\Theta(x_{i},\cdot)\in\mathbb{R}^{N_{d}}$, we define our signal to be the sum of off-diagonal terms in this row that share a label with $x_i$:
\[
S_{i}=\underset{\begin{array}{c}
j\neq i\\
y_{j}=y_{i}
\end{array}}{\sum}\Theta(x_{i},x_{j})
\]
while the corresponding noise measure is simply 
\[
N_{i}=\left\Vert \Theta_{i}\right\Vert _{1}-S_{i}.
\]
The idea behind this metric is that the fitting error at some $\zeta(x_j)$ with $y_i=y_j$ will be closer on average to $\zeta(x_i)$ than $\zeta(x_j)$ such that $y_j \neq y_i$. If $x_i$ is not part of the training set, $\frac{\partial\zeta(x_{i})}{\partial t}=-\frac{1}{N_{d}}\underset{j=1,j\neq i}{\overset{N_{d}}{\sum}}\Theta(x_{i},x_{j})\zeta(x_{j})$. Thus if the elements of $\Theta$ with the same label as $x_i$ are large and positive there will be a large magnitude contribution to $\frac{\partial\zeta(x_{i})}{\partial t}$ that has the opposite sign as $\zeta(x_{i})$ and thus $\zeta(x_{i})$ will decrease quickly over time. The noise in this case is the size of the other entries. Generalization error should thus improve if the signal-to-noise ratio
\begin{equation} \label{eq:SNR}
\text{SNR}=\frac{1}{N_{d}}\underset{i}{\sum}\frac{S_{i}}{N_{i}}
\end{equation}
is large and 
\begin{equation} \label{eq:signal}
S=\frac{1}{N_{d}}\underset{i}{\sum}S_{i}
\end{equation}
is large as well. The latter condition is important since in the case of networks with small weight variance $\text{SNR}$ may be large but $S$ itself vanishes and so will any change in the generalization error. For both networks with $\tanh$ and quantized activatsion we observe that the regime where $\text{SNR}$ and $S$ are both large corresponds to the one where the signal propagation time scale in eq. \ref{eq:xi} is large as well, as shown in Figure \ref{fig:NTK}. 

In this experiment, the network architecture is given by \ref{eq:net} with $L=30$ and all hidden layers of width $300$. Note that for a finite width network with constant layer widths the difference between the NTK and that of a network given by \ref{eq:NTK_net} will be a constant factor. The quantities in the plot are averaged over $450$ MNIST data points for the $\tanh$ network and $200$ images for the quantized network, and $5$ different initializations. The NTK for the network with quantized activations is calculated by replacing the terms in the backwards pass with the STE equivalents, as in \ref{eq:NTK_STE2}. We note that a similar degradation in the generalization ability when the signal propagation conditions are not satisfied has been described previously in the case of wide networks where only the last layer is trained \cite{lee2017deep}. 

\begin{figure}
    \centering
    \begin{subfigure}[]{0.5\textwidth}
        \centering
        \includegraphics[height=2.0in,width=2.7in]{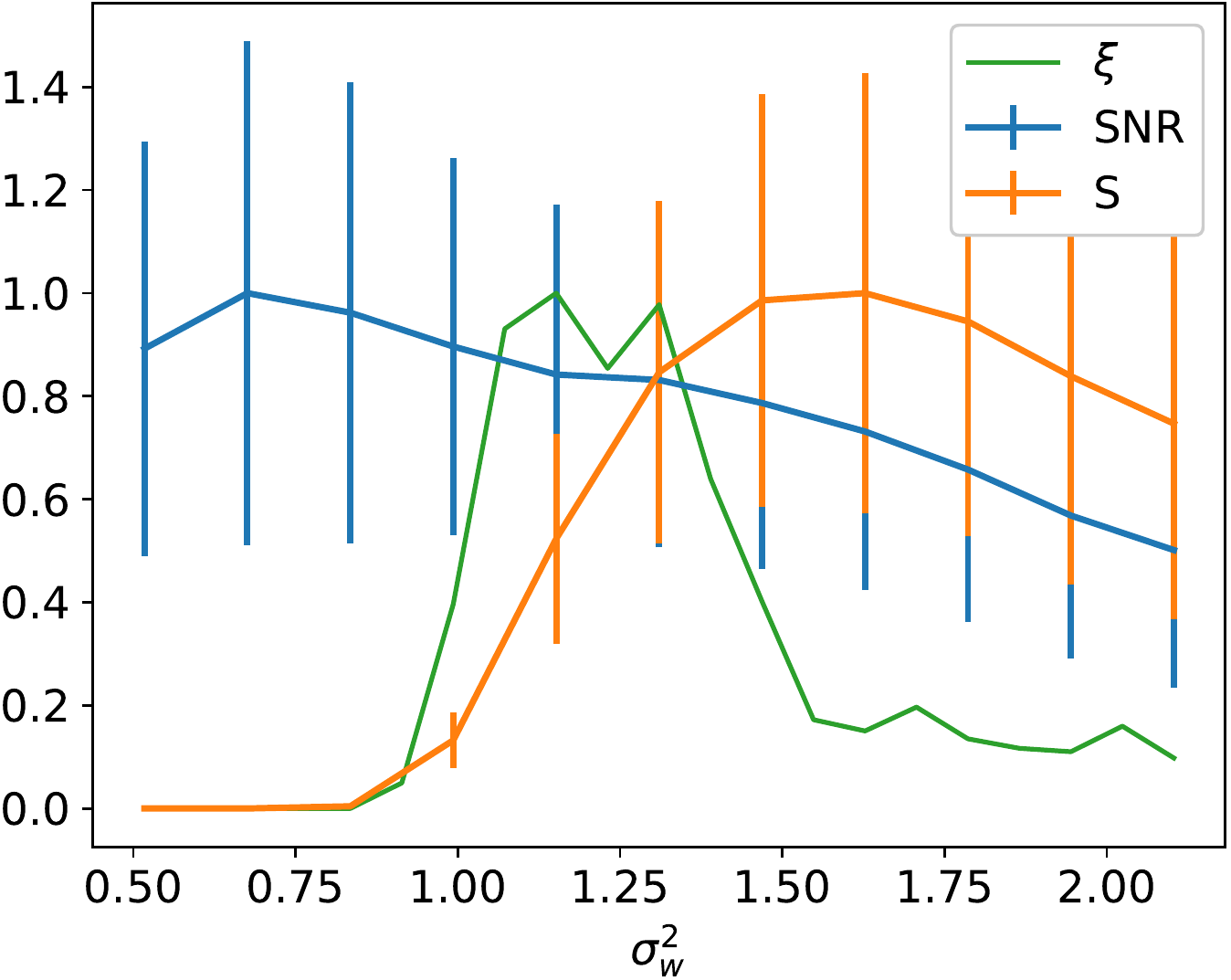}
    \end{subfigure}%
    ~
    \begin{subfigure}[]{0.5\textwidth}
        \centering
        \includegraphics[height=2.0in,width=2.7in]{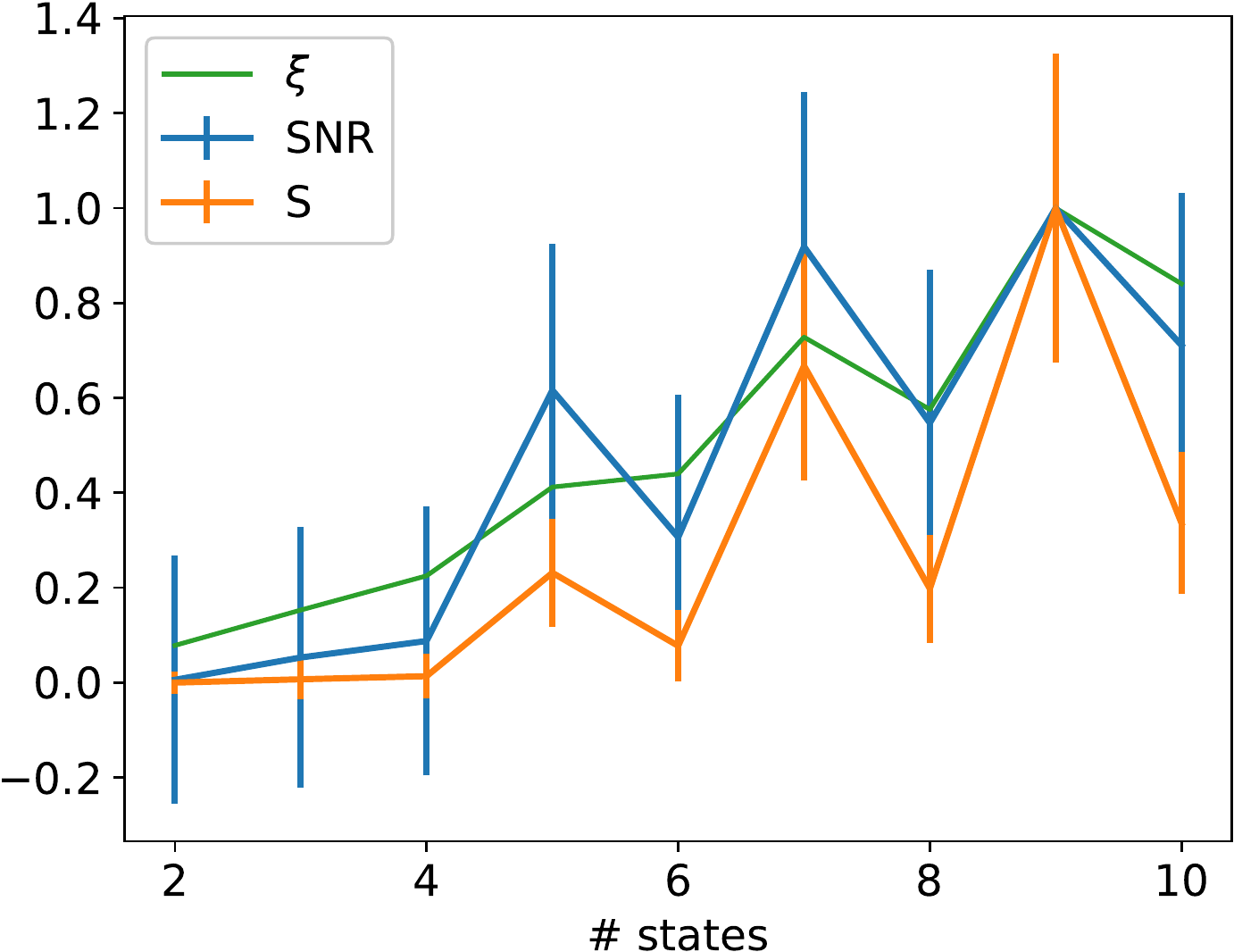}
    \end{subfigure}%
    \caption{Off-diagonal structure in the NTK is correlated with signal propagation. The signal (eq. \ref{eq:signal}) that is expected to improve generalization, the signal-to-noise ratio (eq. \ref{eq:SNR}) and the signal propagation time scale (eq. \ref{eq:xi}) are plotted for different architectures. All quantities are normalized by the maximal value in the range of parameters shown. \textit{Left:} For networks with $\tanh$ activations with different weight variance $\sigma_w^2$, the time scale $\xi$ behaves non-monotonically. The SNR decreases monotonically, while the signal $S$ spikes around the same value of $\sigma_w^2$ where signal propagation is best achieved. Thus the point that maximizes both SNR and $S$ is close to the one where signal propagation is also maximal. \textit{Right:} For networks with quantized activations, as the quantization level increases so does the SNR and the signal itself. We also observe the same non-monotonic behaviour based on the parity of the number of states in all three.}
    \label{fig:NTK}
\end{figure}

\subsection{Change of asymptotic NTK during training}

We have argued above that based on the structure of the NTK at initialization for networks where the covariance map has converged, we expect no initial improvement in the generalization error. At later times, if we assume that the Taylor expansion of $\Theta^\ast_t$ exists  
\[\Theta_{t}^{\ast}(z,x')=\underset{i=0}{\overset{\infty}{\sum}}\frac{t^{k}}{k!}\frac{\partial^{k}\Theta_{0}^{\ast}(z,x')}{\partial t^{k}}\]
we can see directly that $\Theta_{t}^{\ast}(z,x')$ will be independent of $z$ as well, since the summands in the RHS are. This argument thus extends to later times asymptotically at the infinite width limit, or for finite width until such time as deviations from the asymptotic form of the NTK influence the dynamics.

\end{document}